\def\blfootnote{\gdef\@thefnmark{}\@footnotetext}
\begin{document}

\title{Robust Estimation under the Wasserstein Distance}
\author{Sloan Nietert\thanks{Cornell University}, Rachel Cummings\thanks{Columbia University}, and Ziv Goldfeld\footnotemark[1]}

\blfootnote{A preliminary version of this work appeared at AISTATS 2022 under the title ``Outlier-Robust Optimal Transport: Duality, Structure, and Statistical Analysis.'' The current text presents new results for robust distribution estimation and a more general duality theorem, among other refinements.}

\maketitle

\maketitle

\begin{abstract}%
We study the problem of robust distribution estimation under the Wasserstein distance, a popular discrepancy measure between probability distributions rooted in optimal transport (OT) theory. Given $n$ samples from an unknown distribution $\mu$, of which $\eps n$ are adversarially corrupted, we seek an estimate for $\mu$ with minimal Wasserstein error. To address this task, we draw upon two frameworks from OT and robust statistics: partial OT (POT) and minimum distance estimation (MDE). We prove new structural properties for POT and use them to show that MDE under a partial Wasserstein distance achieves the minimax-optimal robust estimation risk in many settings. Along the way, we derive a novel dual form for POT that adds a sup-norm penalty to the classic Kantorovich dual for standard OT. Since the popular Wasserstein generative adversarial network (WGAN) framework implements Wasserstein MDE via Kantorovich duality, our penalized dual enables large-scale generative modeling with contaminated datasets via an elementary modification to WGAN. Numerical experiments demonstrating the efficacy of our approach in mitigating the impact of adversarial corruptions are provided.

\end{abstract}
\vspace{-1mm}
\section{Introduction}

Given i.i.d.\ samples $X_1, \dots, X_n$ from an unknown probability measure $\mu$, the empirical measure $\hat{\mu}_n = \frac{1}{n}\sum_{i=1}^n \delta_{X_i}$ is a simple and ubiquitous estimator for $\mu$. However, real-world data is often subject to measurement errors, model misspecification, and even malicious outliers. In this work, we seek to recover $\mu$ from corrupted samples $\tilde{X}_1, \dots, \tilde{X}_n$ under the strong $\eps$-contamination model, namely, where an $\eps$-fraction of the clean samples may be arbitrarily modified. This model is equivalently described by the total variation (TV) constraint $\|\tilde{\mu}_n - \hat{mu}_n\|_\tv \leq \eps$, where $\tilde{\mu}_n = \frac{1}{n}\sum_{i=1}^n \delta_{\tilde{X}_i}$ is empirical measure of the corrupted data. %
Working within a metric space $(\cX,\mathsf{d})$, we measure the quality of an estimate $\nu$ for $\mu$ using OT. Specifically, we use the $p$-Wasserstein distance $\Wp(\mu,\nu)$, defined via
\vspace{-2mm}
\begin{equation}
\label{eq:Wp-intro}
\Wp(\mu,\nu)\coloneqq\left(\inf_{\pi\in\Pi(\mu,\nu)}\int_{\cX\times\cX}\mathsf{d}(x,y)^p\dd\pi(x,y)\right)^\frac{1}{p} = \inf_{\pi \in \Pi(\mu,\nu)} \|\mathsf{d}\|_{L^p(\pi)},
\vspace{-1mm}
\end{equation}
where $\Pi(\mu,\nu)$ is the set of couplings for $\mu$ and $\nu$. This is a natural choice for non-parametric distribution estimation, since $\Wp$ metrizes the weak topology and tolerates support mismatch \citep{boissard2015wasserstein,fournier2015rate,singh2018minimax,lei2020convergence}. Unfortunately, due to the strict marginal constraints in \eqref{eq:Wp-intro}, OT values can be arbitrarily perturbed via small TV perturbations of the input measures; in particular, $\Wp(\mu,(1-\eps)\mu + \eps \delta_{x}) \to \infty$ as $\mathsf{d}(x,x_0) \to \infty$, for any $x_0 \in \cX$.

\subsection{Our Contributions}
To address the sensitivity of $\Wp$ to outliers, we consider the POT problem
\begin{equation}
\label{eq:POT-intro}
    \RWp(\mu,\nu) \!\defeq\! \inf_{\pi \in \Pi_\eps(\mu,\nu)} \!\|\mathsf{d}\|_{L^p(\pi)}, \:\: \Pi_\eps(\mu,\nu)\!\defeq\!\left\{\pi \!\in\! \cM_\plus(\cX^2)\,\middle|\, \begin{aligned}&\pi(\cdot\!\times\!\cX)\leq \mu,\pi(\cX\!\times\! \cdot)\leq \nu\\[-0.25em]&\pi(\cX\!\times\! \cX)=1-\eps\end{aligned}\right\},
\end{equation}
where the inequalities are understood setwise.\footnote{Setwise inequalities between measures reduce to pointwise inequalities between densities when they exist.} 
Equipped with this robust Wasserstein distance,\footnote{Despite calling $\RWp$ a distance, note that it is no longer a proper metric, e.g., $\RWp(\mu,\nu)=0$ for $\mu\neq\nu$ so long as $\|\mu - \nu\|_\tv \leq \eps$.} we employ minimum distance estimation (MDE) under $\RWp$, seeking an estimator $\mathsf{T}(\tilde{\mu}_n)$ such that 
\begin{equation*}
    \mathsf{T}(\tilde{\mu}_n) \in \argmin_{\nu \in \cG} \RWp(\nu,\tilde{\mu}_n),
\end{equation*}
where $\cG$ is a distribution family known to contain $\mu$ (generally encoding tail bounds). %
We prove that this procedure achieves the minimax optimal estimation error under $\Wp$ in the population limit, i.e., as $n \to \infty$. In the finite-sample setting, our bounds are shown to be optimal up to the sub-optimality of $\hat{\mu}_n$ for distribution learning without corruptions, and this gap is negligible for the main distribution families considered.\footnote{In an independent manuscript (submitted after the conference version of this work was published and concurrently with the extended version), \cite{ma2023inference} explore MDE under the robust Wasserstein distance of \cite{mukherjee2021}. Their approach leads to a distinct dual, with a sup-norm constraint compared to our sup-norm penalty, and they show convergence of their finite-sample MDE towards MDE w.r.t.\ the population distribution. However, they do not provide robust estimation guarantees for their estimator under the Wasserstein distance, which is the focus of this work.}

Our analysis hinges on several new results for POT, which may be of independent interest. This includes an approximate triangle inequality and the following equivalent primal formulation,
\begin{equation}
\label{eq:TV-robustified-Wp}
    \RWp(\mu,\nu) = \inf_{\substack{\mu':\|\mu' - \mu\|_\tv \leq \eps}} \Wp(\mu',\nu),
\end{equation}
directly connecting $\RWp$ to the robust estimation task at hand. Moreover, this reformulation enables deriving a penalized dual problem which, when $p=1$, simplifies to (see \cref{thm:RWp-dual} for the general case):
\begin{equation*}
    \RWone(\mu,\nu) = \sup_{\varphi: \|\varphi\|_{\Lip} \leq 1} \E_\mu[\varphi] - \E_\nu[\varphi] - 2\eps\|\varphi\|_\infty.
\end{equation*}
When $\eps = 0$, this recovers the classic Kantorovich $\Wone$ dual that underlies the popular WGAN approach to generative modeling \citep{arjovsky_wgan_2017}. In particular, WGAN is a min-max implementation of MDE under $\Wone$ that features the Kantorovich dual objective. By adding our simple $\ell_\infty$ penalty, we arrive at a practical implementation of MDE under $\RWone$ which enables robust generative modeling. We provide numerical experiments on image data highlighting the potency of the proposed approach for mitigating the effect of outliers in generative modeling from contaminated data. Comparisons to the robust WGANs from \cite{balaji2020} and \cite{staerman21} are also provided, demonstrating the superiority of our method. 

The rest of the paper is organized as follows. \cref{sec:prelims} opens with a discussion of preliminaries, followed by a summary of our structural results for POT in \cref{sec:RWp}. \cref{sec:robustness} provides robust estimation guarantees for MDE under $\RWp$, as well as for $\RWp$ itself as an estimate of $\Wp$. \cref{sec:experiments} applies our duality theory to build a robust WGAN and provides empirical results stemming from this approach, as well as comparisons to competing methods. %

\subsection{Related Work}
\label{subsec:related_work}

\cite{balaji2020} define a robust OT problem mirroring \eqref{eq:TV-robustified-Wp} but with general $f$-divergences instead TV, primarily focusing on the $\chi^2$ divergence. They arrive at a more complex dual form (derived by invoking standard Kantorovich duality on the Wasserstein distance inside the infimum) that requires optimization over a significantly larger domain. 
In \cite{mukherjee2021}, robustness w.r.t.\ the TV distance is integrated via a regularization term in the objective. This leads to a simple modified primal problem but the corresponding dual requires optimizing over two potentials, even when $p=1$. Additionally, \cite{le2021} and \cite{nath2020} consider robustness via Kullback-Leibler (KL) divergence and integral probability metric regularization terms, respectively. The former focuses on Sinkhorn-based algorithms, while the latter introduces a dual form that is distinct from ours and less compatible with existing duality-based OT computational methods. \cite{staerman21} present a median-of-means adaptation of the dual Kantorovich problem to tackle robust estimation when the contamination fraction $\eps$ vanishes as $n \to \infty$.

The robust OT literature is intimately related to unbalanced OT theory, which addresses transport problems between measures of different mass \citep{piccoli2014, chizat2018, liero2018, schmitzer2019, hanin1992}. These formulations are reminiscent of the problem \eqref{eq:TV-robustified-Wp} but with regularizers added to the objective (KL being the most studied) rather than incorporated as constraints. Sinkhorn-based primal algorithms \citep{chizat2018scaling} are the standard approach to computation, and these have been extended to large-scale machine learning via minibatch methods \citep{fatras21a}. 
\cite{sonthalia2020dual} explore a distinct version of unbalanced OT derived via dual regularization, but the regularized dual involves two potentials even when $p=1$.
Unbalanced OT is sometimes called relaxed or semi-relaxed OT, depending on whether one or both marginal constraints are relaxed \citep{blondel2018smooth, fukunaga2021}. The former work \citep{blondel2018smooth} features a regularized dual, but the regularizer penalizes violations of the standard Kantorovich dual constraint, which is not relaxed in our approach.

The POT problem \eqref{eq:POT-intro} itself has also been previously studied. \cite{caffarelli2010} consider a different parameterization of the problem, arriving at a distinct dual. In particular, even when $p=1$, their dual involves two potential functions. Their work reduces the POT problem to an OT problem with an augmented cost, as discussed in Section~\ref{ssec:POT}. \cite{figalli2010} restricts attention to quadratic costs with no discussion of duality. \cite{alvarez2011uniqueness} find sufficient conditions for the optimal partial coupling to be unique and induced by a deterministic transport map. \citet{del2013rates} study statistical convergence rates under POT, generally finding them to be comparable to standard OT. More recently, \citet{chapel2020} applied POT to positive-unlabeled learning, but dual-based algorithms were not considered.

There is a long history of learning in the presence of corruptions, dating back to the robust statistics community in the 1960s  \citep{huber64}. Over the years, many robust and sample-efficient estimators have been designed, particularly for mean and scale parameters; see, e.g., \cite{ronchetti2009robust} for a comprehensive survey and \cite{donoho88} for an influential population-limit approach based on MDE. In the theoretical computer science community, much work has focused on achieving optimal rates for robust mean and moment estimation with computationally efficient estimators \citep{cheng2019high, diakonikolas2019recent}. Recent statistics work, building off of \cite{donoho88}, has developed a unified framework for robust estimation based on MDE, which generally achieves optimal population-limit and near-optimal finite-sample guarantees for tasks including mean and covariance estimation \citep{zhu2019resilience}. Their analysis employs a quantity dubbed ``generalized resilience'' which is also essential to our work (once adapted to the OT setting).
Our MDE approach follows their ``weaken-the-distance'' principle for robust statistics, but with a distinct notion of ``weakening'' that lends itself better to OT. \citet{zhu2019resilience} also consider robust estimation where perturbations are bounded under $\Wone$ instead of $\|\cdot\|_\tv$, but they do not consider a $\Wone$ objective. 
In follow-up work by a subset of the authors \citep{nietert2024distribution}, spectral methods from \citet{diakonikolas2016} are adapted to obtain a computationally efficient and minimax optimal procedure for estimation under $\Wone$ when $\mu$ has bounded covariance, under a stronger adversarial model. However, this approach does not translate to the WGAN setting motivating our work.

\vspace{-3mm}
\section{Preliminaries}
\label{sec:prelims}

Let $(\cX,\mathsf{d})$ be a complete, separable metric space, and denote the diameter of a set $A \subset \cX$ by $\diam(A) \coloneqq \sup_{x,y \in A} \mathsf{d}(x,y)$. Write $\Lip_1(\cX)$ for the family of real 1-Lipschitz functions on $\cX$. Take $C_b(\cX)$ as the set of continuous, bounded real functions on $\cX$, and let $\cM(\cX)$ denote the set of signed Radon measures on $\cX$ equipped with the TV norm $\|\mu\|_\tv \coloneqq \frac{1}{2}|\mu|(\cX)$. Let $\cM_\plus(\cX)$ denote the space of finite, positive Radon measures on $\cX$. For $\mu,\nu \in \cM_\plus(\cX)$ and $p \in [1,\infty]$, we consider the standard $L^p(\mu)$ space with norm $\|f\|_{L^p(\mu)} = \left(\int |f|^p \dd \mu \right)^{1/p}$ and write $\mu \leq \nu$ when $\mu(B) \leq \nu(B)$ for every Borel set $B \subseteq \cX$. Denote the shared mass of $\mu$ and $\nu$ by $\mu \land \nu \coloneqq \mu - (\nu - \mu)^-$, where $\kappa = \kappa^+ - \kappa^-$ is the Jordan decomposition of $\kappa \in \cM(\cX)$, and write $\mu \lor \nu \defeq \mu + \nu - \mu \land \nu$. For $A \subseteq \cX$ and $\mu \in \cM_\plus(\cX)$, define $\mu|_A = \mu(\cdot \cap A)$.

Let $\cP(\cX) \subset M_\plus(\cX)$ denote the space of probability measures on $\cX$, and take $\cP_p(\cX) \coloneqq \{ \mu \in \cP(\cX) : \int \mathsf{d}(x,x_0)^p \dd \mu(x) < \infty \text{ for some $x_0 \in \cX$}\}$ to be those with bounded $p$th moment. 
Given $\mu,\nu \in \cP(\cX)$, let $\Pi(\mu,\nu)$ denote the set of their couplings, i.e., $\pi \in \cP(\cX \times \cX)$ such that $\pi(B \times \cX) = \mu(B)$ and $\pi(\cX \times B) = \nu(B)$, for every Borel set $B$. When $\cX = \R^d$, we write the covariance matrix for $\mu \in \cP_2(\cX)$ as $\Sigma_\mu \coloneqq \E[(X - \E[X])(X - \E[X])^\top]$ where $X \sim \mu$. We also use $\E_\mu[f(X)]$ to denote the expectation of the random variable $f(X)$ when $X \sim \mu$
and write $f_\sharp \mu$ for the probability law of $f(X)$. We write $a \lor b \coloneqq \max\{a,b\}$, $a \land b \coloneqq \min \{a,b\}$, and use $\lesssim, \gtrsim, \asymp$ to denote inequalities/equality up to absolute constants. We sometimes write $a = O(b)$ and $b = \Omega(a)$ for $a \lesssim b$.
\vspace{-3mm}

\subsection{(Partial) Optimal Transport}
\label{ssec:POT}
For a lower semi-continuous cost function $c:\cX^2 \to \R$, the \emph{optimal transport cost} between $\mu,\nu \in \cM_\plus(\cX)$ with $\mu(\cX) = \nu(\cX)$ is defined by
\begin{equation}
\label{eq:OT-def}
    \mathsf{OT}_c(\mu,\nu) \coloneqq \inf_{\pi \in \Pi(\mu,\nu)} \int_{\cX^2} c \dd \pi,
\end{equation}
where $\Pi(\mu,\nu) \coloneqq \{ \pi \in \cM_\plus(\cX^2) : \pi(\cdot \times \cX) = \mu, \pi(\cX \times \cdot) = \nu \}$.
For $p \in [1,\infty)$, the $p$-Wasserstein distance $\Wp(\mu,\nu)$ as defined in \eqref{eq:Wp-intro} coincides with $\mathsf{OT}_{c}(\mu,\nu)^{1/p}$ for the cost $c(x,y) \coloneqq \mathsf{d}(x,y)^p$.
Some basic properties of $\Wp$ are (see, e.g., \citealp{villani2003,santambrogio2015}): (i) the $\inf$ is attained in the definition of $\Wp$, i.e., there exists a coupling $\pi^\star \in \Pi(\mu,\nu)$ such that $\Wp^{p}(\mu,\nu) = \int_{\R^d \times \R^d} |x-y|^pd\pi^\star(x,y)$;
(ii) $\Wp$ is a metric on $\cP_p(\cX)$; and
(iii) convergence in $\Wp$ is equivalent to weak convergence plus convergence of $p$th moments: $\Wp(\mu_{n},\mu) \to 0$ if and only if $\mu_{n} \stackrel{w}{\to} \mu$ and $\int |x|^{p} d \mu_{n}(x) \to \int |x|^{p} d \mu(x)$. 

Wasserstein distances adhere to a dual formulation, which we summarize below. For a function $\varphi: \cX \to [-\infty,\infty)$ and symmetric cost function $c: \cX^2 \to \R$, the \textit{$c$-transform} of $\varphi$ is
\[
\varphi^c(y) \coloneqq \inf_{x \in \cX}\big [ c(x,y) -\varphi(x) \big], \quad y \in \R^d.
\]
A function $\varphi: \cX \to [-\infty,\infty)$, not identically $-\infty$, is called \textit{$c$-concave} if $\varphi = \psi^c$ for some function $\psi: \cX \to [-\infty,\infty)$. For the $\Wone$ cost $c(x,y) = \mathsf{d}(x,y)$, this occurs if and only if $\varphi \in \mathrm{Lip}_1(\cX)$. With this notation, the following duality holds \citep[Theorem 1.3]{villani2003}, 
\begin{equation}
\mathsf{OT}_c(\mu,\nu) = \!\!\sup_{\substack{\varphi,\psi \in C_b(\cX)\\\varphi(x) + \psi(y) \leq c(x,y) \, \forall x,y \in \cX}} \!\left [ \int_{\cX} \!\varphi \dd\mu + \int_{\cX} \!\psi \dd\nu \right ]= \sup_{\varphi \in C_b(\cX)} \left [ \int_{\cX} \varphi \dd\mu + \int_{\cX} \varphi^c \dd\nu \right ],\label{eq:Wp-duality}
\end{equation}
and one may restrict the right-hand supremum to $c$-concave $\varphi$. \smallskip

We define the partial OT cost $\OT^\eps(\mu,\nu)$ as the solution to \eqref{eq:OT-def} when the set of feasible couplings $\Pi(\mu,\nu)$ is replaced with the set of partial couplings $\Pi_\eps(\mu,\nu)$, as defined in \eqref{eq:POT-intro}. We recover $\RWp$ by taking $c = \mathsf{d}^p$. Computation for POT is generally addressed via the following characterization of POT as an instance of standard OT over an augmented space.

\begin{proposition}[POT as augmented standard OT, \citealp{caffarelli2010}]
\label{prop:RWp-augmented-Wp}
Let $\bar{x}\notin\cX$ be a dummy point, define the disjoint union $\bar{\cX}=\cX \sqcup \{\bar{x}\}$, and suppose that the cost $c$ is non-negative. Define the augmented cost function $\bar{c}:\bar{\cX}^2 \to [0,\infty]$ by
\begin{equation*}
    \bar{c}(x,y) = \begin{cases}
        c(x,y), &x,y \in \cX\\
        0, &x \in \cX, y = \bar{x}\ \mbox{or}\ y \in \cX, x = \bar{x}\\
        \infty. &x = y = \bar{x}
    \end{cases}.
\end{equation*}
Then, for all $\mu,\nu \in \cP(\cX)$, we have $\OT^{\eps}(\mu,\nu) = \mathsf{OT}_{\bar{c}}(\mu + \eps \delta_{\bar{x}}, \nu + \eps \delta_{\bar{x}})$.
\end{proposition}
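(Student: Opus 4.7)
The plan is to establish the claimed identity (more precisely, the $p$-th power identity $\RWp(\mu,\nu)^p = \mathsf{OT}_{\bar c}(\mu + \eps \delta_{\bar x}, \nu + \eps \delta_{\bar x})$) by a pair of matching inequalities, each obtained by a direct conversion between couplings on $\cX^2$ and on $\bar\cX^2$. The key structural role is played by the infinite cost $\bar c(\bar x, \bar x) = \infty$: any finite-cost coupling in $\Pi(\mu + \eps\delta_{\bar x}, \nu + \eps \delta_{\bar x})$ must put zero mass at the dummy corner, which in turn forces exactly $\eps$ mass to be moved at zero cost from $\bar x$ on one side into $\cX$ on the other. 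This is precisely the partial-OT mechanism by which $\eps$ mass gets ``dropped'' from each marginal.

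For the direction $\mathsf{OT}_{\bar c}(\mu + \eps \delta_{\bar x}, \nu + \eps \delta_{\bar x}) \leq \RWp(\mu,\nu)^p$, I would take any feasible pair $(\mu', \nu')$ for \eqref{eq:RWp} together with a $\delta$-optimal coupling $\pi \in \Pi(\mu', \nu')$, and build $\bar\pi \coloneqq \pi + (\mu - \mu') \otimes \delta_{\bar x} + \delta_{\bar x} \otimes (\nu - \nu')$ as a measure on $\bar\cX^2$ with no mass at $(\bar x, \bar x)$. Because $(\mu - \mu')(\cX) = (\nu - \nu')(\cX) = \eps$, the marginals of $\bar\pi$ at $\{\bar x\}$ both equal $\eps$, while the marginals on $\cX$ reassemble to $\mu$ and $\nu$. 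Since $\bar c$ agrees with $\mathsf{d}^p$ on $\cX^2$ and vanishes on the two cross strips, the transport cost of $\bar\pi$ coincides with that of $\pi$, which is at most $\Wp(\mu',\nu')^p + \delta$. Taking the infimum over feasible $(\mu', \nu')$ and sending $\delta \to 0$ yields the bound.

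For the reverse inequality, I would start with an arbitrary $\bar\pi \in \Pi(\mu + \eps \delta_{\bar x}, \nu + \eps \delta_{\bar x})$ of finite cost, which forces $\bar\pi(\{(\bar x, \bar x)\}) = 0$, set $\pi \coloneqq \bar\pi|_{\cX^2}$, and define $\mu', \nu'$ as the two marginals of $\pi$. The monotonicity $\mu'(A) = \bar\pi(A \times \cX) \leq \bar\pi(A \times \bar\cX) = \mu(A)$ for Borel $A \subseteq \cX$ gives $\mu' \leq \mu$, and symmetrically $\nu' \leq \nu$. Inclusion-exclusion together with $\bar\pi(\{\bar x\} \times \bar\cX) = \bar\pi(\bar\cX \times \{\bar x\}) = \eps$ and $\bar\pi(\{(\bar x, \bar x)\}) = 0$ yields $\pi(\cX \times \cX) = 1 - \eps$, so $\mu'(\cX) = \nu'(\cX) = 1 - \eps$, making $(\mu',\nu')$ feasible in \eqref{eq:RWp}. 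Since $\pi \in \Pi(\mu', \nu')$ has cost $\int_{\cX^2} \mathsf{d}^p \dd \pi = \int \bar c \dd \bar\pi$, we obtain $\RWp(\mu,\nu)^p \leq \int \bar c \dd \bar\pi$, and infimizing over $\bar\pi$ closes the loop. The main obstacle here is genuinely mild; it is essentially careful bookkeeping of the mass at the dummy point plus handling potential nonexistence of exact minimizers in \eqref{eq:RWp} via $\delta$-approximation, so I do not anticipate any deep technical issue.
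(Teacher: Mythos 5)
Your argument is correct. The paper does not prove this proposition itself---it is quoted from \citet{chapel2020}---but your two-sided coupling correspondence (padding a feasible partial coupling with the cross-strips $(\mu-\mu')\otimes\delta_{\bar x}$ and $\delta_{\bar x}\otimes(\nu-\nu')$, and conversely restricting a finite-cost augmented coupling to $\cX^2$, with the $\infty$ cost at $(\bar x,\bar x)$ forcing zero mass there and hence $1-\eps$ mass on $\cX^2$) is exactly the standard proof of this reduction, and the bookkeeping checks out. You are also right that the identity should be read with a $p$-th power, i.e.\ $\RWp(\mu,\nu)^p = \mathsf{OT}_{\bar c}(\mu+\eps\delta_{\bar x},\nu+\eps\delta_{\bar x})$, since $\mathsf{OT}_{\bar c}$ is an un-rooted transport cost while $\RWp$ is defined via $\Wp$; the statement as printed elides this normalization.
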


One may thus compute $\RWp$ by applying methods for standard OT to this augmented space, e.g., linear programming for an exact solution or Sinkhorn methods for an approximate solution (see \cite{peyre2019} for a thorough discussion of computational OT)
. In these cases, $\bar{c}(\bar{x},\bar{x})$ can be taken to be a sufficiently large but finite constant.

\subsection{Minimum Distance Estimation}

Given a distribution family $\cG \subseteq \cP(\cX)$ and a statistical distance $\mathsf{D}:\cP(\cX)^2 \to [0,\infty]$, we define the minimum distance functional $\mathsf{T}_{[\cG,\mathsf{D}]} : \cP(\cX) \to \cP(\cX)$ by the relation
\begin{equation*}
\mathsf{T}_{[\cG,\mathsf{D}]}(\mu) \in \argmin_{\nu \in \cG} \mathsf{D}(\mu,\nu),
\end{equation*}
choosing the representative arbitrarily and assuming no issues with existence. More generally, for any $\delta > 0$, we set $\mathsf{T}_{[\cG,\mathsf{D}]}^\delta(\mu)$ to any distribution $\kappa \in \cG$ such that $\mathsf{D}(\mu,\kappa) \leq \inf_{\nu \in \cG} \mathsf{D}(\mu,\nu) + \delta$.
One can view such $\nu$ as a $\delta$-approximate projection of $\mu$ onto the family $\cG$ under $\mathsf{D}$. We recall next how MDE can be applied in the contexts of OT and robust statistics.

\vspace{-1mm}
\paragraph{WGAN as MDE under $\Wone$.}

A remarkable application of OT to machine learning is generative modeling via the WGAN framework, which can be understood as MDE under $\Wone$. Given a distribution family $\cG$ and the $n$-sample empirical measure $\hat{\mu}_n$ of an unknown distribution $\mu$, the $\Wone$ minimum distance estimate for $\mu$ given $\hat{\mu}_n$ is
\[
    \mathsf{T}_{[\cG,\Wone]}(\hat{\mu}_n) = \argmin_{\nu \in \cG} \Wone(\hat{\mu}_n,\nu)    = \argmin_{\nu \in \cG} \sup_{f \in \mathrm{Lip}_1(\cX)} \int_\cX f \dd(\hat{\mu}_n - \nu).
\]%
In practice, $\cG$ is realized as the pushforward of a simple reference distribution (e.g., standard normal or uniform on a ball) through a parameterized family of neural networks. The resulting estimate is called a generative model, since a minimizing parameter allows one to generate high quality samples from a distribution close to $\mu$, given that $\cG$ is a sufficiently rich family. The min-max objective for this problem lends itself well to implementation via alternating gradient-based optimization, and the resulting WGAN framework has supported breakthrough work in generative modeling \citep{arjovsky_wgan_2017, gulrajani2017improved, tolstikhin2018wasserstein}. MDE under the Wasserstein metric has also been of general interest in the statistics community due to its generalization guarantees and lack of parametric assumptions \citep{bassetti2006kantorovich,bernton2017inference,boissard2015wasserstein}.

\paragraph{Robust statistics and MDE under $\|\cdot\|_\tv$.} 
In robust statistics, one seeks to estimate some property of a distribution $\mu$ from contaminated data, generally under the requirement that $\mu \in \cG$ for some class $\cG \subseteq \cP(\cX)$ encoding distributional assumptions. A common approach for quantifying the error is using a statistical distance $\mathsf{D}:\cP(\cX)^2 \to [0,\infty]$. 
In the population-limit version of such tasks, sampling issues are ignored, and we suppose that one observes a contaminated distribution $\tilde{\mu}$ such that $\|\tilde{\mu} - \mu\|_\tv \leq \eps$. The information-theoretic complexity of robust estimation in this setting is characterized by the \emph{modulus of continuity}
\begin{equation*}
    \mathfrak{m}_\mathsf{D}(\cG,\|\cdot\|_\tv,\eps) \coloneqq \sup_{\substack{\mu,\nu \in \cG \\ \|\mu - \nu\|_\tv \leq \eps}} \mathsf{D}(\mu,\nu),
\end{equation*}
and the associated guarantees are achieved via MDE under the TV norm. This is summarized in the following lemma due to \citet{donoho88}.

\begin{proposition}[Robustness of MDE, \citealp{donoho88}]
Fix $\cG \subseteq \cP(\cX)$, let $\mu \in \cG$, and take any $\tilde{\mu} \in \cP(\cX)$ with $\|\tilde{\mu} - \mu\|_\tv \leq \eps$. Then, for any $\nu \in \cG$ such that $\|\nu - \tilde{\mu}\|_\tv \leq \eps$, including the minimum distance estimate $\mathsf{T}_{[\cG,\|\cdot\|_\tv]}(\tilde{\mu})$, we have
\begin{equation}
\label{eq:modulus-risk-bound}
    \mathsf{D}(\nu,\mu) \leq \mathfrak{m}_\mathsf{D}(\cG,\|\cdot\|_\tv,2\eps).
\end{equation}
\end{proposition}

The MDE solution satisfies $\|\mathsf{T}_{[\cG,\|\cdot\|_\tv]}(\tilde{\mu}) - \tilde{\mu}\|_\tv \leq \eps$ because $\|\mu - \tilde{\mu}\|_\tv \leq \eps$ and $\mu \in \cG$, while inequality \eqref{eq:modulus-risk-bound} holds by the definition of the modulus since $\mu,\nu \in \cG$ and $\|\mu - \nu\|_\tv \leq \|\mu - \tilde{\mu}\|_\tv + \|\tilde{\mu} - \nu\|_\tv \leq 2\eps$. In typical cases of interest, it is easy to show that no procedure can obtain estimation error substantially less than $\mathfrak{m}_\mathsf{D}(\cG,\|\cdot\|_\tv,\eps)$ for all $\mu \in \cG$, so this characterization is essentially tight.

\paragraph{Generalized resilience.}
Our robustness analysis for OT distances relies on the notion of (generalized) resilience, which was originally proposed for robust mean estimation \citep{steinhardt2018resilience, zhu2019resilience}. Specifically, given a statistical distance $\mathsf{D}:\cP(\cX)^2 \to [0,\infty]$, we say that $\mu \in \cP(\cX)$ is \emph{$(\rho,\eps)$-resilient w.r.t.\ $\mathsf{D}$} if $\mathsf{D}(\mu,\nu) \leq \rho$ for all distributions $\nu$ with $\nu \leq \frac{1}{1-\eps} \mu$. Resilience is a primary sufficient condition for robust estimation, since it implies immediate bounds on the relevant modulus of continuity.

\begin{lemma}[Bounded modulus under resilience, \citealp{zhu2019resilience}]
\label{lem:bounded-modulus-generalized-resilience}
Let $\mathsf{D}:\cP(\cX)^2 \to [0,\infty]$ satisfy the triangle inequality, and write $\cG$ for the family distributions which are $(\rho,2\eps)$-resilient w.r.t. $\mathsf{D}$. Then we have
\begin{equation*}
    \mathfrak{m}_\mathsf{D}(\cG,\|\cdot\|_\tv,\eps) \leq 2\rho.
\end{equation*}
\end{lemma}

\begin{example}[Robust mean estimation]
In robust mean estimation, the error is quantified via $\mathsf{D}_\mathrm{mean}(\mu,\nu) = \|\E_\mu[X] - \E_\nu[X]\|$, i.e., we seek an estimate for the mean of an unknown distribution under the $\ell_2$ norm. A distribution $\mu$ is $(\rho,\eps)$-resilient w.r.t.\ $\mathsf{D}_\mathrm{mean}$ %
if $\|\E_\mu[X] - \E_\nu[X]\| \leq \rho$ for all distributions $\nu \leq \frac{1}{1-\eps}\mu$. For example, if $\mu$ has variance $\sigma$, one can show that $\mu$ is $\bigl(O(\sigma\sqrt{\eps}),2\eps\bigr)$-resilient in mean; thus its mean can be recovered under TV $\eps$-corruptions up to error $O(\sigma\sqrt{\eps})$. The MDE procedure achieving this risk bound in the population-limit is closely related to efficient finite-sample algorithms based on iterative filtering and spectral reweighing \citep{diakonikolas2016, hopkins2020robust}.
\end{example}

\section{Structure of POT}%
\label{sec:RWp}

Our approach to robust estimation builds upon the robust Wasserstein distance $\RWp$, defined by a POT problem where a $(1-\eps)$-fraction of mass is transported. Here, we establish several important properties and equivalences for POT, which may be of independent interest. Throughout, we fix a cost $c$ of the form $c(x,y) = h(d(x,y))$, where $h:\R_{\geq 0} \to \R_{\geq 0}$ is continuous and strictly increasing with $h(0) = 0$. In particular, this captures the $p$-Wasserstein cost $c = \mathsf{d}^p$. Proofs for this section are deferred to Section~\ref{prf:RWp}.
\smallskip

To begin, we provide three alternative reformulations of $\ROT$, which was originally defined in Section~\ref{ssec:POT} as a minimization problem over partial couplings. These reformulations more evidently connect POT to outlier robustness, and each of them will be used throughout our theory. After stating the result, we discuss and interpret each representation.

\begin{proposition}[Equivalent reformulations]
\label{prop:alternative-primal-probs}
For $\mu,\nu \in \cP(\cX)$ and $\eps \in [0,1]$, we have
\begin{equation*}
    \ROT(\mu,\nu) \!=\!\! \min_{\substack{\mu_\minusb,\nu_\minusb \in \cM_\plusb(\cX)\\ \mu_\minusb \leq \mu,\, \nu_\minusb \leq \nu\\ \mu_\minusb(\cX) = \nu_\minusb(\cX) \geq 1-\eps}}\!\!\!\! \OT(\mu_\minus,\nu_\minus)\!=\!\min_{\substack{\mu_\plusb,\nu_\plusb \in \cM_\plusb(\cX)\\\mu_\plusb \geq \mu,\, \nu_\plusb \geq \nu \\ \mu_\plusb(\cX) = \nu_\plusb(\cX) \leq 1+\eps}} \!\!\!\!\OT(\mu_\plus,\nu_\plus) \!=\! \min_{\substack{\mu' \in \cP(\cX)\\\|\mu' - \mu\|_\tv \leq \eps}} \!\!\OT(\mu',\nu).
\end{equation*}
For the first reformulation, there are minimizers $\mu_\minus \leq \mu$ and $\nu_\minus \leq \nu$ such that $\mu_\minus,\nu_\minus \geq \mu \land \nu$. For the second, there are maximizers $\mu_\plus \geq \mu$ and $\nu_\plus \geq \nu$ such that $\mu_\plus,\nu_\plus \leq \mu \lor \nu$.
\end{proposition}

\begin{figure}[t]
\begin{minipage}[c]{0.47\textwidth}
\centering
\scalebox{0.68}{\includegraphics{./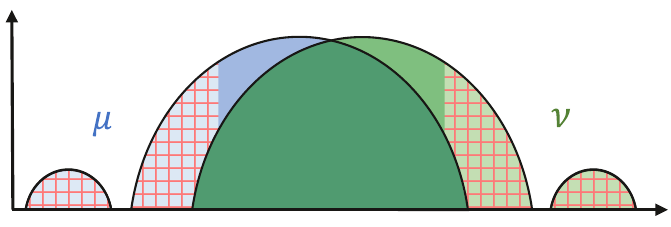}}
\end{minipage}\hfill
\begin{minipage}[c]{0.47\textwidth}
\caption{Visualization of Proposition~\ref{prop:alternative-primal-probs}. The gridded light blue and green regions each have mass $\eps$, respectively, and are removed to obtain optimal $\mu_\minus$ and $\nu_\minus$ for $\RWone$. No mass need be removed from the dark region designating $\mu \land \nu$.} \label{fig:primal-perturbations}
\end{minipage}
\end{figure}

The first formulation measures $\ROT$ as the smallest transport cost that can be attained by omitting an $\eps$-fraction of mass from each of $\mu$ and $\nu$; the removed mass can be interpreted as the identified outliers. 
The existence of minimizers bounded from below by $\mu \land \nu$, depicted in \cref{fig:primal-perturbations}, was previously known for $\RWone$ and $\Wtwo^\eps$ (see, e.g., \citealp{figalli2010}), but our result requires a more general argument. The second equality shows that $\ROT$ can be equivalently formulated in terms of mass addition, which is key to several of our proofs. This result relies on the geometric symmetry of the POT problem; instead of removing mass from one measure, we may equivalently add it to the other.
The third TV formulation follows since a TV perturbation can be split into separate mass addition and mass removal steps.\smallskip

We now establish basic structural properties of $\ROT$, identifying extreme values, noting its monotonicity in the robustness radius, and establishing an approximate triangle inequality.

\begin{proposition}[Basic properties]
\label{prop:RWp-structural-properties}
For $\mu,\nu,\kappa\in\cP(\cX)$ and $\eps_1,\eps_2 \in [0,1]$, we have
\begin{enumerate}
\item \emph{Monotonicity in the robustness parameter:} if $0 < \eps_1 \leq \eps_2 < \|\mu - \nu\|_\tv$, then
\begin{equation*}
0 < \OT^{\eps_2}(\mu,\nu) \leq \tfrac{1-\eps_2}{1-\eps_1} \OT^{\eps_1}(\mu,\nu) \leq \OT^{\eps_1}(\mu,\nu) < \infty,
\end{equation*}
with $\OT^0(\mu,\nu) = \OT(\mu,\nu)$ and $\OT^{\|\mu - \nu\|_\tv}(\mu,\nu) = 0$;
\item \emph{Approximate triangle inequality:} recalling that $c(x,y) = h(d(x,y))$, if $h$ and $\log \circ h \circ \exp$ are convex and $\eps_1 + \eps_2 \leq 1$, then
\begin{equation*}
    h^{-1}\bigl(\OT^{\eps_1 + \eps_2}(\mu,\nu)\bigr) \leq h^{-1}\bigl(\OT^{\eps_1}(\mu,\kappa)\bigr) + h^{-1}\bigl(\OT^{\eps_2}(\kappa,\nu)\bigr).
\end{equation*}
In particular, $\Wp^{\eps_1 + \eps_2}(\mu,\nu) \leq \Wp^{\eps_1}(\mu,\kappa) + \Wp^{\eps_2}(\kappa,\nu)$.
\end{enumerate}
\end{proposition}

The monotonicity statement is a simple observation, while the proof of the second claim combines Proposition~\ref{prop:alternative-primal-probs} and standard triangle inequalities for OT and $\|\cdot\|_{\tv}$.\smallskip

Although computation for POT is well-understood via a reduction to standard OT (see Proposition~\ref{prop:RWp-augmented-Wp}), this reduction uses a non-standard cost function, inhibiting some dual-based OT applications like WGAN. Fortunately, we prove that $\ROT$ admits a Kantorovich-type dual in terms of the base cost $c$.
This result is key to our experiments in \cref{sec:experiments}.

\vspace{-1mm}
\begin{restatable}[Dual form]{theorem}{RWpdual}
\label{thm:RWp-dual}
For $0<\eps\leq 1$ and $\mu,\nu \in \cP(\cX)$, we have 
\begin{align}
\vspace{-1mm}
\label{eq:RWp-dual}
    \ROT(\mu,\nu) &= \sup_{\substack{\varphi \in C_b(\cX)}} \int_\cX \varphi \dd \mu + \int_\cX \varphi^c \dd \nu -  2 \eps \|\varphi\|_\infty\\
    &= \sup_{\substack{\varphi \in C_b(\cX)}} \int_\cX \varphi \dd \mu + \int_\cX \varphi^c \dd \nu - \eps \left(\sup_{x \in \cX} \varphi(x) - \inf_{x \in \cX} \varphi(x)\right),\notag
\vspace{-1mm}
\end{align}
and the suprema are achieved by $\varphi \in C_b(\cX)$ such that $\varphi = (\varphi^c)^c$. If $\varphi\in C_b(\cX)$ maximizes \eqref{eq:RWp-dual} and $\mu_\minus,\nu_\minus\!\in\!\cM_\plus(\cX)$ are minimizers for the mass removal reformulation in Proposition~\ref{prop:alternative-primal-probs}, then $\supp(\mu-\mu_\minus) \subseteq \argmax(\varphi)$ and $\supp(\nu-\nu_\minus) \subseteq \argmin(\varphi)$.
\end{restatable}
\vspace{-1mm}

\begin{figure}[t]
\begin{center}

\begin{tikzpicture}

\definecolor{lightgreen}{HTML}{7FBF7F}
\definecolor{lightergreen}{HTML}{C5E0B4}
\definecolor{textgreen}{HTML}{548235}
\definecolor{lightblue}{HTML}{A1B8E1}
\definecolor{textblue}{HTML}{4472C4}
\definecolor{lighterblue}{HTML}{DEEBF7}
\definecolor{lightorange}{HTML}{FF9900}

\def\leftOutlierMass{(0.25,0) rectangle +(0.45,0.5) }
\def\rightOutlierMass{(5.65,0) rectangle +(0.45,0.5)}

\draw[fill=lighterblue] \leftOutlierMass node[above, xshift=-2.2mm, yshift=1mm, color=textblue]{\Large $\mu$};
\pattern[pattern=grid, pattern color=red] \leftOutlierMass;
\draw[fill=lightblue] (0.95,0) rectangle +(1.9,2.63);
\draw[fill=lightgreen] (3.5,0) rectangle +(1.9,2.63);
\draw[fill=lightergreen] \rightOutlierMass node[above, xshift=-2.2mm, yshift=1mm, color=textgreen]{\Large $\nu$};
\pattern[pattern=grid, pattern color=red] \rightOutlierMass;

\draw[-{Triangle[width=1.25mm, length=1.5mm]}, line width=1pt] (0,0) -- (6.5,0);
\draw[-{Triangle[width=1.25mm, length=1.5mm]}, line width=1pt] (0,0) -- (0,2.75);

\draw[color=lightorange, line width=4pt] (0.1,2.63) -- (0.93,2.63)  -- node[above, yshift=1.25mm]{\Large $\varphi$} (5.4,0) -- (6.27,0);

\draw (0,-0.17) -- (0,-0.17);

\end{tikzpicture}\:\:\:\:
\includegraphics[width=0.45\linewidth]{./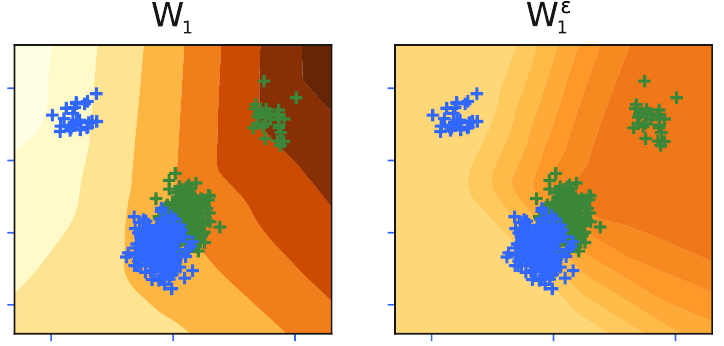}
\end{center}
\vspace{-5mm}
\caption{Optimal potentials: (left) 1D densities plotted with their optimal potential for the $\RWone$ dual problem; (right) contour plots for optimal dual potentials to $\Wone$ and $\RWone$ between 2D Gaussian mixtures. Observe how optimal potentials for the robust dual are flat over outlier mass.}\label{fig:optimal-potentials}
\end{figure}

This new formulation differs from the classic dual \eqref{eq:Wp-duality} by a sup-norm penalty on the potential function, its lack of moment requirements on $\mu$ and $\nu$, and the existence of continuous, bounded maximizers. Recall that when $c = \mathsf{d}$, $\varphi = (\varphi^c)^c$ exactly when $\varphi$ is 1-Lipschitz, with $\varphi^c = -\varphi$. Moreover, the extremal level sets of any maximizing dual potential encode the location of mass removed in the primal problem, as depicted in \cref{fig:optimal-potentials}.\medskip

\begin{proof}[Proof sketch]
The ``$\geq$'' direction of \eqref{eq:RWp-dual} is straightforward to derive. Starting from the mass addition formulation of $\ROT$ (see Proposition~\ref{prop:alternative-primal-probs}), we bound
\begin{align*}
\label{eq:duality-leq}
    \ROT(\mu,\nu) &= \inf_{\substack{\alpha,\beta \in \cM_\plusb(\cX)\\\alpha(\cX) = \beta(\cX) \leq \eps}} \sup_{\varphi \in C_b(\cX)} \int_{\cX}\varphi \dd (\mu + \alpha) + \int_\cX \varphi^c \dd (\nu + \beta)\\
    &\geq \sup_{\varphi \in C_b(\cX)}  \inf_{\substack{\alpha,\beta \in \cM_\plusb(\cX)\\\alpha(\cX) = \beta(\cX) \leq \eps}} \int_{\cX}\varphi \dd (\mu + \alpha) + \int_\cX \varphi^c \dd (\nu + \beta)\\
    &= \sup_{\varphi \in C_b(\cX)} \int_\cX \varphi \dd \mu + \int_\cX \varphi^c \dd \nu - \eps\left(\sup_{x \in \cX} \varphi(x) - \inf_{x \in \cX} \varphi(x)\right).
\end{align*}
By subtracting a constant from $\varphi$, we can ensure that the final term takes its desired form of $2\|\varphi\|_\infty$ without modifying the objective value. For compact $\cX$, one can argue via Sion's minimax theorem that the inequality above is actually an equality; for general $\cX$, however, the situation is more subtle. Fortunately, we can sidestep any functional analysis by applying standard Kantorovich duality to the augmented problem from Proposition~\ref{prop:RWp-augmented-Wp}. The full proof in Section~\ref{prf:RWp-dual} proceeds with a careful analysis of $\bar{c}$-concave functions under the augmented cost $\bar{c}$ to match our penalized objective. Existence of maximizers follows by a compactness and semi-continuity argument under an appropriate weak topology. 
\end{proof}

\begin{remark}[TV as a dual norm]
\label{rem:dual-norm}
Recall that $\|\cdot\|_\tv$ is the dual norm corresponding to the Banach space of measurable functions on $\cX$ equipped with $\|\cdot\|_\infty$. An inspection of the proof of \cref{thm:RWp-dual} reveals that our penalty scales with $\|\cdot\|_\infty$ precisely for this reason.
\end{remark}
\vspace{-1mm}

Lastly, we describe an alternative dual form which ties robust OT to loss trimming---a popular practical tool for robustifying estimation algorithms when $\mu$ and $\nu$ have finite support \citep{shen19}.
\vspace{-1mm}

\begin{restatable}[Loss trimming dual]{proposition}{losstrimming}
\label{prop:loss-trimming}
If $\mu,\nu \in \cP(\cX)$ are uniform distributions over $n$ points each and $\eps \in [0,1]$ is a multiple of $1/n$, then 
\begin{align*}
\vspace{-1mm}
\ROT(\mu,\nu)=\sup_{\varphi \in C_b(\cX)}\left( \min_{\substack{S \subseteq \supp(\mu)\\|S| = (1 - \eps)n}} \frac{1}{n}\sum_{x \in S} \varphi(x) + \min_{\substack{T \subseteq \supp(\nu)\\|T| = (1 - \eps)n}} \frac{1}{n}\sum_{y \in T} \varphi^c(y) \right).
\end{align*}
\end{restatable}
\vspace{-1mm}

The inner minimization problems above clip out the $\eps n$ fraction of samples whose potential evaluations are largest. This is similar to how standard loss trimming clips out a fraction of samples that contribute most to the considered training loss.

\section{Robust Estimation under \texorpdfstring{$\bm{\Wp}$}{Wp}}
\label{sec:robustness}

We now formalize the problem of robust distribution estimation under $\Wp$. Let $X_1, \dots, X_n$ be i.i.d.\ samples from an unknown distribution $\mu \in \cP(\cX)$ and write $\hat{\mu}_n = \frac{1}{n} \sum_{i=1}^n \delta_{X_i}$ for their empirical measure. Under the strong $\eps$-contamination model, an adversary observes these samples and produces corrupted samples $\tilde{X}_1, \dots, \tilde{X}_n$ with empirical measure $\tilde{\mu}_n$, such that the contamination fraction is at most $\eps$, i.e., $\frac{1}{n} \sum_{i=1}^n \mathds{1}{\bigl\{\tilde{X}_i \neq X_i\bigr\}}\leq \eps$. 
We seek an estimate $\mathsf{T}(\tilde{\mu}_n) \in \cP(\cX)$ for $\mu$ that minimizes the $p$-Wasserstein error $\Wp\bigl(\mathsf{T}(\tilde{\mu}_n),\mu\bigr)$.
\vspace{-1mm}

\paragraph{Error and risk.}
Formally, this corruption process may be any measurable transformation of the clean samples and an independent (but otherwise arbitrary) source of randomness. Write $\cM^\mathrm{adv}_n(\mu,\eps)$ for the family of all joint distributions $\tilde{\PP}_n \in \cP(\cX^n)$ of corrupted data $\{\tilde{X}_i\}_{i=1}^n$ that can be obtained this way. Note that the modified samples, of which there are at most $\eps n$, may be highly correlated. Fixing any estimator $\mathsf{T}$ that depends only on the corrupted data (and possibly independent randomness), we define its $n$-sample \emph{robust estimation risk}~by
\vspace{-2mm}
\begin{equation*}
    R_{p,n}(\mathsf{T},\mu,\eps) \coloneqq  \sup_{\tilde{\PP}_n \in \cM_n^\mathrm{adv}(\mu,\eps)} \E_{\tilde{\PP}_n}\bigl[\Wp\bigl(\mathsf{T}(\tilde{\mu}_n),\mu\bigr)\bigr].
\vspace{-1mm}
\end{equation*}
By definition, the error incurred by $\mathsf{T}$ under $\eps$-contamination is bounded by $R_{p,n}$ in expectation. 
No estimator can obtain finite error for all distributions, so we often require that $\mu$ belongs to a family $\cG \subseteq \cP(\cX)$ encoding distributional assumptions. 
In this setting, we define the worst-case risk over $\cG$ and the corresponding \emph{minimax risk}, respectively, by
\[
R_{p,n}(\mathsf{T},\cG,\eps) \coloneqq \smash{\sup_{\mu \in \cG} R_{p,n}(\mathsf{T},\mu,\eps)} \quad \mbox{and} \quad R_{p,n}(\cG,\eps) \coloneqq \inf_{\mathsf{T}} R_{p,n}(\mathsf{T},\cG,\eps).
\]
\vspace{-6mm}

\begin{figure}

\begin{minipage}[c]{0.45\textwidth}
\centering
\begin{tikzpicture}
\definecolor{lightgreen}{HTML}{7FBF7F}
\fill[lightgreen] (1,0) ellipse (2 and 1);

\node at (1.75,-0.4) {\LARGE $\mathcal{G}$};
\coordinate[label = {[label distance=0.075cm]225:$\mu$}] (mu) at (0,0);
\coordinate[label = {[label distance=0.075cm]135:$\hat{\mu}_n$}] (mu_hat) at (0,2.5);
\coordinate[label = {[label distance=0.075cm]45:$\tilde{\mu}_n$}] (mu_tilde) at (3,2.5);
\coordinate[label = {[label distance=0.1cm]0:$\mathsf{T}(\tilde{\mu}_n)$}] (MDE) at (2.3,0.75);

\fill[black] (mu) circle (0.1);
\fill[black] (mu_hat) circle (0.1);
\fill[black] (mu_tilde) circle (0.1);
\fill[black] (MDE) circle (0.1);

\draw (mu) -- node[left]{\scriptsize $\Wp \leq \delta_n$} (mu_hat);
\draw (mu_hat) -- node[above]{\scriptsize $\|\cdot\|_\tv \leq \eps$} (mu_tilde);
\draw[dashed] (mu) -- node[above, sloped]{\scriptsize $\RWp \leq \delta_n$} (mu_tilde);
\draw[dashed] (mu_tilde) -- node[right]{\scriptsize $\:\RWp \leq \delta_n$} (MDE);
\draw[dashed] (mu) -- node[below, sloped]{\scriptsize $\Wp^{2\eps} \leq 2\delta_n$}  (MDE);
\end{tikzpicture}
\end{minipage}%
\begin{minipage}[c]{0.5\textwidth}
\vspace*{5mm}
\caption{Visual depiction of MDE under $\RWp$ and its analysis. Solid lines represent statistical distance bounds given by the problem formulation, and dotted lines represent bounds deduced by our choice of estimator and the approximate triangle inequality for $\RWp$. We abbreviate $\delta_n = \Wp(\hat{\mu}_n,\mu)$.}
\label{fig:RWp-MDE}
\end{minipage}
\end{figure}
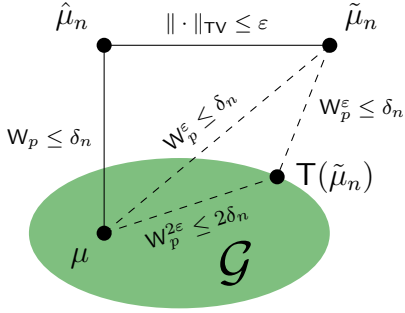

\paragraph{The estimator.}

To solve this robust estimation task, we perform MDE using $\RWp$, projecting $\tilde{\mu}_n$ onto the clean family $\cG$ under the robust distance. That is, we consider the estimator\footnote{As mentioned before, existence and representative selection are inconsequential; approximate minimizers will suffice.}
\vspace{-1mm}
\begin{equation*}
    \mathsf{T}_{[\cG,\RWp]}(\tilde{\mu}_n) \in \argmin_{\nu \in \cG} \RWp(\tilde{\mu}_n,\nu),
\end{equation*}
as illustrated in Figure~\ref{fig:RWp-MDE}. To study the performance of MDE, we begin by analyzing a population-limit version of the problem where measures are observed directly rather than via sampling. In \cref{ssec:generalized-resilience}, we introduce a resilience-type condition which characterizes population-limit minimax risk for a variety of standard distribution classes, and we show that this risk is attained by MDE under $\RWp$. In \cref{ssec:finite-sample}, we return to the finite-sample regime and prove that MDE under $\RWp$ still achieves near-optimal risk bounds. Finally, in \cref{ssec:distance-estimation}, we consider $\RWp$ as a stand-alone estimator for $\Wp$ and quantify its performance. Proofs for this section are deferred to \cref{subsec:prfs-robustness}.

\begin{remark}[One-sided robust distance]
\label{rem:one-sided-distance}
A natural alternative to $\RWp(\tilde{\mu}_n,\nu)$ above is the one-sided robust distance $\RWp(\tilde{\mu}_n\|\nu) \defeq \smash{\inf_{\mu' \in \cP(\cX): \mu' \leq \frac{1}{1-\eps}\tilde{\mu}_n} \Wp(\mu',\nu)}$. This variant seems well-suited to the additive $\eps$-corruption model, i.e., when $\hat{\mu}_n \leq \frac{1}{1-\eps}\tilde{\mu}_n$. Nevertheless, we show in Appendix~\ref{app:asymmetric-results} that MDE under the one-sided distance achieves matching guarantees to those proved in this section, even under strong $\eps$-contamination. It also admits a slightly simpler dual form, which we employ for the generative modeling experiments in \cref{sec:experiments}. Presently, we stick to the symmetric robust distance since its analysis is cleaner.
\end{remark}
\vspace{-3mm}

\subsection{Population-Limit Guarantees and Resilience}
\label{ssec:generalized-resilience}

Before analyzing finite-sample risk, we characterize a baseline risk that is unavoidable even with unlimited samples. Formally, given $\mu$ that belongs to a clean family $\cG \subseteq \cP(\cX)$, we consider the task of estimating $\mu$ under $\Wp$ from a contaminated distribution $\tilde{\mu}$ with $\|\tilde{\mu} - \mu\|_\tv \leq \eps$. Given a map $\mathsf{T}: \cP(\cX) \to \cP(\cX)$, we define its \emph{population-limit robust estimation error} as
\vspace{-1mm}
\begin{equation*}
    R_{p,\infty}(\mathsf{T},\mu,\eps) \coloneqq \sup_{\substack{\tilde{\mu} \in \cP(\cX): \|\tilde{\mu} - \mu\|_\tv \leq \eps}} \Wp\bigl(\mathsf{T}(\tilde{\mu}),\mu\bigr).
\vspace{-1mm}
\end{equation*}
As before, the minimax risk is $R_{p,\infty}(\cG,\eps) \coloneqq \inf_{\mathsf{T}} \sup_{\mu \in \cG} R_{p,\infty}(\mathsf{T},\mu,\eps)$, and MDE corresponds to the estimator $\mathsf{T} = \mathsf{T}_{[\cG,\RWp]}$. In this regime, however, the procedure simplifies considerably; since $\inf_{\nu \in \cG}\RWp(\tilde{\mu},\nu) \leq \RWp(\tilde{\mu},\mu) = 0$, the minimum distance estimate satisfies $\RWp(\mathsf{T}(\tilde{\mu}),\tilde{\mu}) = 0$, or equivalently $\|\mathsf{T}(\tilde{\mu}) - \tilde{\mu}\|_\tv \leq \eps$. Consequently, Lemma~\ref{lem:bounded-modulus-generalized-resilience} implies that this estimator matches the performance of MDE under the TV norm, i.e., $\mathsf{T} = \mathsf{T}_{[\cG,{\|\cdot\|_\tv}]}$, which recovers the standard approach to population-limit robust statistics outlined in \cref{sec:prelims}.

\medskip
To quantify the risk, we first note that if $\cG$ is unrestricted, $R_{p,\infty}(\cG,\eps)$ may be unbounded. Therefore, we next introduce a natural distributional assumption under which accurate estimation is possible. The definition essentially  instantiates generalized resilience \citep{zhu2019resilience} to our setting.

\vspace{-1mm}

\begin{definition}[Resilience w.r.t.\ $\Wp$]
Let $0 \leq \eps < 1$ and $\rho \geq 0$. A distribution $\mu \in \cP(\cX)$ 
is called $(\rho,\eps)$-\emph{resilient} w.r.t.\ $\Wp$ if $\Wp(\mu,\mu') \leq \rho$ for all $\mu' \in \cP(\cX)$ such that $\mu' \leq \frac{1}{1-\eps}\mu$. 
Write $\cW_p(\rho,\eps)$ for the family of all such distributions.
\end{definition}

\vspace{-1mm}

That is, $\mu$ belongs to $\cW_p(\rho,\eps)$ if deleting an $\eps$-fraction of mass from $\mu$ and renormalizing (equivalently, conditioning on an event with probability $1-\eps$) leads to change at most $\rho$ in $\Wp$. In particular, this implies that $\mu$ has finite $p$th moments; otherwise, such change could be infinite. Lemma~\ref{lem:bounded-modulus-generalized-resilience} then gives the following.

\vspace{-1mm}

\begin{proposition}[Bounded risk under resilience] 
\label{prop:minimax-risk}
Let $0 \leq \eps < 1/2$ and $\cG \subseteq \cW_p(\rho,2\eps)$. Then
$R_{p,\infty}(\cG,\eps) \leq 2 \rho$, and this risk is achieved by the estimators $\mathsf{T}_{[\cG,\RWp]}$ and $ \mathsf{T}_{[\cG,\|\cdot\|_\tv]}$.
\end{proposition}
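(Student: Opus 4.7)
The plan is to reduce both MDE estimators to the same modulus-of-continuity bound via the standard template outlined after \cref{lem:bounded-modulus-generalized-resilience}: establish $\mathfrak{m}_{\Wp}(\cG,\|\cdot\|_\tv,2\eps) \leq 2\rho$ from resilience, then show each estimator outputs $\nu$ with $\|\nu - \tilde{\mu}\|_\tv \leq \eps$ so that $\|\mu - \nu\|_\tv \leq 2\eps$ by the TV triangle inequality and the modulus bound closes the argument.

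For the modulus bound, since $\Wp$ is a metric on $\cP_p(\cX)$ (so it obeys the triangle inequality) and $\cG \subseteq \cW_p(\rho,2\eps)$, the claim is an instance of \cref{lem:bounded-modulus-generalized-resilience} with $\mathsf{D} = \Wp$. Explicitly, for $\mu,\nu \in \cG$ with $\|\mu-\nu\|_\tv \leq 2\eps$, the renormalized common part $\kappa \coloneqq (\mu \land \nu)/(\mu \land \nu)(\cX)$ satisfies $\kappa \leq \mu/(1-2\eps)$ and $\kappa \leq \nu/(1-2\eps)$, so $(\rho,2\eps)$-resilience of $\mu$ and $\nu$ gives $\Wp(\mu,\kappa), \Wp(\nu,\kappa) \leq \rho$, and the triangle inequality yields $\Wp(\mu,\nu) \leq 2\rho$.

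The TV-MDE estimator $\nu = \mathsf{T}_{[\cG,\|\cdot\|_\tv]}(\tilde{\mu})$ satisfies $\|\nu - \tilde{\mu}\|_\tv \leq \|\mu - \tilde{\mu}\|_\tv \leq \eps$ by optimality and $\mu \in \cG$, closing that case. For the $\RWp$-MDE estimator, I would exploit the equivalence $\RWp(\alpha,\beta) = 0 \Leftrightarrow \|\alpha-\beta\|_\tv \leq \eps$ for $\alpha,\beta \in \cP(\cX)$. The backward direction is direct: if $\|\alpha - \beta\|_\tv \leq \eps$, the common lower envelope $\alpha \land \beta$ has mass at least $1-\eps$, so any sub-measure $\sigma \leq \alpha \land \beta$ with $\sigma(\cX) = 1-\eps$ furnishes admissible witnesses $\mu' = \nu' = \sigma$ in \eqref{eq:RWp} realizing zero transport cost. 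The forward direction is the contrapositive of the strict-positivity assertion in \cref{prop:RWp-eps-dependence}. Combined with $\mu \in \cG$ and $\|\mu-\tilde{\mu}\|_\tv \leq \eps$, these give $\inf_{\nu \in \cG} \RWp(\tilde{\mu},\nu) = 0$, so any minimizer $\nu^\star = \mathsf{T}_{[\cG,\RWp]}(\tilde{\mu})$ satisfies $\RWp(\tilde{\mu},\nu^\star) = 0$ and hence $\|\tilde{\mu} - \nu^\star\|_\tv \leq \eps$, after which the modulus bound completes the argument.

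The main obstacle is truly only the equivalence $\RWp = 0 \Leftrightarrow \|\cdot\|_\tv \leq \eps$, and that is essentially free from \cref{prop:RWp-eps-dependence}. The rest is a direct instantiation of the resilience/MDE template recalled in the preliminaries, with no OT-specific subtleties beyond the use of the common lower envelope as a zero-cost primal witness.
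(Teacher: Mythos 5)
Your proposal is correct and follows essentially the same route as the paper: observe that $\inf_{\nu\in\cG}\RWp(\tilde{\mu},\nu)\leq\RWp(\tilde{\mu},\mu)=0$, use the equivalence $\RWp(\alpha,\beta)=0\Leftrightarrow\|\alpha-\beta\|_\tv\leq\eps$ (established in the proof of \cref{prop:RWp-eps-dependence}) to conclude $\|\mathsf{T}(\tilde{\mu})-\tilde{\mu}\|_\tv\leq\eps$, and then close via the TV triangle inequality and the resilience-based modulus bound of \cref{lem:bounded-modulus-generalized-resilience}. The only cosmetic difference is that you prove the modulus bound inline via the renormalized common part $\mu\land\nu$, whereas the paper cites it from \citet{zhu2019resilience}; both are sound.
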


\vspace{-1mm}

While resilience w.r.t.\ $\Wp$ is a high-level condition, we can precisely quantify it for many families of interest. Moreover, the resulting upper risk bounds from Proposition~\ref{prop:minimax-risk} are often tight. In particular, for $\sigma \geq 0$ and $q \geq 1$, we consider the class $\cG_q(\sigma)$ of distributions $\mu \in \cP(\cX)$ such that $\E_\mu[\mathsf{d}(X,x_0)^q] \leq \sigma^q$ for some $x_0 \in \cX$, i.e., those with centered absolute $q$th moments bounded by $\sigma$.
When $\cX = \R^d$, we further consider the following standard families of sub-Gaussian distributions and those with a bounded covariance matrix, namely:
\begin{equation}
\label{eq:moment-bound-classes}
\cG_\mathrm{subG}(\sigma) \coloneqq \left\{ \mu \in \cP(\R^d) : \E_\mu\left[ \exp(|\langle\theta,X - \E_\mu[X]\rangle|^2/\sigma^2)\right] \leq 2  \:\: \forall \theta \in \unitsph \right\},
\end{equation}
\begin{equation*}
    \quad \cG_\mathrm{cov}(\sigma) \coloneqq \left\{ \mu \in \cP(\R^d) : \Sigma_\mu \preceq \sigma^2 \I_d \right\}.
\end{equation*}
We now state tight population-limit minimax risk bounds for these concrete classes.
\vspace{-1mm}

\begin{theorem}[Population-limit minimax risk]
\label{thm:concrete-minimax-risk-bounds}
For $0 \leq \eps \leq 0.49$,\footnote{The stated risk bounds hold for any $\eps$ bounded away from 1/2, with inverse dependence on $1-2\eps$.} we have
\vspace{-1mm}
\begin{align*}
    R_{p,\infty}(\cG,\eps) \asymp \begin{cases}
        \sigma \eps^{\frac{1}{p}-\frac{1}{q}}, & \text{\!\!if $\cG = \cG_q(\sigma)$ for $q\!\geq\!p$ and $\,\exists\, x,y\in\cX\!:\! \mathsf{d}(x,y)\!=\!\sigma \eps^{-\frac 1q}$}\\
        \sigma \sqrt{d + p + \log\left(\frac{1}{\eps}\right)}\,\eps^{\frac 1p}, & \text{\!\!if $\cX = \R^d$ and $\cG = \cG_\mathrm{subG}(\sigma)$}\\
        \sigma \sqrt{d}\, \eps^{\frac{1}{p}-\frac{1}{2}}, & \text{\!\!if $\cX = \R^d$, $p \leq 2$, and $\cG = \cG_\mathrm{cov}(\sigma)$}\\
    \end{cases}
\vspace{-1mm}
\end{align*}
and these risks are achieved by the estimators $\mathsf{T}_{[\cG,\RWp]}$ and $ \mathsf{T}_{[\cG,\|\cdot\|_\tv]}$.
\end{theorem}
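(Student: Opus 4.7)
The plan is to establish matching upper and lower bounds for each family via a uniform template. The upper bounds will follow from Proposition~\ref{prop:minimax-risk} once we verify that each $\cG$ is contained in a resilience class $\cW_p(\rho, 2\eps)$ with $\rho$ matching the target rate. The lower bounds will come from the standard two-point testing argument: if $\mu,\nu \in \cG$ satisfy $\|\mu-\nu\|_\tv \leq \eps$, then $\nu$ is a valid $\eps$-contamination of both $\mu$ and itself, so any estimator $\mathsf{T}$ must incur error at least $\Wp(\mu,\nu)/2$ on one of them by the triangle inequality.

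For the resilience upper bound, I would fix $\mu \in \cG$ and $\mu' \in \cP(\cX)$ with $\mu' \leq (1-2\eps)^{-1} \mu$, and decompose $\mu = (1-2\eps)\mu' + 2\eps \mu''$ for some $\mu'' \in \cP(\cX)$. The transport plan that holds the shared mass $(1-2\eps)\mu'$ in place and moves $2\eps \mu''$ onto $2\eps \mu'$ through a reference point $x_0$, together with $(a+b)^p \leq 2^{p-1}(a^p+b^p)$, reduces matters to bounding $\int \mathsf{d}(\cdot, x_0)^p$ against both $\mu'$ and $\mu''$. The $\mu'$ integral is controlled directly by the ambient moment assumption via $\mu' \leq (1-2\eps)^{-1} \mu$, while the $\mu''$ integral is maximized (subject to $2\eps \mu'' \leq \mu$) by the $2\eps$-tail of $\mu$ with respect to $\mathsf{d}(\cdot,x_0)$. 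A layer-cake computation then recovers the three rates: Markov's inequality for $\cG_q(\sigma)$, the sub-Gaussian tail estimate for $\cG_\mathrm{subG}(\sigma)$, and Chebyshev's inequality for $\cG_\mathrm{cov}(\sigma)$. The restriction $p<2$ in the covariance case is needed for the tail integral to converge.

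For the matching lower bounds, I would construct explicit two-point families in each case. For $\cG_q(\sigma)$, take $\mu = (1-\eps)\delta_{x_0} + \eps \delta_y$ and $\nu = \delta_{x_0}$ with $\mathsf{d}(x_0,y) = \sigma \eps^{-1/q}$, as supplied by the diameter hypothesis: both measures lie in $\cG_q(\sigma)$, their TV distance is $\eps$, and $\Wp(\mu,\nu) = \sigma \eps^{1/p-1/q}$. For $\cG_\mathrm{cov}(\sigma)$, spread $\eps$-mass uniformly over the $d$ scaled basis vectors $\{r e_i\}_{i=1}^d$ with $r \asymp \sigma\sqrt{d/\eps}$, against a base point mass at the origin; this keeps $\Sigma_\mu \preceq \sigma^2 \I_d$ while giving $\Wp \asymp \sigma\sqrt{d}\,\eps^{1/p-1/2}$. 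For $\cG_\mathrm{subG}(\sigma)$, the same idea with $r \asymp \sigma\sqrt{d+\log(1/\eps)}$ works, but the atomic perturbation must be smoothed against a sub-Gaussian base (e.g., a small Gaussian) to land inside the class while preserving the $\Wp$ scaling.

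The main obstacle is the sub-Gaussian resilience bound: tracking the joint dependence on $d$, $p$, and $\log(1/\eps)$ in the tail integral requires careful bookkeeping to recover the exact factor $\sqrt{d+p+\log(1/\eps)}$, since both the tail radius (which contributes $\sqrt{d+\log(1/\eps)}$) and the inner $p$th-moment scale (which contributes $\sqrt{d+p}$ for Gaussian-like tails) enter the final rate. A secondary subtlety is ensuring the dimension-dependent lower-bound constructions genuinely belong to $\cG_\mathrm{cov}$ and $\cG_\mathrm{subG}$ while preserving the $\sqrt{d}$ scaling; for the sub-Gaussian class, pure point masses are inadequate and one must smooth against an appropriate continuous base, verifying the moment generating function bound is preserved.
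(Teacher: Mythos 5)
Your proposal is correct and follows essentially the same route as the paper: the upper bounds come from verifying $\Wp$-resilience by holding the shared mass in place and routing the deleted $\eps$-fraction through a reference point $x_0$, which reduces to bounding conditional $p$th moments of $\mathsf{d}(\cdot,x_0)$ on $\eps$-tails (the paper packages this as mean resilience of $\mathsf{d}(X,x_0)^p$ plus the inclusions $\cG_\mathrm{cov}(\sigma)\subseteq\cG_2(\sqrt{d}\sigma)$ and $\cG_\mathrm{subG}(\sigma)\subseteq\cG_{p\lor\log(1/\eps)}(\cdot)$, where you unpack it into direct layer-cake computations), and the lower bounds use the same two-point modulus argument with the same constructions, including your correct observation that the sub-Gaussian perturbation must be Gaussian-smoothed rather than atomic.
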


\vspace{-1mm}

For the upper bounds, we show in \cref{prf:concrete-minimax-risk-bounds} that resilience w.r.t.\ $\Wp$ is implied by mean resilience of the $p$th power of the metric $\mathsf{d}$ and then apply Proposition~\ref{prop:minimax-risk}. This yields the risk bound for $\cG_q(\sigma)$ and further implies the two latter bounds by observing that $\cG_\mathrm{subG}(\sigma) \subseteq \cG_{p \lor \log(1/\eps)}\bigl(\sqrt{d + p \lor \log(1/\eps)}\bigr)$ and $\cG_\mathrm{cov}(\sigma) \subseteq \cG_{2}\bigl(\sqrt{d}\sigma\bigr)$.
For lower bounds, we use that $R_{p,\infty}(\cG,\eps) \geq \Wp(\alpha,\beta)/2$ for any $\alpha,\beta \in \cG$ with $\|\alpha - \beta\|_\tv \leq \eps$. Indeed, even if one knows that $\mu \in \{\alpha,\beta\} \subset \cG$, there is no way to distinguish between these cases if $\tilde{\mu} = \alpha$. The best one can do in this case is to estimate their Wasserstein barycenter, achieving error $\Wp(\alpha,\beta)/2$.
When $\cG = \cG_q(\sigma)$, for example, consider $\mu = \delta_{x}, \nu = (1-\eps)\delta_x + \eps \delta_{y}$ for some $x,y \in \cX$. Then we have $\E_\mu[\mathsf{d}(X,x)^q] = 0$, $\E_\nu[\mathsf{d}(X,x)^q] = \eps \mathsf{d}(x,y)^q$, and $\Wp(\mu,\nu) = \eps^{1/p} \mathsf{d}(x,y)$. By selecting $d(x,y) = \sigma \eps^{-1/q}$, the maximal distance for which $\mu,\nu \in \cG_q(\sigma)$, we obtain the tight lower bound of $R_{p,\infty}(\cG_q(\sigma),\eps) \gtrsim \Wp(\mu,\nu) = \sigma \eps^{1/p - 1/q}$. Lower bounds for $\cG_\mathrm{subG}(\sigma)$ and $\cG_\mathrm{cov}(\sigma)$ follow by considering appropriate Gaussian mixtures.

\vspace{-1mm}
\begin{remark}[Comparison to mean resilience]
When $\cX = \R^d$, the corresponding rates for mean resilience and population-limit minimax risk for robust mean estimation under the sub-Gaussian and bounded covariance classes are  $O\big(\eps\sqrt{\log(1/\eps)}\big)$ and $O(\sqrt{\eps})$, respectively (see, e.g., \citealp{steinhardt2018resilience}). These match our bounds for $\Wone$ up to $\sqrt{d}$ dependence, which we interpret as reflecting the high-dimensional structure of the Wasserstein metric. In follow-up work, \citet{nietert2022sliced} showed that mean resilience coincides with resilience under max-sliced $\mathsf{W}_1$, which measures $\Wone$ between 1-dimensional projections of the data.
\end{remark}

\begin{remark}[Unknown contamination fraction]
If $\eps$ is unknown, $\mathsf{T}_{[\cG,\|\cdot\|_\tv]}$ can still be applied. Alternatively, one may take $\hat{\eps} = \min \{ \tau \in [0,1) : \exists \nu \in \cG \text{ s.t. } \|\nu - \tilde{\mu}\|_\tv \leq \tau\}$ (ensuring $\hat{\eps} \leq \eps$) and employ $\mathsf{T} = \mathsf{T}_{[\cG,\Wp^{\hat{\eps}}]}$. By construction, the returned $\nu = \mathsf{T}(\tilde{\mu})$ satisfies $\|\nu - \mu\|_\tv \leq \|\nu - \tilde{\mu}\|_\tv + \|\tilde{\mu} - \mu\|_\tv \leq 2\eps$ and thus achieves the risk bounds of Theorem~\ref{thm:concrete-minimax-risk-bounds}.
\end{remark}

\subsection{Finite-Sample Guarantees}
\label{ssec:finite-sample}
We now return to the finite-sample setting. The following result provides a general lower bound and a resilience-based upper bound for minimax risk via MDE. Recall that $\mathsf{T}_{[\cG,\RWp]}^\delta$ returns any distribution which solves the $\RWp$ MDE problem over $\cG$ up to additive error $\delta$.

\vspace{-1mm}

\begin{theorem}[Finite-sample minimax risk]
\label{thm:finite-sample-minimax-risk}
For $0 \leq \eps \leq 0.49$ and $\cG \subseteq \cP(\cX)$, we have
\vspace{-2mm}
\begin{equation*}
    R_{p,n}(\cG,\eps) \gtrsim R_{p,\infty}(\cG,\eps/4) + R_{p,n}(\cG,0).
\end{equation*}
For $\mu \in \cG \subseteq \cW_p(\rho,2\eps)$, the risk of the minimum distance estimator $\mathsf{T}_{[\cG,\RWp]}^\delta$ is bounded as
\vspace{-1mm}
\begin{equation*}
    R_{p,n}(\mathsf{T}_{[\cG,\RWp]}^\delta,\mu,\eps) \lesssim  \rho + \delta + \E[\Wp(\hat{\mu}_n,\mu)].
\vspace{-1mm}
\end{equation*}
Consequently, taking $\delta = 0$, we have $R_{p,n}(\cG,\eps) \lesssim \rho + \sup_{\mu \in \cG}\E[\Wp(\hat{\mu}_n,\mu)]$. %
\end{theorem}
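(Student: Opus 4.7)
The inequality $R_{p,n}(\cG,\eps)\geq R_{p,n}(\cG,0)$ is immediate since the identity adversary lies in $\cM^\mathrm{adv}_n(\mu,\eps)$. For $R_{p,n}(\cG,\eps)\gtrsim R_{p,\infty}(\cG,\eps/5)$ I plan a two-point indistinguishability argument. Given $\mu,\nu\in\cG$ with $\|\mu-\nu\|_\tv$ comparable to $\eps$ (the precise constant chosen so that the Chernoff bound below yields probability $\geq 9/10$), sample $(X_i,Y_i)_{i=1}^n$ i.i.d.\ from an optimal TV-coupling of $\mu$ and $\nu$. Under ``truth $=\mu$'' the adversary replaces each $X_i$ by $Y_i$ whenever they differ; the number of modifications is $\operatorname{Bin}(n,\|\mu-\nu\|_\tv)$, and this is $\leq \eps n$ with probability $\geq 9/10$ (the trivial bound $R_{p,n}(\cG,\eps)\geq R_{p,n}(\cG,0)$ covers the regime of $n$ too small for concentration). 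Under ``truth $=\nu$'' no corruption is needed. Both scenarios yield the same observed sequence, so the estimator's output is at $\Wp$-distance $\geq \Wp(\mu,\nu)/2$ from at least one of $\mu,\nu$ by the standard $\Wp$-triangle inequality. Optimizing over pairs and invoking the standard two-point comparison between the $\Wp$-modulus of continuity of $\cG$ under $\|\cdot\|_\tv$ and $R_{p,\infty}$ yields the claim.

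\textbf{Reduction to $\Wp^{2\eps}$.} Let $\nu=\mathsf{T}_{[\cG,\RWp]}^\delta(\tilde{\mu}_n)$. By $\delta$-optimality and $\mu\in\cG$, we have $\RWp(\tilde{\mu}_n,\nu)\leq\RWp(\tilde{\mu}_n,\mu)+\delta$. Since $\|\tilde{\mu}_n-\hat{\mu}_n\|_\tv\leq\eps$, deleting the disagreeing atoms from both empirical measures gives $\RWp(\tilde{\mu}_n,\hat{\mu}_n)=0$. Applying the approximate triangle inequality (\cref{prop:RWp-triangle-inequality}) with $(\eps_1,\eps_2)=(\eps,0)$ yields $\RWp(\tilde{\mu}_n,\mu)\leq\Wp(\hat{\mu}_n,\mu)$; a second application with $(\eps_1,\eps_2)=(\eps,\eps)$ gives
\[
\Wp^{2\eps}(\nu,\mu) \leq \RWp(\nu,\tilde{\mu}_n) + \RWp(\tilde{\mu}_n,\mu) \leq 2\Wp(\hat{\mu}_n,\mu)+\delta.
\]

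\textbf{Resilience step, conclusion, and main obstacle.} The critical step---and the main obstacle---is converting this partial-OT bound into a genuine $\Wp(\nu,\mu)$ bound, since in general $\Wp^{2\eps}$ can vanish while $\Wp$ is large. Select (near-)optimal $\nu'\leq\nu$, $\mu'\leq\mu$ each of mass $1-2\eps$ achieving $\Wp^{2\eps}(\nu,\mu)$ and renormalize $\bar\nu:=\nu'/(1-2\eps)$, $\bar\mu:=\mu'/(1-2\eps)$. Then $\bar\nu\leq\nu/(1-2\eps)$ and $\bar\mu\leq\mu/(1-2\eps)$; since $\mu,\nu\in\cG\subseteq\cW_p(\rho,2\eps)$, resilience gives $\Wp(\mu,\bar\mu),\Wp(\nu,\bar\nu)\leq\rho$. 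A rescaling of couplings yields $\Wp(\bar\nu,\bar\mu)=(1-2\eps)^{-1/p}\Wp(\nu',\mu')$, so the ordinary $\Wp$-triangle inequality combined with $\eps\leq 0.49$ (which absorbs $(1-2\eps)^{-1/p}\leq 50$ into the constant) gives
\[
\Wp(\nu,\mu) \leq 2\rho + (1-2\eps)^{-1/p}\Wp^{2\eps}(\nu,\mu) \lesssim \rho + \Wp(\hat{\mu}_n,\mu)+\delta.
\]
A Markov bound on the nonnegative variable $\Wp(\hat{\mu}_n,\mu)$ upgrades this to $\Wp(\nu,\mu)\lesssim\rho+\E[\Wp(\hat{\mu}_n,\mu)]+\delta$ with probability $\geq 9/10$, establishing the MDE risk bound. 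The ``consequently'' statement then follows by taking $\delta=0$ (or sending $\delta\to 0$ for an approximate minimizer) and supremizing over $\mu\in\cG$. Everything apart from the renormalization is standard manipulation; what makes the argument close is that $\cG\subseteq\cW_p(\rho,2\eps)$ applies to both $\mu$ \emph{and} the MDE output $\nu$, precisely because $\nu\in\cG$ by construction.
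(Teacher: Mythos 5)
Your upper bound is essentially the paper's own argument: the same two applications of \cref{prop:RWp-triangle-inequality} to get $\Wp^{2\eps}(\nu,\mu) \leq 2\Wp(\hat{\mu}_n,\mu) + \delta$, and your "resilience step" is exactly the paper's \cref{lem:RWp-modulus} (renormalize the optimal sub-measures, invoke $(\rho,2\eps)$-resilience of both $\mu$ and the MDE output, absorb $(1-2\eps)^{-1/p}$ into the constant), followed by Markov. Your lower bound, however, takes a genuinely different route. The paper proves $R_{p,\infty}(\cG,\eps/5) \leq 2R_{p,n}(\cG,\eps)$ by an operational reduction: it converts any $n$-sample estimator into a population-limit estimator (simulating i.i.d.\ draws from $\tilde{\mu}$, coupling so the contamination fraction exceeds $\eps$ with probability $\leq 1/5$, and outputting a deterministic near-center of the estimator's distribution). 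You instead run a two-point indistinguishability argument to lower bound $R_{p,n}(\cG,\eps)$ by the modulus $\mathfrak{m}_{\Wp}(\cG,\|\cdot\|_\tv, 2\eps/5)$ and then compare to $R_{p,\infty}(\cG,\eps/5)$ via the Donoho MDE upper bound. This works and is arguably more elementary, but the paper's reduction is cleaner in that it relates the two risks directly without passing through the modulus.

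One spot in your lower bound needs repair. You invoke a Chernoff bound to make the $\operatorname{Bin}(n,\|\mu-\nu\|_\tv)$ count of modifications fall below $\eps n$ with probability $\geq 9/10$, and claim the trivial bound $R_{p,n}(\cG,\eps)\geq R_{p,n}(\cG,0)$ "covers" the regime where $\eps n$ is too small for concentration. That fallback is not justified for arbitrary $\cG$: there is no reason $R_{p,n}(\cG,0)$ should dominate $R_{p,\infty}(\cG,\eps/5)$ when $\eps n = O(1)$. Fortunately you do not need concentration at all: Markov's inequality gives $\PP\bigl(\operatorname{Bin}(n,2\eps/5) > \eps n\bigr) \leq 2/5$ for every $n$, so the corruption succeeds with probability $\geq 3/5$, and since the two failure events are complementary whenever $\tau < \Wp(\mu,\nu)/2$, at least one of the two hypotheses has failure probability $\geq 3/10 > 1/10$, which is all the definition of $R_{p,n}$ requires. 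With that substitution your argument closes for all $n$.
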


The lower bound formalizes the intuitive notion that finite-sample robust estimation is harder than both population-limit robust estimation and finite-sample standard estimation. The upper bound shows that the risk bound of Proposition~\ref{prop:minimax-risk} translates to the finite-sample setting up to empirical approximation error.
For the distribution families considered in \cref{thm:concrete-minimax-risk-bounds}, these risk bounds match up to the gap between $R_{p,n}(\cG,0)$ and $\sup_{\mu \in \cG}\E[\Wp(\hat{\mu}_n,\mu)]$, i.e., up to the sub-optimality of plug-in estimation under $\Wp$. Under mild conditions given below, both of these quantities scale at rate $n^{-1/d}$.\footnote{Under the further assumption of smooth densities, the plug-in estimator is known to give a suboptimal rate \citep{nilesweed22minimax}; optimization for this regime is beyond the scope of this work.}

\begin{corollary}[Concrete finite-sample risk bounds]
\label{cor:concrete-finite-sample-minimax-risk-bounds}
Let $0 \leq \eps \leq 0.49$, $q > p$, $\cX = \R^d$ for $d > 2p$, and $\cG \in \{\cG_q(\sigma),\cG_\mathrm{subG}(\sigma),\cG_\mathrm{cov}(\sigma)\}$. If $\cG = \cG_q(\sigma)$, further suppose that $d > (1/p - 1/q)^{-1}$, while if $\cG = \cG_\mathrm{cov}(\sigma)$, assume that $p < 2$ and $d > (1/p - 1/2)^{-1}$. Then there exist constants $C_1,C_2 > 0$ depending only on $p$, $q$, and $d$ such that 
\begin{equation*}
    R_{p,\infty}(\cG,\eps) + C_1 \sigma n^{-\frac 1d} \lesssim R_{p,n}(\cG,\eps) \lesssim R_{p,\infty}(\cG,\eps) + C_2 \sigma n^{-\frac 1d},
\end{equation*}
and the upper risk bound is achieved by the minimum distance estimator $\mathsf{T}_{[\cG,\RWp]}$.
\end{corollary}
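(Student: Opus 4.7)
The plan is to assemble the corollary by combining the general finite-sample risk bound in \cref{thm:finite-sample-minimax-risk} with (i) the concrete population-limit rates in \cref{thm:concrete-minimax-risk-bounds} and (ii) classical minimax rates for empirical estimation under $\Wp$. Specifically, \cref{thm:finite-sample-minimax-risk} immediately gives
$$R_{p,\infty}(\cG,\eps/5) + R_{p,n}(\cG,0) \lesssim R_{p,n}(\cG,\eps) \leq R_{p,n}(\mathsf{T}_{[\cG,\RWp]},\cG,\eps) \lesssim \rho + \sup_{\mu \in \cG}\E[\Wp(\hat{\mu}_n,\mu)]$$
whenever $\cG \subseteq \cW_p(\rho,2\eps)$. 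Matching the two sides therefore reduces to pinning down (a) the resilience radius $\rho$ for each family, (b) the expected empirical convergence rate $\sup_{\mu \in \cG}\E[\Wp(\hat{\mu}_n,\mu)]$, and (c) a matching lower bound for the clean-sample minimax rate $R_{p,n}(\cG,0)$.

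For (a), the proof of \cref{thm:concrete-minimax-risk-bounds} already exhibits $\rho \asymp R_{p,\infty}(\cG,\eps)$ for each of the three families, via the containments $\cG_{\mathrm{subG}}(\sigma) \subseteq \cG_{p\vee\log(1/\eps)}\bigl(\sqrt{d + p\vee\log(1/\eps)}\,\sigma\bigr)$ and $\cG_{\mathrm{cov}}(\sigma) \subseteq \cG_2(\sqrt{d}\sigma)$, together with the reduction of $\Wp$-resilience to mean-resilience of $\mathsf{d}^p$; I would invoke this directly. For (b), I would apply the Fournier--Guillin rate for empirical convergence under $\Wp$: for a measure $\mu$ on $\R^d$ with $d > 2p$ and finite $q$th moment with $q > dp/(d-p)$, one has $\E[\Wp(\hat{\mu}_n,\mu)] \lesssim \sigma n^{-1/d}$. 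The condition $q > dp/(d-p)$ rearranges exactly to $d > (1/p - 1/q)^{-1}$, matching the hypothesis for $\cG_q(\sigma)$; for $\cG_{\mathrm{cov}}(\sigma)$ we set $q = 2$ and recover $d > 2p/(2-p)$; and $\cG_{\mathrm{subG}}(\sigma)$ has all moments finite, so $d > 2p$ alone suffices.

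For (c), the lower bound $R_{p,n}(\cG,0) \gtrsim \sigma n^{-1/d}$ is the classical curse-of-dimensionality lower bound for empirical $\Wp$ estimation. I would obtain it by restricting attention to a hard sub-family of $\cG$ (e.g., distributions supported in a $\sigma$-ball of $\R^d$, which lies in every class considered here under the stated moment/tail constraints) and invoking standard packing arguments---Assouad's lemma applied to a grid of perturbed uniform measures---to lower bound the $\Wp$-minimax rate. The polynomial-in-$\eps$ form of the rates in \cref{thm:concrete-minimax-risk-bounds} also lets us absorb the factor $1/5$, so $R_{p,\infty}(\cG,\eps/5) \asymp R_{p,\infty}(\cG,\eps)$. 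Combining (a)--(c) produces the matching upper and lower bounds claimed.

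The main obstacle is the bookkeeping of translating the Fournier--Guillin moment conditions precisely into the stated hypotheses on $d$, $p$, and $q$; in particular, verifying that each of the three families simultaneously satisfies $d > 2p$ and the relevant $q$-condition so that the same $\sigma n^{-1/d}$ rate applies on both sides. Once this alignment is confirmed, the corollary follows by simply juxtaposing the three inputs, with no additional probabilistic argument required beyond what \cref{thm:finite-sample-minimax-risk} already supplies.
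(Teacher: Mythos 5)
Your proposal is correct and follows essentially the same route as the paper: decompose via \cref{thm:finite-sample-minimax-risk}, identify $\rho \asymp R_{p,\infty}(\cG,\eps)$ from \cref{thm:concrete-minimax-risk-bounds}, bound $\sup_{\mu\in\cG}\E[\Wp(\hat{\mu}_n,\mu)] \lesssim \sigma n^{-1/d}$ under the stated moment conditions (the paper cites Theorem 3.1 of Lei (2020) rather than Fournier--Guillin, but the moment-condition arithmetic you perform is exactly the right check), and lower bound $R_{p,n}(\cG,0) \gtrsim \sigma n^{-1/d}$ (the paper cites Theorem 1 of Niles-Weed and Rigollet with smoothness $s=0$ rather than re-deriving it via Assouad). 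Your observation that $R_{p,\infty}(\cG,\eps/5)\asymp R_{p,\infty}(\cG,\eps)$ because the population-limit rates are polynomial in $\eps$ is a detail the paper leaves implicit but is indeed needed.
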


We view the upper bound of \cref{thm:finite-sample-minimax-risk} as a primary contribution of this work. It has two merits which are essential for meaningful practical guarantees%
. First, we only need to perform approximate MDE, as slack of $\delta$ in optimization only worsens our final bound by $O(\delta)$. Second, the empirical approximation term depends only on the clean measure $\mu$ and can be controlled even under heavy-tailed contamination. If $\mu$ %
has upper $p$-Wasserstein dimension $d_\star$ in the sense of \cite{weed2019} then the $n^{-1/d}$ rate of Corollary~\ref{cor:concrete-finite-sample-minimax-risk-bounds} can be improved to $n^{-1/d_\star}$. We sketch the theorem proof, deferring full details to \cref{prf:finite-sample-minimax-risk}.
\medskip

\begin{proof}[Proof sketch of \cref{thm:finite-sample-minimax-risk}]
Our proof follows the ``weaken the distance'' approach to robust statistics formalized in \cite{zhu2019resilience}. We observe that analysis of MDE under the TV distance, as applied to obtain population-limit upper risk bounds, fails in the finite-sample setting because $\|\hat{\mu}_n - \mu\|_\tv$ may be large.\footnote{E.g., if $\mu$ has a Lebesgue density then $\|\hat{\mu}_n - \mu\|_\tv=1$, for all $n$.} In particular, $\tilde{\mu}_n$ cannot be viewed as a small TV perturbation of $\mu$, even though $\|\tilde{\mu}_n - \hat{\mu}_n\|_\tv \leq \eps$. On the other hand, by Proposition~\ref{prop:RWp-structural-properties}, we can bound $\RWp(\tilde{\mu}_n,\mu) \leq  \RWp(\tilde{\mu}_n,\hat{\mu}_n) + \Wp(\hat{\mu}_n,\mu) = \Wp(\hat{\mu}_n,\mu)$, which is small with high probability; this motivates our choice of approximate MDE under $\RWp$. In order to quantify its performance, we bound the relevant modulus of continuity.

\begin{lemma}[$\bm{\RWp}$ modulus of continuity]
\label{lem:RWp-modulus}
For $0 \leq \eps \leq 0.99$ and $\lambda \geq 0$, we have
\begin{align*}
    \sup_{\substack{\mu,\nu \in \cW_p(\rho,\eps) \\ \RWp(\mu,\nu) \leq \lambda}} \Wp(\mu,\nu) \lesssim \lambda + \rho.
\end{align*}
\end{lemma}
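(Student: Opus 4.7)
The plan is to exploit the resilience assumption to ``restore'' the mass removed by an $\RWp$ optimizer, and then conclude by the triangle inequality for $\Wp$. Fix $\mu,\nu \in \cW_p(\rho,\eps)$ with $\RWp(\mu,\nu) \leq \lambda$. By \cref{prop:minimizers}, the infimum in \eqref{eq:RWp} is attained, so we obtain $\mu' \leq \mu$ and $\nu' \leq \nu$ with $\mu'(\cX) = \nu'(\cX) = 1-\eps$ and $\Wp(\mu',\nu') \leq \lambda$.

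Next I would normalize: set $\bar\mu' \coloneqq \mu'/(1-\eps)$ and $\bar\nu' \coloneqq \nu'/(1-\eps)$, which are probability measures. Since $\mu' \leq \mu$, we have $\bar\mu' \leq \frac{1}{1-\eps}\mu$, and likewise $\bar\nu' \leq \frac{1}{1-\eps}\nu$. The hypothesis $\mu,\nu \in \cW_p(\rho,\eps)$ then yields
\begin{equation*}
\Wp(\mu,\bar\mu') \leq \rho \qquad \text{and} \qquad \Wp(\nu,\bar\nu') \leq \rho.
\end{equation*}
Moreover, via the standard rescaling of the OT objective for equal-mass measures, $\Wp(\bar\mu',\bar\nu')^p = (1-\eps)^{-1}\Wp(\mu',\nu')^p$, so $\Wp(\bar\mu',\bar\nu') \leq (1-\eps)^{-1/p}\lambda$.

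Combining these bounds via the $\Wp$ triangle inequality on probability measures gives
\begin{equation*}
\Wp(\mu,\nu) \leq \Wp(\mu,\bar\mu') + \Wp(\bar\mu',\bar\nu') + \Wp(\bar\nu',\nu) \leq 2\rho + (1-\eps)^{-1/p}\lambda,
\end{equation*}
and since $\eps \leq 0.99$ the factor $(1-\eps)^{-1/p}$ is bounded by an absolute constant, yielding the claim. The argument is essentially a two-line calculation once resilience is invoked; the only potential pitfall is verifying that the rescaled partial minimizers $\bar\mu',\bar\nu'$ lie in the feasible region $\{\kappa \in \cP(\cX) : \kappa \leq \frac{1}{1-\eps}\mu\}$ used to define $(\rho,\eps)$-resilience, and this follows immediately from $\mu' \leq \mu$ together with the choice of normalization constant $1-\eps$. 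There is no real obstacle here—the proof is a direct application of the definitions, provided one has the primal existence result from \cref{prop:minimizers} in hand.
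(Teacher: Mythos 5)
Your proof is correct and follows essentially the same route as the paper's: take the primal minimizers guaranteed by \cref{prop:minimizers}, normalize them to probability measures, apply the resilience hypothesis to each normalized minimizer, and conclude via the $\Wp$ triangle inequality together with the $(1-\eps)^{-1/p}$ rescaling factor. No gaps.
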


Now, to prove the upper bound, fix $\mu \in \cG$ with empirical measure $\hat{\mu}_n$, and consider any distribution $\tilde{\mu}_n$ with $\|\tilde{\mu}_n - \hat{\mu}_n\|_\tv \leq \eps$. Then, for $\mathsf{T} = \mathsf{T}_{[\cG,\RWp]}^\delta$, we have
\begin{align*} %
    \Wp^{2\eps}\bigl(\mu,\mathsf{T}(\tilde{\mu}_n)\bigr) &\leq \RWp(\mu,\tilde{\mu}_n) + \RWp\bigl(\tilde{\mu}_n,\mathsf{T}(\tilde{\mu}_n)\bigr) \tag*{(Proposition~\ref{prop:RWp-structural-properties})}\\
    &\leq 2\RWp(\mu,\tilde{\mu}_n) + \delta \tag*{(MDE guarantee and $\mu \in \cG$)}\\
    &\leq 2\RWp(\tilde{\mu}_n,\hat{\mu}_n) + 2\Wp(\hat{\mu}_n,\mu) + \delta. \tag*{(Proposition~\ref{prop:RWp-structural-properties})}\\
    &\leq 2\Wp(\hat{\mu}_n,\mu) + \delta. \tag*{($\|\tilde{\mu}_n - \hat{\mu}_n\|_\tv \leq \eps$)}
\end{align*}
Writing $\lambda_n = 2\Wp(\hat{\mu}_n,\mu) + \delta$, Lemma~\ref{lem:RWp-modulus} gives that
\vspace{-1mm}
\begin{equation*}
    \Wp\bigl(\mu,\mathsf{T}(\tilde{\mu}_n)\bigr) %
    \leq  \sup_{\substack{\alpha,\beta \in \cG \\ \Wp^{2\eps}(\alpha,\beta) \leq \lambda_n}} \Wp(\alpha,\beta) \lesssim \lambda_n+\rho,
\vspace{-1mm}
\end{equation*}
and so $R_{p,n}(\mathsf{T},\mu,\eps) \lesssim \rho + \delta + \E[\Wp(\hat{\mu}_n,\mu)]$, as desired. 
\smallskip

For the lower bound, we trivially have that $R_{p,n}(\cG,0) \leq R_{p,n}(\cG,\eps)$ since $\cM_n^\mathrm{adv}(\mu,0) \subseteq \cM_n^\mathrm{adv}(\mu,\eps)$ for all $\mu$, i.e., the adversary may opt to corrupt no samples. Moreover, in \cref{prf:finite-sample-minimax-risk}, we prove that $R_{p,\infty}(\cG,\eps/4) \leq 8R_{p,n}(\cG,\eps)$ by transforming any estimator for the $n$-sample problem into an estimator for the population-limit problem with similar guarantees under slightly less contamination. To do so, we leverage the fact that a population-limit estimator has access to unlimited i.i.d.\ samples from the contaminated distribution and can use these to simulate many copies of an $n$-sample estimator.
\end{proof}

\vspace{-5mm}

\begin{remark}[Exploiting low-dimensional structure]
If $\mu$ is supported on a (possibly unknown) $k$-dimensional subspace of $\R^d$, one can substitute MDE over a class $\cG$ with MDE over the projected family $\cG_k = \{ U_\sharp \nu : U \in \R^{k \times d}, UU^\top = I_k, \nu \in \cG \}$. For the families in \cref{thm:concrete-minimax-risk-bounds}, each occurrence of $d$ in the population-limit risk bounds can then be improved to $k$. Similarly, the $n^{-1/d}$ rate in the finite-sample risk bounds can be improved to $n^{-1/k}$. If $\mu$ admits no such structure, one may still enjoy such improved bounds if willing to relax the $\Wp$ estimation guarantee to a weaker guarantee under the $k$-dimensional max-sliced Wasserstein distance $\MWpk(\mu,\nu) \defeq \sup_{U \in \R^{k \times d}: U U^\top = I_k} \Wp(U_\sharp \mu,U_\sharp \nu)$. See Appendix~\ref{app:sliced} for further details.
\end{remark}

\begin{remark}[Comparison to MDE under TV]
Although $\|\mu - \hat{\mu}_n\|_\tv$ may be large for all $n$, MDE under the TV norm still provides strong robust estimation guarantees in the finite-sample regime via a slight modification to the analysis. For example, if $\mu \in \cG_q(\sigma)$ for $q > p$, then the empirical distribution $\hat{\mu}_n$ belongs to $\cG_q(20\sigma)$ with probability at least $0.95$ by Markov's inequality. Hence, our population-limit risk bound for the class $\cG_q(20\sigma)$ implies that the minimum distance estimator $\mathsf{T} = \mathsf{T}_{[\cG_q(20\sigma),\|\cdot\|_\tv]}$ satisfies the near-optimal guarantee
\begin{align*}
    \Wp\big(\mathsf{T}(\tilde{\mu}_n),\mu\big) &\leq \Wp\big(\mathsf{T}(\tilde{\mu}_n),\hat{\mu}_n\big) + \Wp(\hat{\mu}_n,\mu) \lesssim \sigma \eps^{\frac{1}{p} - \frac{1}{q}} + \E[\Wp(\hat{\mu}_n,\mu)]
\end{align*}
with probability at least $0.9$. Our population limit risks for $\cG_\mathrm{cov}(\sigma)$ and $\cG_\mathrm{subG}(\sigma)$ followed from the inclusions $\cG_\mathrm{cov}(\sigma) \subseteq \cG_2(\sigma \sqrt{d})$ and $\cG_\mathrm{subG}(\sigma) \subseteq \cG_{p \lor \log(1/\eps)}\bigl(O(\sqrt{d + p + \log(1/\eps)} \sigma)\bigr)$, so the approach above translates to these settings as well, matching the guarantees of Corollary~\ref{cor:concrete-finite-sample-minimax-risk-bounds}. However, MDE under TV lacks the connections of $\RWp$ to the theory (POT) and practice (WGAN) of OT in modern machine learning, which is a key motivation for our approach.
\end{remark}

\subsection{\texorpdfstring{$\bm{\RWp}$}{Robust Wp} as a Robust Estimator for \texorpdfstring{$\bm{\Wp}$}{Wp}}
\label{ssec:distance-estimation}

Finally, we take a detour from MDE and examine robust estimation of the distance $\Wp(\mu,\nu)$ itself given $\eps$-corrupted samples from $\mu$ and $\nu$. Formally, this is no harder than the previously considered task of robust distribution estimation under $\Wp$, since the triangle inequality gives $\left|\Wp\bigl(\mathsf{T}(\tilde{\mu}_n),\mathsf{T}(\tilde{\nu}_n)\bigr) - \Wp(\mu,\nu)\right| \leq \Wp\bigl(\mathsf{T}(\tilde{\mu}_n),\mu\bigr) + \Wp\bigl(\mathsf{T}(\tilde{\nu}_n),\nu\bigr)$.
Moreover, a simple modulus of continuity argument reveals that the minimax population-limit risk for this task (restricting $\mu$ and $\nu$ to some clean family $\cG$) coincides with that of the initial problem. While MDE thus solves the problem of \emph{robust distance estimation}, we next explore a simple alternative approach: compute $\RWp(\tilde{\mu}_n,\tilde{\nu}_n)$ and output it as the estimate of $\Wp(\mu,\nu)$. We now provide guarantees for this natural procedure.

\begin{theorem}[Finite-sample robust estimation of $\bm{\Wp}$]
\label{thm:robust-distance-estimation}
Let $0 \leq \eps < 1/3$ and write $\tau = 1 - (1-3\eps)^{1/p} \in [3\eps/p,3\eps]$. Let $\mu,\nu \in \cW_p(\rho,3\eps)$ have empirical measures $\hat{\mu}_n$ and $\hat{\nu}_n$, respectively. For any $\tilde{\mu}_n,\tilde{\nu}_n \in \cP(\cX)$ such that $\|\tilde{\mu}_n - \hat{\mu}_n\|_\tv, \|\tilde{\nu}_n - \hat{\nu}_n\|_\tv \leq \eps$, we have
\vspace{-1mm}
\begin{equation*}
\left|\RWp(\tilde{\mu}_n,\tilde{\nu}_n) - \Wp(\mu,\nu)\right| \leq 2\rho + \tau \Wp(\mu,\nu) + \Wp(\mu,\hat{\mu}_n) + \Wp(\nu,\hat{\nu}_n).
\vspace{-1mm}
\end{equation*}
In particular, if $\eps \leq 0.33$ and $\mu,\nu \in \cG$ for $\cG \in \{\cG_q(\sigma),\cG_\mathrm{subG}(\sigma),\cG_\mathrm{cov}(\sigma)\}$, $q \geq p$, then $\rho$ can be replaced with $R_{p,\infty}(\cG,\eps)$ in the bound above.
\end{theorem}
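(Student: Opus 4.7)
The plan is to establish the two-sided bound by treating each direction separately: the lower direction $\Wp(\mu,\nu) \le \RWp(\tilde{\mu}_n,\tilde{\nu}_n) + (\text{slack})$ via the approximate triangle inequality (\cref{prop:RWp-triangle-inequality}) combined with resilience of $\mu,\nu$, and the upper direction $\RWp(\tilde{\mu}_n,\tilde{\nu}_n) \le \Wp(\mu,\nu) + (\text{slack})$ via a primal construction. Both arguments use the mass-scaling identity $\Wp(\alpha\mu_1, \alpha\mu_2) = \alpha^{1/p}\Wp(\mu_1,\mu_2)$ for $\mu_1,\mu_2 \in \cP(\cX)$ and $\alpha \in (0,1]$; this produces the factor $(1-3\eps)^{1/p} = 1 - \tau$ when passing between sub-probability and probability distances.

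For the lower direction, I would apply \cref{prop:RWp-triangle-inequality} twice, inserting $\tilde{\mu}_n$ and $\tilde{\nu}_n$ between $\mu$ and $\nu$, to obtain
\[
\Wp^{3\eps}(\mu,\nu) \le \Wp^\eps(\mu, \tilde{\mu}_n) + \Wp^\eps(\tilde{\mu}_n, \tilde{\nu}_n) + \Wp^\eps(\tilde{\nu}_n, \nu).
\]
Each endpoint term collapses via a further application of \cref{prop:RWp-triangle-inequality} together with $\Wp^\eps(\hat{\mu}_n,\tilde{\mu}_n) = 0$, which follows from \cref{prop:RWp-eps-dependence} since $\|\hat{\mu}_n - \tilde{\mu}_n\|_\tv \le \eps$; this gives $\Wp^\eps(\mu,\tilde{\mu}_n) \le \Wp(\mu,\hat{\mu}_n)$ and symmetrically for $\nu$. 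Next, I would take optimal sub-measures $\mu^\circ \le \mu$, $\nu^\circ \le \nu$ of mass $1-3\eps$ achieving $\Wp^{3\eps}(\mu,\nu)$, normalize to probability measures $\bar{\mu}^\circ, \bar{\nu}^\circ$, and apply $(\rho,3\eps)$-resilience to get $\Wp(\mu,\bar{\mu}^\circ), \Wp(\nu,\bar{\nu}^\circ) \le \rho$. The standard triangle inequality together with $\Wp(\bar{\mu}^\circ,\bar{\nu}^\circ) = \Wp^{3\eps}(\mu,\nu)/(1-3\eps)^{1/p}$ then yields $\Wp(\mu,\nu) \le 2\rho + \Wp^{3\eps}(\mu,\nu)/(1-3\eps)^{1/p}$. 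Substituting the previous bound on $\Wp^{3\eps}(\mu,\nu)$ and multiplying through by $(1-3\eps)^{1/p} = 1 - \tau$ produces the lower direction.

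For the upper direction, \cref{prop:RWp-triangle-inequality} alone is inadequate: eliminating the $\eps$-contamination from both $\tilde{\mu}_n$ and $\tilde{\nu}_n$ costs a combined budget of $2\eps$, while $\RWp = \Wp^\eps$ provides only $\eps$. Instead, I would argue in the primal. Since $\|\tilde{\mu}_n - \hat{\mu}_n\|_\tv \le \eps$, the shared mass $\tilde{\mu}_n \land \hat{\mu}_n$ has total mass at least $1-\eps$, so I can select $\mu^\star \le \tilde{\mu}_n \land \hat{\mu}_n$ of mass exactly $1-\eps$ (and analogously $\nu^\star$), yielding $\RWp(\tilde{\mu}_n,\tilde{\nu}_n) \le \Wp(\mu^\star,\nu^\star)$. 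Normalizing $\bar{\mu}^\star = \mu^\star/(1-\eps) \le \hat{\mu}_n/(1-\eps)$, I would disintegrate a $\Wp$-optimal coupling between $\hat{\mu}_n$ and $\mu$ to construct a sub-measure $\bar{\kappa} \le \mu/(1-\eps)$ with $\Wp(\bar{\mu}^\star,\bar{\kappa}) \le \Wp(\hat{\mu}_n,\mu)/(1-\eps)^{1/p}$. Since $(\rho,3\eps)$-resilience implies $(\rho,\eps)$-resilience, $\Wp(\bar{\kappa},\mu) \le \rho$, giving $\Wp(\bar{\mu}^\star,\mu) \le \rho + \Wp(\hat{\mu}_n,\mu)/(1-\eps)^{1/p}$ (and symmetrically for $\bar{\nu}^\star$). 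A final triangle inequality together with re-scaling by $(1-\eps)^{1/p} \le 1$ then yields $\Wp(\mu^\star,\nu^\star) \le 2\rho + \Wp(\mu,\nu) + \Wp(\hat{\mu}_n,\mu) + \Wp(\hat{\nu}_n,\nu)$, which suffices (the $\tau\Wp(\mu,\nu)$ slack is only needed for the lower direction).

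The main obstacle I anticipate is the disintegration step in the upper direction, which must simultaneously produce a sub-measure of $\mu$ with the correct mass $1-\eps$, the correct dominance $\bar{\kappa} \le \mu/(1-\eps)$, and the correct $\Wp$-proximity to $\bar{\mu}^\star$, all extracted from an optimal $\hat{\mu}_n$-to-$\mu$ coupling. The ``in particular'' statement then follows because the families $\cG_q(\sigma)$, $\cG_\mathrm{subG}(\sigma)$, and $\cG_\mathrm{cov}(\sigma)$ are $(\rho,3\eps)$-resilient with $\rho$ of the same order as $R_{p,\infty}(\cG,\eps)$---this is essentially the content of the resilience bounds underlying \cref{thm:concrete-minimax-risk-bounds}, since in each case $R_{p,\infty}(\cG,c\eps) \asymp R_{p,\infty}(\cG,\eps)$ for $\eps$ bounded away from the critical threshold.
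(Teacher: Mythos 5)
Your proposal is correct and follows essentially the same route as the paper: the lower direction (repeated applications of \cref{prop:RWp-triangle-inequality} to insert the empirical and contaminated measures, followed by normalizing the optimal $1-3\eps$ submeasures and invoking resilience) is identical, and your upper direction is the same coupling-transport-plus-resilience argument, merely reorganized --- you select a shared-mass submeasure of $\tilde{\mu}_n \land \hat{\mu}_n$ first and then push it onto $\mu$ via the optimal $\hat{\mu}_n$-to-$\mu$ coupling, whereas the paper invokes \cref{lem:arrow-3} first and then takes midpoint distributions. The disintegration step you flag as a possible obstacle is exactly the paper's \cref{fact:coupling-decomposition} and goes through without difficulty.
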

\vspace{-1mm}

Thus, for large sample sizes, the $\RWp$ estimate suffers from additive estimation error $O(\rho)$ and multiplicative estimation error $O(\tau)$, matching the resilience-based guarantees of MDE when $\Wp(\mu,\nu)$ is sufficiently small. The proof is a straightforward application of techniques from previous sections. We note that the upper bound of $1/3$ on $\eps$, i.e., the breakdown point of the $\RWp$ estimator, cannot be improved. Indeed, for any $\mu,\nu \in \cP(\cX)$, the proof includes a construction of $\tilde{\mu},\tilde{\nu}$ such that $\|\mu - \tilde{\mu}\|_\tv,\|\nu - \tilde{\nu}\|_\tv \leq 1/3$ but $\Wp^{\eps}(\tilde{\mu},\tilde{\nu}) = 0$.\smallskip

If $\eps = o(1)$ as $n \to \infty$, we can recover $\Wp(\mu,\nu)$ exactly in the limit.

\begin{corollary}[Asymptotic consistency]
\label{cor:asymptotic-consistency}
Fix $\mu,\nu \in \cP_q(\cX)$ for $q > p$, and consider any contaminated versions $\{\tilde{\mu}_n\}_{n=1}^\infty$, $\{\tilde{\nu}_n\}_{n=1}^\infty$ such that $\|\tilde{\mu}_n - \hat{\mu}_n\|_\tv \lor \|\tilde{\nu}_n - \hat{\nu}_n\|_\tv \leq \eps_n$, for all $n\in\NN$, with $\eps_n \to 0$. Then, for any sequence $\{\tau_n\}_{n=1}^\infty \subseteq [0,1]$ such that $\eps_n = o(\tau_n)$ and $\tau_n \to 0$, we have $\Wp^{\tau_n}(\tilde{\mu}_n,\tilde{\nu}_n) \to \Wp(\mu,\nu)$.
\end{corollary}

\begin{remark}[Comparison to median-of-means estimator]
This consistency result is reminiscent of that presented by \cite{staerman21} for a median-of-means (MoM) estimator. They produce a robust estimate for $\Wone$ by partitioning the contaminated samples into blocks and replacing each mean appearing in the dual form of $\Wone$ with a median of block means, where the number of blocks depends on the contamination fraction.
Our estimator improves upon the MoM approach in two important respects. First, our guarantees hold even when the support of $\mu$ and $\nu$ is unbounded. Second, their MoM guarantees do not account for the setting of \cref{thm:robust-distance-estimation} when $\eps_n = \eps$ is a fixed constant.
\end{remark}

\cref{thm:robust-distance-estimation} reveals that $\RWp(\tilde{\mu}_n,\tilde{\nu}_n)$ is a good estimate when the true distance $\Wp(\mu,\nu)$ is small. In particular, for $\eps$ bounded away from $1/3$ and $n$ sufficiently large, $\RWp(\tilde{\mu}_n,\tilde{\nu}_n) \asymp \Wp(\mu,\nu) \pm \rho$. This guarantee lends itself well to the tasks of robust two-sample testing and independence testing under $\Wp$. In the first case, one receives $\eps$-corrupted samples $\{\tilde{X_i}\}_{i=1}^n$ and $\{\tilde{Y_i}\}_{i=1}^n$ from $\mu$ and $\nu$, respectively, and seeks to distinguish between the null and alternative hypotheses:
\[H_0: \mu = \nu\quad \mbox{versus}\quad H_1: \Wp(\mu,\nu) > \lambda.\] In the second, one receives $\eps$-corrupted samples $\{(\tilde{X_i},\tilde{Y_i})\}_{i=1}^n$ and seeks to determine whether the joint distribution $\kappa \in \cP(\cX^2)$ of clean samples with marginals $\kappa_1,\kappa_2$ satisfies $\kappa = \kappa_1 \otimes \kappa_2$ or $\Wp(\kappa, \kappa_1 \otimes \kappa_2) > \lambda$, for the metric $\bar{\mathsf{d}}\bigl((x,y),(x',y')\bigr) \coloneqq \mathsf{d}(x,x') \lor \mathsf{d}(y,y')$ on the product space. Below, empirical measures of all sample sizes are defined via a single infinite sequence of samples, i.e., $X_1, X_2, \dots \sim \mu$ are i.i.d.\ and $\hat{\mu}_n \coloneqq \frac{1}{n} \sum_{i=1}^n \delta_{X_i}$ is the empirical measure of the first $n$ samples, for all $n \in \N$.
\vspace{-1mm}

\begin{proposition}[Applications to robust two-sample and independence testing]
\label{prop:two-sample-and-independence-testing}
Let $\eps \in [0,1/4]$ and fix $\mu,\nu \in \cW_p(\rho,3\eps)$. Then, for $\lambda = 45\rho$, we have 
\vspace{-1mm}
\begin{equation*}
    \lim_{n \to \infty} \mathds{1}\left\{\RWp(\tilde{\mu}_n,\tilde{\nu}_n) > 3\rho \right\} = \mathds{1}\left\{\Wp(\mu,\nu) > \lambda \right\} \quad \text{almost surely (a.s.)}
\vspace{-1mm}
\end{equation*}
under both the null and alternative hypotheses for robust two-sample testing. Similarly, for independence testing, if $\kappa_1,\kappa_2 \in \cW_p(\rho,3\eps)$, we have 
\vspace{-1mm}
\begin{equation*}
    \lim_{n \to \infty} \mathds{1}\left\{\RWp(\tilde{\kappa}_n,\tilde{\kappa}_{1,n} \otimes \tilde{\kappa}_{2,n}) > 3\rho \right\} = \mathds{1}\left\{\Wp(\kappa,\kappa_1 \otimes \kappa_2) > \lambda \right\} \quad \text{a.s.}
\vspace{-1mm}
\end{equation*}
under the null and alternative, where $\tilde{\kappa}_n \coloneqq \frac{1}{n} \sum_{i=1} \delta_{(\tilde{X}_i,\tilde{Y}_i)}$ and $\tilde{\kappa}_{1,n}, \tilde{\kappa}_{2,n}$ are its marginals.
\end{proposition}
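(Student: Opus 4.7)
The plan is to combine \cref{thm:robust-distance-estimation} with the strong law of large numbers (SLLN) for empirical Wasserstein distances, which gives $\Wp(\hat{\mu}_n,\mu) \to 0$ almost surely whenever $\mu \in \cP_p(\cX)$.

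For the two-sample testing statement, I would apply \cref{thm:robust-distance-estimation} to $\mu,\nu \in \cW_p(\rho,3\eps)$ and their $\eps$-corrupted empirical measures $\tilde{\mu}_n,\tilde{\nu}_n$, obtaining
\[
|\RWp(\tilde{\mu}_n,\tilde{\nu}_n) - \Wp(\mu,\nu)| \leq 2\rho + \tau\Wp(\mu,\nu) + \Wp(\mu,\hat{\mu}_n) + \Wp(\nu,\hat{\nu}_n),
\]
with $\tau = 1-(1-3\eps)^{1/p}$. Since $\mu,\nu \in \cP_p(\cX)$, the last two terms vanish a.s.\ by the SLLN. Under the null $\mu=\nu$, the bound immediately yields $\RWp(\tilde{\mu}_n,\tilde{\nu}_n) \leq 2\rho + o(1) < 3\rho$ eventually a.s., so the left indicator is eventually $0$, matching the right. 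Under the alternative $\Wp(\mu,\nu) > 45\rho$, I would use that $1-\tau = (1-3\eps)^{1/p} \geq (1/4)^{1/p}$ for $\eps \leq 1/4$, and that this quantity is minimized over $p \geq 1$ at $p=1$ giving $1/4$. Hence $\RWp(\tilde{\mu}_n,\tilde{\nu}_n) \geq (1-\tau)\Wp(\mu,\nu) - 2\rho - o(1) \geq \tfrac{45}{4}\rho - 2\rho - o(1) > 3\rho$ eventually a.s., so the left indicator equals $1$ eventually, again matching the right.

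For the independence testing statement, I would repeat the argument on the product space $(\cX^2,\bar{\mathsf{d}})$ with the clean distributions $\kappa$ and $\kappa_1 \otimes \kappa_2$. Three setup observations are needed: (i) $\tilde{\kappa}_n$ is an $\eps$-TV perturbation of $\hat{\kappa}_n$ (the empirical measure of $\kappa$); (ii) the TV triangle inequality for tensor products, $\|\alpha \otimes \beta - \alpha' \otimes \beta'\|_\tv \leq \|\alpha-\alpha'\|_\tv + \|\beta-\beta'\|_\tv$, shows $\tilde{\kappa}_{1,n} \otimes \tilde{\kappa}_{2,n}$ is a $2\eps$-perturbation of $\hat{\kappa}_{1,n} \otimes \hat{\kappa}_{2,n}$, which in turn converges to $\kappa_1 \otimes \kappa_2$ in $\Wp$ a.s.\ via the product-coupling bound $\Wp^p(\alpha_1 \otimes \alpha_2, \beta_1 \otimes \beta_2) \leq \Wp^p(\alpha_1,\beta_1) + \Wp^p(\alpha_2,\beta_2)$ combined with SLLN in each coordinate; and (iii) the total contamination budget of $\eps + 2\eps = 3\eps$ matches the assumed resilience radius. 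Applying the asymmetric variant of \cref{thm:robust-distance-estimation} (see \cref{app:asymmetric-results}) and repeating the null/alternative limiting arithmetic above then yields the claim.

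The main technical obstacle is a supporting lemma lifting the marginal resilience hypothesis $\kappa_1,\kappa_2 \in \cW_p(\rho,3\eps)$ on $\cX$ to joint resilience of both $\kappa$ and $\kappa_1 \otimes \kappa_2$ on $(\cX^2,\bar{\mathsf{d}})$, up to an absolute constant that gets absorbed into the choice $\lambda = 45\rho$. For any candidate $\nu$ with $\nu \leq \tfrac{1}{1-3\eps}\kappa$ (or $\leq \tfrac{1}{1-3\eps}(\kappa_1 \otimes \kappa_2)$), the marginals satisfy $\nu_i \leq \tfrac{1}{1-3\eps}\kappa_i$, so resilience of $\kappa_i$ gives $\Wp(\kappa_i,\nu_i) \leq \rho$; the product coupling together with $\bar{\mathsf{d}}^p \leq \mathsf{d}(\cdot,\cdot)_x^p + \mathsf{d}(\cdot,\cdot)_y^p$ yields $\Wp(\kappa_1 \otimes \kappa_2, \nu_1 \otimes \nu_2) \leq 2^{1/p}\rho$, and a gluing/disintegration argument against the constraint $\nu \leq \tfrac{1}{1-3\eps}\kappa$ bounds the residual transport cost to $\nu$ itself by a further $O(\rho)$.
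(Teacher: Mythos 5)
Your two-sample argument is correct and is essentially the paper's own proof: apply \cref{thm:robust-distance-estimation}, let the empirical terms vanish a.s.\ via \cref{fact:Wp-eventual-convergence}, and check that $(1-3\eps)^{-1/p}\le 4$ makes the arithmetic with the thresholds $3\rho$ and $45\rho$ close (under the null, $\limsup_n\RWp(\tilde\mu_n,\tilde\nu_n)\le 2\rho<3\rho$; under the alternative, $\liminf_n\RWp(\tilde\mu_n,\tilde\nu_n)>\frac{45-8}{4}\rho>3\rho$).

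The independence half has a genuine gap, and it sits exactly where you located the ``main technical obstacle.'' The paper's route is to assert that $\kappa$ (and likewise $\kappa_1\otimes\kappa_2$) is $(\rho,3\eps)$-resilient w.r.t.\ $\Wp$ on $(\cX^2,\bar{\mathsf d})$ --- with parameter exactly $\rho$, not $O(\rho)$ --- because $\bar{\mathsf d}$ is the \emph{max} metric, and then to rerun the two-sample argument verbatim. Your lifting lemma is only sketched and, as sketched, cannot deliver a usable constant: you claim any slack ``gets absorbed into the choice $\lambda=45\rho$,'' but the rejection threshold $3\rho$ is hard-coded in the statement, so the null case requires $\limsup_n\RWp\le 2\rho'<3\rho$, i.e.\ a joint resilience parameter $\rho'<\tfrac{3}{2}\rho$. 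Your own bound already starts at $2^{1/p}\rho$ (which is $2\rho$ at $p=1$) before the additional unquantified ``further $O(\rho)$,'' so the null-hypothesis indicator need not converge to $0$ under your lemma. Enlarging $\lambda$ only helps the alternative, not the null. Separately, the step you wave at with ``a gluing/disintegration argument'' is the genuinely hard part: for a general $\nu\le\frac{1}{1-3\eps}(\kappa_1\otimes\kappa_2)$, the distance $\Wp(\nu_1\otimes\nu_2,\nu)$ is not controlled by marginal information alone (a joint law can be far from the product of its marginals), so bounding it by $O(\rho)$ requires an actual argument exploiting the domination constraint, not just disintegration. To close the proof you should either prove the resilience transfer with the sharp constant the paper uses (working directly from the definition of resilience on $(\cX^2,\bar{\mathsf d})$ and the fact that $\nu\le\frac{1}{1-3\eps}\kappa$ forces $\nu_i\le\frac{1}{1-3\eps}\kappa_i$), or track your constants explicitly through both the null and alternative branches and verify they clear $3\rho$ and $45\rho$; as written, they do not.
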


\section{Experiments}
\label{sec:experiments}

To demonstrate the practical utility of our robust OT framework, we implement a neural network (NN) based approximation for the $\RWone$ minimum distance estimator by adapting the WGAN. Applications to generative modeling with contaminated data are then presented.

\vspace{-2mm}

\subsection{The Wasserstein GAN}
\label{subsec:WGAN}

Recall that WGAN seeks to approximate a data distribution $\mu \in \cP(\R^d)$ by the pushforward of a simple distribution $\nu_0$ through a parameterized map $g_\theta : \R^d \to \R^d$, $\theta \in \Theta$. Employing the $\Wone$ objective, \citet{arjovsky_wgan_2017} coined the following MDE problem as WGAN:
\begin{equation*}
    \inf_{\theta \in \Theta} \Wone\bigl(\mu, (g_\theta)_\sharp \nu_0\bigr) = \inf_{\theta \in \Theta} \sup_{\varphi \in \Lip_1(\R^d)} \int_{\R^d} \varphi \dd \mu - \int_{\R^d} \varphi \circ g_\theta \dd \nu_0 .
\end{equation*}
In practice, the function class $\Lip_1(\R^d)$ is approximated by a parameterized NN family $\cF = \{f_\phi\}_{\phi \in \Phi}$ and the min-max problem is tackled via alternating stochastic gradient descent-ascent. Namely, given a mini-batch of i.i.d.\ samples $X_1, \dots, X_n \sim \mu$ and $Y_1, \dots, Y_n \sim \nu_0$, we update iterates $\theta_t$ and $\phi_t$ according to the respective gradients of the objective
\begin{equation*}
h(\theta,\phi) \defeq \frac{1}{n} \sum_{i=1}^n f_\phi(X_i) - \frac{1}{n} \sum_{i=1}^n f_\phi(g_\theta(Y_i)).
\end{equation*}
There are many challenges to effectively implementing and analyzing this approach. For example, enforcing the Lipschitz constraint is non-trivial and is often performed softly by adding a certain gradient penalty to $h$ \citep{gulrajani2017improved}. Moreover, if the NNs involved are poly-sized, computational hardness results suggest that we must have $\Wone\bigl(\mu,(g_{\theta_\star})_\sharp \nu_0\bigr) \gg 1$ in the worst case, even when $\theta_\star = \argmin_{\theta \in \Theta} \max_{\phi \in \Phi} h(\theta,\phi)$ is an exact min-max solution \citep{chen2022minimax}. Finally, in the absence of significant structural assumptions on $\mu$, any estimate $\hat{\nu}$ for $\mu$ based on $n$ samples must incur worst-case error $\Wone(\mu,\hat{\nu}) \gtrsim n^{-1/d}$ \citep{nilesweed22minimax}. This rate is vacuous for $d \gtrsim \log(n)$ and appears in the standard analysis of Wasserstein MDE, even when the distribution class $\cG \defeq \{(g_\theta)_\sharp \nu_0\}_{\theta \in \Theta}$ is well-behaved and contains $\mu$.
Nevertheless, WGAN has enjoyed enormous practical success, and we therefore seek to translate its performance to our setting with corrupted data.\smallskip

\subsection{Robustifying WGAN}
\label{subsec:comp}

Consider the Huber contamination model, where the learner observes contaminated samples from a mixture $\tilde{\mu} = (1-\eps)\mu + \eps \alpha$, for some undesirable data source $\alpha$. To address this, we replace $\Wone$ above with its robust proxy $\RWone$. As discussed in Remark~\ref{rem:one-sided-distance}, it is slightly more natural to employ the one-sided variant $\RWone(\alpha\|\beta) \defeq \inf_{\alpha' \leq \frac{1}{1-\eps}\alpha}\Wone(\alpha',\beta)$ under Huber contamination. Opting for this approach, an adaptation of Theorem~\ref{thm:RWp-dual} to the one-sided $\RWone$ case (see Appendix~\ref{app:asymmetric-results}) yields the robust MDE problem:
\begin{equation}
\label{eq:RWone-MDE}
    \inf_{\theta \in \Theta} \RWone\bigl(\tilde{\mu}\| (g_\theta)_\sharp \nu_0\bigr) = \inf_{\theta \in \Theta} \sup_{\varphi \in \Lip_1(\R^d)} \frac{1}{1-\eps}\int_{\R^d} \varphi \dd \tilde{\mu} -\int_{\R^d} \varphi \circ g_\theta \dd \nu_0 + \frac{\eps}{1-\eps} \sup_{x \in \cX} \varphi(x).
\end{equation}
For the corresponding NN approximation, we update iterates $\theta_t$ and $\lambda_t$ according to the respective (sub)gradients of the modified objective
\begin{equation*}
h^\eps(\theta,\lambda) \defeq \frac{1}{(1-\eps)n} \sum_{i=1}^n f_\lambda(X_i) - \frac{1}{n} \sum_{i=1}^n f_\lambda(g_\theta(Y_i)) + \frac{\eps}{1-\eps} \max_{i =1,\dots,n} f_\lambda(X_i).
\end{equation*}
Any implementation of WGAN can easily be adapted to support this robustified objective; for example, our \texttt{PyTorch} adjustment takes the following one-line form:
\newcommand\Small{\fontsize{13}{9.2}\selectfont}
\newcommand*\LSTfont{\Small\ttfamily\SetTracking{encoding=*}{-60}\lsstyle}
\begin{lstlisting}[language=Python,basicstyle=\LSTfont,deletendkeywords={max},columns=fullflexible]
score = f_data.mean() - f_gen.mean() # old
score = f_data.mean()/(1-eps) - f_gen.mean()  - eps*f_data.max()/(1-eps) # new
\end{lstlisting}
\vspace{-1mm}
Due to the non-convex and non-smooth nature of the $h^\eps$, formal optimization guarantees seem challenging to obtain. Nevertheless, the next subsection demonstrates strong empirical performance of the resulting robust WGAN.

\vspace{-1mm}

\subsection{Generative Modeling with Contaminated Datasets}
\label{subsec:gen_mod}

We examine outlier-robust generative modeling using the modification suggested above. The experiments were performed on a cluster machine equipped with a NVIDIA Tesla V100. Code for these results is available at \url{https://github.com/sbnietert/robust-OT}, and full experimental details are provided in Appendix~\ref{app:experiment-details}.

First, we trained two WGAN with gradient penalty (WGAN-GP) models \citep{gulrajani2017improved} on a contaminated dataset with 80\% MNIST data and 20\% uniform random noise. Both models used a standard selection of hyperparameters but one of them was adjusted to compute gradient updates according to the robust objective with $\eps = 0.25$. In \cref{fig:generated-samples} (top), we display generated samples produced by both networks after processing 125k contaminated batches of 64 images. The effect of outliers is clearly mitigated by training with the robustified objective. 

We also applied our robustification technique to more sophisticated generative models which incorporate additional regularization on top of WGAN. In particular, we trained two off-the-shelf StyleGAN~2 models \citep{karras2020} using contaminated data---this time, 80\% CelebA-HQ face photos and 20\% MNIST data---with one tweaked to perform gradient updates according to the robust objective with $\eps = 0.25$. We present generated samples in \cref{fig:generated-samples} (bottom). Again, the modified objective enables learning a model that is largely outlier-free despite being trained on corrupted data.

\begin{wrapfigure}{R}{0.5\textwidth}
\centering
\vspace{0mm}
\hspace{1mm} Robust WGANs \hspace{5mm} Standard WGANs\\
\vspace{2mm}
\includegraphics[width=.48\linewidth]{./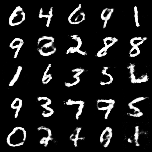}
\includegraphics[width=.48\linewidth]{./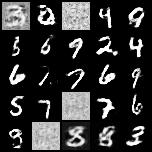}\\
\vspace{0.75mm}
\includegraphics[width=.48\linewidth]{./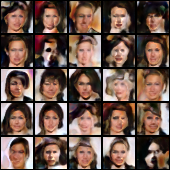}
\includegraphics[width=.48\linewidth]{./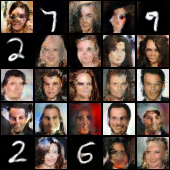}
\caption{(top): samples generated by robustified (left) and standard (right) WGAN-GP after training on corrupted MNIST dataset. (bottom): samples generated by robustified (left) and standard (right) StyleGAN 2 after training on corrupted CelebA-HQ dataset (left).}\label{fig:generated-samples}
\vspace{-5mm}
\end{wrapfigure}

Finally, we compared our robust WGAN approach with existing techniques of \citet{balaji2020} and \citet{staerman21}.\footnote{Unfortunately, the WGAN described in \cite{mukherjee2021} does not appear to scale to high-dimensional image data, and hence no comparison is presented.} Using the publicly available code for these papers, we trained robust WGANs with default options for 100k batches of 128 images, using 95\% 

\noindent CIFAR-10 training images with 5\% uniform random noise. In \cref{fig:generated-samples-2} and \cref{table:fids}, the control WGAN is that of \citet{balaji2020} without its robustification weights enabled. Our robust WGAN is implemented by adding the objective modification with $\eps=0.07$ to this control. \cref{fig:generated-samples-2} presents random samples of generated images from the four WGANs, while \cref{table:fids} displays Frechet inception distance (FID) scores \citep{heusel2017} over the course of training (lower scores correlate with high quality images). Our method performs favorably without tuning, but more detailed empirical study is needed to separate the impact of various hyperparameters. In particular, the poor relative performance of \cite{staerman21} is likely due in part to its distinct architecture. We note that despite using the code provided by \cite{balaji2020} with the hyperparameters specified in their appendix%
, the FID scores we observed were lower than those presented in their paper.

\begin{remark}[Gaps between theory and practice]
Suppose optimistically that our robust WGAN implementation could find an exact minimizer for \eqref{eq:RWone-MDE}. Even then, the theory from Section~\ref{sec:robustness} is insufficient to fully justify strong performance for distribution estimation if the family of generated distributions, $\cG = \{ (g_\theta)_\sharp \nu_0 \}_{\theta \in \Theta}$, is sufficiently expressive. Indeed, if $\cG$ contains both the clean and corrupted population measures, i.e., $\tilde{\mu} = (g_{\tilde{\theta}})_\sharp \nu_0$ and $\mu = (g_{\theta_\star})_\sharp \nu_0$ for some $\tilde{\theta},\theta_\star \in \Theta$, then both $\tilde{\theta}$ and $\theta_\star$ minimize \eqref{eq:RWone-MDE}. Our formal risk bounds in Section~\ref{sec:robustness} sidestep this issue by assuming that the distributions in $\cG$ satisfy Wasserstein resilience bounds. However, we make no effort to explicitly control the complexity of $\cG$ in our experiments. Thus, a satisfactory explanation for our strong empirical performance seems to require a white-box examination of the optimization procedure and its implicit biases. We defer a formal explanation of this phenomenon
for future work.
\end{remark}

{
\begin{figure}[H]
{
\small
\hspace{16.5mm} Ours \hspace{16mm} \cite{balaji2020} \hspace{2.5mm} \cite{staerman21} \hspace{10mm} Control
}
\begin{center}
\includegraphics[width=.23\linewidth]{./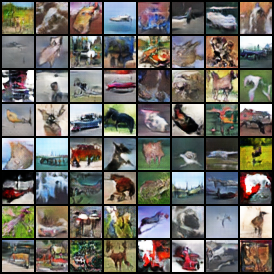}
\includegraphics[width=.23\linewidth]{./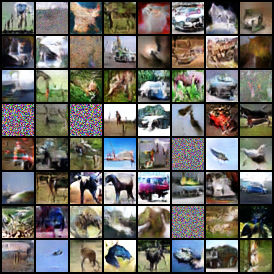}
\includegraphics[width=.23\linewidth]{./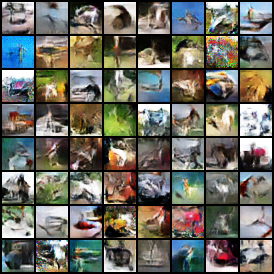}
\includegraphics[width=.23\linewidth]{./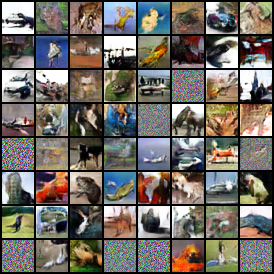}
\end{center}
\caption{Samples generated by various robust WGANs after 100k batches of training.}\label{fig:generated-samples-2}
\end{figure}

\begin{table}[H]
\centering
\begin{tabular}{lcccc}
\toprule
& \multicolumn{4}{c}{Robust WGAN Method}
\\\cmidrule(lr){2-5}
 Batch       & Ours  & \cite{balaji2020} & \cite{staerman21} & Control \citep{balaji2020} \\\midrule
20k    & 38.61 & 60.92 & 58.19 & 59.19 \\
60k & 22.08 & 35.49 & 49.64 & 39.08\\
100k & 18.87 & 30.44 & 40.81 & 31.40\\\bottomrule
\end{tabular}
\caption{FID scores between uncontaminated CIFAR-10 image dataset and datasets generated by various robust WGANs during training (lower scores are better).}
\label{table:fids}
\end{table}
}

\section{Proofs}
\label{sec:proofs}
We now provide proofs for the main results of this work.

\subsection{Proofs for \cref{sec:RWp}}
\label{prf:RWp}

We start by reviewing some relevant facts regarding OT, couplings, and $c$-transforms. Recall that the cost $c$ is of the form $c(x,y) = h(d(x,y))$, where $h:\R_{\geq 0} \to \R_{\geq 0}$ is continuous and strictly increasing with $h(0) = 0$. Further, we write $\lambda \cP(\cX) \coloneqq \{ \mu \in \cM_\plus(\cX): \mu(\cX) = \lambda \}$ for $\lambda \geq 0$, and, for $\varphi:\cX \to \R$, write $\varphi_{\min} \defeq \inf_{x \in \cX}\varphi(x)$ and $\varphi_{\max} \defeq \sup_{x \in \cX}\varphi(x)$.

\begin{fact}
\label{fact:ignore-shared-mass}
For any $\mu,\nu \in \cP(\cX)$, we have $\OT(\mu,\nu) \leq \OT(\mu - \mu \land \nu, \nu - \mu \land \nu).$
\end{fact}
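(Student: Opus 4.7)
The plan is to prove the fact by explicitly constructing a coupling for $(\mu,\nu)$ from an optimal coupling of the ``reduced'' measures, exploiting that the shared mass $\mu \land \nu$ can always be transported to itself at zero cost.

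First I would set $m \coloneqq (\mu \land \nu)(\cX)$ and define $\mu_0 \coloneqq \mu - \mu \land \nu$, $\nu_0 \coloneqq \nu - \mu \land \nu$. Since $\mu(\cX) = \nu(\cX) = 1$, both $\mu_0$ and $\nu_0$ are positive Radon measures of equal total mass $1 - m$, so $\Wp(\mu_0,\nu_0)$ is well-defined (as $\mathsf{OT}_c^{1/p}$ for $c(x,y) = \mathsf{d}(x,y)^p$ on measures of matching mass). I would then invoke existence of an optimal coupling to obtain $\pi_0 \in \Pi(\mu_0,\nu_0)$ with $\int c \dd \pi_0 = \Wp(\mu_0,\nu_0)^p$.

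Next I would build the candidate coupling
\begin{equation*}
    \pi \coloneqq \pi_0 + (\mathrm{id},\mathrm{id})_\#(\mu \land \nu) \in \cM_+(\cX \times \cX),
\end{equation*}
where $(\mathrm{id},\mathrm{id})_\#(\mu \land \nu)$ is the diagonal pushforward of the shared mass. Verifying marginals is routine: the first marginal of $\pi$ equals $\mu_0 + \mu \land \nu = \mu$, and symmetrically the second marginal equals $\nu$, so $\pi \in \Pi(\mu,\nu)$. For the cost, since $c(x,x) = \mathsf{d}(x,x)^p = 0$, the diagonal term contributes zero, leaving $\int c \dd \pi = \int c \dd \pi_0 = \Wp(\mu_0,\nu_0)^p$. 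Hence $\Wp(\mu,\nu)^p \leq \int c \dd \pi = \Wp(\mu_0,\nu_0)^p$, and taking $p$th roots yields the claim.

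The argument is essentially one construction plus a marginal check, so there is no serious obstacle; the only minor subtlety is that one must interpret $\Wp$ as the $\mathsf{OT}_c^{1/p}$ functional on positive measures of equal (possibly sub-unit) total mass, exactly as set up in the preliminaries. Intuitively, the inequality just records that coupling shared mass to itself is always an allowable---and optimal---strategy, which can only lower the transport cost compared with transporting everything.
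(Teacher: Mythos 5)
Your proposal is correct and matches the paper's argument: both construct a coupling of $(\mu,\nu)$ by adding the diagonal pushforward $(\Id,\Id)_\#(\mu\land\nu)$ to a coupling of the reduced measures and observe the cost is unchanged since $c(x,x)=0$. The only cosmetic difference is that the paper works with an arbitrary coupling and infimizes at the end rather than invoking existence of an optimal one.
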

\begin{proof}
For any $\pi \in \Pi(\mu - \mu \land \nu, \nu - \mu \land \nu)$, the joint distribution $\pi' = (\Id,\Id)_\#(\mu \land \nu) + \pi$ is a coupling of $\mu$ and $\nu$ with $\int_\cX c \dd \pi' = \int_\cX c \dd \pi$. Hence, $\OT(\mu,\nu) \leq \inf_{\pi \in \Pi(\mu - \mu \land \nu, \nu - \mu \land \nu)} \int c \dd \pi = \OT(\mu - \mu \land \nu, \nu - \mu \land \nu)$.
\end{proof}
\vspace{-5mm}

\begin{fact}
\label{fact:coupling-decomposition}
For $\mu_1,\mu_2,\nu \in \cM_\plus(\cX)$ such that $(\mu_1 + \mu_2)(\cX) = \nu(\cX)$, any joint measure $\pi \in \Pi(\mu_1 + \mu_2,\nu)$ can be decomposed as $\pi = \pi_{\mu_1,\nu} + \pi_{\mu_2,\nu}$ for $\pi_{\mu_1,\nu},\pi_{\mu_2,\nu} \in \cM_\plus(\cX^2)$ such that $\mu_1(S) = \pi_{\mu_1,\nu}(S \times \cX)$ and $\mu_2(A) = \pi_{\mu_2,\nu}(S \times \cX)$ for all measurable $S \subseteq \cX$.
\end{fact}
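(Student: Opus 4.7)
The plan is to construct the decomposition via Radon–Nikodym densities lifted from $\cX$ to the product space $\cX^2$. Since $\mu_1,\mu_2 \leq \mu_1 + \mu_2$, both $\mu_1$ and $\mu_2$ are absolutely continuous with respect to $\mu_1 + \mu_2$, so there exist measurable densities $f_1 \coloneqq \dv{\mu_1}{(\mu_1 + \mu_2)}$ and $f_2 \coloneqq \dv{\mu_2}{(\mu_1 + \mu_2)}$, both taking values in $[0,1]$ and satisfying $f_1 + f_2 = 1$ almost everywhere with respect to $\mu_1 + \mu_2$.

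Next I would lift these to $\cX^2$ by setting $\tilde{f}_i(x,y) \coloneqq f_i(x)$, which is measurable as the composition of $f_i$ with the projection onto the first coordinate. Define the candidate components of the decomposition by their densities against $\pi$:
\begin{equation*}
    \pi_{\mu_i,\nu}(E) \coloneqq \int_E \tilde{f}_i \dd \pi, \qquad E \subseteq \cX^2 \text{ measurable}, \quad i = 1,2.
\end{equation*}
These are non-negative Radon measures on $\cX^2$, and since $\tilde{f}_1 + \tilde{f}_2 = 1$ holds $\pi$-a.e.\ (as $\pi$ has first marginal $\mu_1+\mu_2$, any $(\mu_1+\mu_2)$-null set in $\cX$ pulls back to a $\pi$-null set in $\cX^2$), we get $\pi_{\mu_1,\nu} + \pi_{\mu_2,\nu} = \pi$.

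Finally I would check the marginal condition. For any measurable $A \subseteq \cX$, using that $\pi$ has first marginal $\mu_1+\mu_2$,
\begin{equation*}
    \pi_{\mu_i,\nu}(A \times \cX) = \int_{A \times \cX} f_i(x) \dd \pi(x,y) = \int_A f_i(x) \dd (\mu_1+\mu_2)(x) = \mu_i(A),
\end{equation*}
by definition of $f_i$ as the Radon–Nikodym derivative. This gives the required decomposition; no claim is made about the second marginals, so there is nothing further to verify. The only subtle point is the pull-back of null sets in the lift step, which is routine, so I do not anticipate a real obstacle — this is essentially a bookkeeping argument once the densities are in hand.
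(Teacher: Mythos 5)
Your proof is correct, but it takes a different route from the paper's. The paper constructs the decomposition by disintegrating $\pi$ over its first marginal: it takes a regular conditional probability $\pi(\cdot|x)$ with $\pi(A \times B) = \int_A \pi(B|x)\, \dd(\mu_1+\mu_2)(x)$ and sets $\pi_{\mu_i,\nu}(A \times B) = \int_A \pi(B|x)\, \dd\mu_i(x)$. You instead take the Radon--Nikodym densities $f_i = \frac{\dd\mu_i}{\dd(\mu_1+\mu_2)}$ and multiply $\pi$ by their lifts to $\cX^2$. The two constructions in fact yield the \emph{same} pair of measures (one checks $\int_A \pi(B|x)\,\dd\mu_i(x) = \int_{A\times B} f_i(x)\,\dd\pi(x,y)$), but the tools invoked differ: the paper's argument leans on the existence of regular conditional probabilities, which requires $\cX$ to be a standard Borel (here Polish) space, whereas yours needs only the Radon--Nikodym theorem for finite measures and therefore works on an arbitrary measurable space. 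Your handling of the one subtle point --- that the $(\mu_1+\mu_2)$-null set where $f_1+f_2 \neq 1$ pulls back to a $\pi$-null set because the first marginal of $\pi$ is $\mu_1+\mu_2$ --- is exactly right, and the marginal computation is the standard change of variables for marginals applied to $f_i \mathds{1}_A$ (legitimate since $0 \le f_i \le 1$). No gaps.
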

\begin{proof}
Write $\pi(y|x)$ for any regular conditional probability measure such that $\pi(S \times T) = \int_A \pi(T|x) \dd (\mu_1 + \mu_2)(x)$ for measurable $S,T \subseteq \cX$. Then $\pi_{\mu_1,\nu},\pi_{\mu_2,\nu} \in \cM_\plus(\cX^2)$ defined by
\begin{equation*}
    \pi_{\mu_1,\nu}(S \times T) \coloneqq \int_S \pi(T|x) \dd \mu_1(x) \quad \text{and} \quad \pi_{\mu_2,\nu}(S \times T) \coloneqq \int_S \pi(T|x) \dd \mu_2(x)
\end{equation*}
sum to $\pi$, with $\pi_{\mu_1,\nu}(S \times \cX) = \int_S \pi(\cX|x) \dd \mu_1(x) = \mu_1(S)$ and $\pi_{\mu_2,\nu}(S \times \cX) = \mu_2(S)$.
\end{proof}

\begin{fact}
\label{fact:TV-alternate-versions}
If $\mu,\nu \in \cP(\cX)$, then $\|\mu - \nu\|_\tv = 1 - \max_{\substack{0 \leq\alpha \leq \mu \land \nu}} \alpha(\cX) = \min_{\beta \geq \mu \lor \nu} \beta(\cX) - 1$.
\end{fact}
\begin{proof}
If $\|\mu - \nu\|_\tv = \eps$, then $(\mu \land \nu)(\cX) = 1-\eps$. Thus, one can take $\alpha = \mu \land \nu$ to bound the second expression by $\eps$. If the second expression is at most $\eps$, i.e., there is $\alpha \in (1-\eps)\cP(\cX)$ such that $\alpha \leq \mu \land \nu$, then one can take $\beta = \mu + \nu - \alpha$ to show the third expression is at most $\eps$. Finally, if the third expression is at most $\eps$, i.e., there exists $\beta \in (1+\eps)\cP(\cX)$ such that $\mu \leq \beta$ and $\nu \leq \beta$, then $\|\mu - \nu\|_\tv \leq \|\mu - \beta\|_\tv + \|\beta - \nu\|_\tv \leq \eps/2 + \eps/2 = \eps$.
\end{proof}

\begin{lemma}
\label{lem:OT-TV-commutativity}
For $\alpha,\beta_\plus \in \cM_\plus(\cX)$ such that $\alpha(\cX) \leq \beta_\plus(\cX)$, we have
\begin{equation*}
    \min_{\substack{\beta \in \cM_\plusb(\cX)\\\beta \leq \beta_\plusb\\\beta(\cX) = \alpha(\cX)}} \OT(\alpha,\beta) = \min_{\substack{\alpha_\plusb \in \cM_\plusb(\cX)\\\alpha \leq \alpha_\plusb\\\alpha_\plusb(\cX) = \beta_\plusb(\cX)}} \OT(\alpha_\plusb,\beta_\plusb).
\end{equation*}
\end{lemma}
\begin{proof}
For the ``$\leq$'' direction, fix $\alpha_\plus$ feasible for the right minimum, and let $\pi_\plus \in \Pi(\alpha_\plus,\beta_\plus)$ be an optimal coupling for the $\OT(\alpha_\plus,\beta_\plus)$ problem. By Fact~\ref{fact:coupling-decomposition}, we can decompose $\pi_\plus = \pi + \kappa$ for $\pi,\kappa \in \cM_\plus(\cX^2)$ such that $\alpha$ is the left marginal of $\pi$. Then, taking $\beta \in \cM_\plus(\cX)$ to be the right marginal of $\pi$, $\beta$ is clearly feasible for  the left infimum. Moreover, we bound
\begin{equation*}
    \OT(\alpha,\beta) \leq \int_\cX c \dd \pi \leq \int_\cX c \dd \pi_\plus = \OT(\alpha_\plus,\beta_\plus). 
\end{equation*}
Thus, taking an infimum over $\alpha_\plus$ gives the ``$\leq$'' inequality.

For the ``$\geq$'' direction, let $\beta$ be feasible for the left infimum, and take $\pi \in \Pi(\alpha,\beta)$ to be an optimal coupling for the $\OT(\alpha,\beta)$ problem. Set $\alpha_\plus = \alpha + (\beta_\plus - \beta)$ and let $\pi_\plus = \pi + (\Id,\Id)_\sharp (\beta_\plus - \beta)$. By design, $\pi_\plus \in \Pi(\alpha_\plus,\beta_\plus)$, and so
\begin{equation*}
    \OT(\alpha_\plus, \beta_\plus) \leq \int_\cX c \dd \pi_\plus = \int_\cX c \dd \pi = \OT(\alpha,\beta).
\end{equation*}
Taking an infimum over $\beta$ gives the ``$\geq$'' inequality. Existence of a minimizer for the left minimum follows by a standard compactness argument, and our explicit construction above carries this to the right minimum as well.
\end{proof}

\begin{lemma}
\label{lem:c-transform-structure}
For all bounded $\varphi:\cX \to \R$, we have $\varphi^c \in C_b(\cX)$ with $\varphi^c_{\min} = - \varphi_{\max}$, $\varphi^c \leq -\varphi$, and $\varphi^{cc} \geq \varphi$. In particular, $\|\varphi^c\|_\infty \leq \|\varphi\|_\infty$.
\end{lemma}

\begin{proof}
To start, we compute $\varphi^c_{\min} = \inf_{x,y \in \cX} c(x,y) - \varphi(x) = \inf_{x \in \cX} -\varphi(x) = -\varphi_{\max}$, using that $c \geq 0$ with $c(x,x) = 0$. Moreover, for each $y \in \cX$, we have
\begin{equation*}
    \varphi^c(y) = \inf_{x \in \cX} c(x,y) - \varphi(x) \leq c(y,y) - \varphi(y) = -\varphi(y).
\end{equation*}
In particular, we have $\|\varphi^c\|_\infty \leq \|\varphi\|_\infty$. Further, for each $x \in \cX$, we bound
\begin{equation*}
    \varphi^{cc}(x) = \inf_{y \in \cX} c(x,y) + \sup_{x' \in \cX} \varphi(x') - c(x',y) \geq \inf_{y \in \cX} c(x,y) + \varphi(x) - c(x,y) = \varphi(x),
\end{equation*}
as desired. It remains to show continuity of $\varphi^c$. Fixing $y \in \cX$, we may restrict the infimum defining $\varphi^c(y)$ to those $x \in \cX$ satisfying $c(x,y) - \varphi(x) \leq \|\varphi\|_\infty$ and thus $\mathsf{d}(x,y) \leq h^{-1}(2\|\varphi\|_\infty) \eqqcolon R$. Now take any sequence $\{y\}_{n \in \N} \subseteq \cX$ such that $y_n \to y$. Since $\mathsf{d}(y,y_n) \leq 1$ for sufficiently large $n$, we have
\begin{align*}
    \lim_{n\to\infty}|\varphi^c(y) - \varphi^c(y_n)| &= \lim_{n \to \infty}\left|\inf_{\substack{x \in \cX\\ \mathsf{d}(x,y) \leq R}} [\mathsf{d}(x,y)^p - \varphi(x)] - \inf_{\substack{x \in \cX\\ \mathsf{d}(x,y_n) \leq R}}[\mathsf{d}(x,y_n)^p - \varphi(x)]\right|\\
    &= \lim_{n \to \infty}\left|\inf_{\substack{x \in \cX\\ \mathsf{d}(x,y) \leq R + 1}} [\mathsf{d}(x,y)^p - \varphi(x)] - \inf_{\substack{x \in \cX\\ \mathsf{d}(x,y) \leq R + 1}}[\mathsf{d}(x,y_n)^p - \varphi(x)]\right|\\
    &\leq \lim_{n \to \infty} \sup_{\substack{x \in \cX\\ \mathsf{d}(x,y) \leq R + 1}} \left|\mathsf{d}(x,y)^p - \mathsf{d}(x,y_n)^p\right|\\
    &\leq \lim_{n \to \infty} \sup_{\substack{x \in \cX\\ \mathsf{d}(x,y) \leq R + 1}} p (R + 2)^{p-1} |\mathsf{d}(x,y) - \mathsf{d}(x,y_n)|\\
    &\leq p (R + 2)^{p-1} \lim_{n \to \infty} \mathsf{d}(y,y_n) = 0.
\end{align*}
Thus, $\varphi^c$ is also continuous.
\end{proof}

\subsubsection{Proof of Proposition~\ref{prop:alternative-primal-probs}}

We prove a slightly strengthened result.

\begin{lemma}
\label{lem:alternative-primal-probs-extended}
For $\eps \in (0,1]$ and $\mu,\nu \in \cP(\cX)$, we have $\OT^\eps(\mu,\nu) = \min_{(\mu',\nu') \in A}\OT(\mu',\nu')$, where $A$ is any of the following families:
\begin{equation*}
\begin{matrix}
    A_\minus^1 \defeq & A_\minus^2 \defeq  & A_\minus^3 \defeq\\
    \left\{\begin{matrix}(\mu_\minus,\nu_\minus) \in \cM_\plus(\cX)^2:\\ \mu_\minus \leq \mu,\, \nu_\minus \leq \nu\\ \mu_\minus(\cX) = \nu_\minus(\cX) = 1-\eps\end{matrix}\right\}, &\left\{\begin{matrix}(\mu_\minus,\nu_\minus) \in \cM_\plus(\cX)^2:\\ \mu_\minus \leq \mu,\, \nu_\minus \leq \nu\\ \mu_\minus(\cX) = \nu_\minus(\cX) \geq 1-\eps\end{matrix}\right\}, &\left\{\begin{matrix}(\mu_\minus,\nu_\minus) \in \cM_\plus(\cX)^2:\\ \mu_\minus \leq \mu,\, \nu_\minus \leq \nu\\ \mu_\minus, \nu_\minus \geq \mu \land \nu \end{matrix}\right\},\\[7mm]
    A_\plus^1 \defeq & A_\plus^2 \defeq  & A_\plus^3 \defeq\\
    \left\{\begin{matrix}(\mu_\plus,\nu_\plus) \in \cM_\plus(\cX)^2:\\ \mu_\plus \geq \mu,\, \nu_\plus \geq \nu\\ \mu_\plus(\cX) = \nu_\plus(\cX) = 1+\eps\end{matrix}\right\}, & \left\{\begin{matrix}(\mu_\plus,\nu_\plus) \in \cM_\plus(\cX)^2:\\ \mu_\plus \geq \mu,\, \nu_\plus \geq \nu\\ \mu_\plus(\cX) = \nu_\plus(\cX) \leq 1+\eps\end{matrix}\right\}, &\left\{\begin{matrix}(\mu_\plus,\nu_\plus) \in \cM_\plus(\cX)^2:\\ \mu_\plus \leq \mu,\, \nu_\plus \leq \nu\\ \mu_\plus, \nu_\plus \leq \mu \lor \nu \end{matrix}\right\},\\[8mm]
    \multicolumn{3}{c}{A_\tv^1 \defeq \left\{\begin{matrix}(\mu',\nu) \in \cP(\cX) \times \{\nu\} \\ \|\mu' - \mu\|_\tv \leq \eps\end{matrix}\right\}, \quad  A_\tv^2 \defeq \left\{\begin{matrix}(\mu,\nu') \in \{\mu\} \times \cP(\cX): \\ \|\nu' - \nu\|_\tv \leq \eps \end{matrix}\right\}.}
\end{matrix}
\end{equation*}
\end{lemma}

\begin{proof}
Throughout, we abbreviate $W = \OT^\eps(\mu,\nu)$, $W_\plusminus^i = \inf_{(\mu',\nu') \in A_{\plusminus}^i} \OT(\mu',\nu')$ for type $i \in \{1,2,3\}$, and $W_\tv^{i} = \inf_{(\mu',\nu') \in A_{\tv}^i} \OT(\mu',\nu')$ for type $i \in \{1,2\}$.  Clearly, $W = W_\minus^{1}$, since the marginals of $\pi \in \Pi_\eps(\mu,\nu)$ are in bijection with $(\mu_\minus,\nu_\minus) \in A_\minus^1$. Moreover, both infima are achieved by a standard argument. Indeed, $\Pi_\eps(\mu,\nu)$ is compact under the topology of weak convergence of measures, and $\pi \mapsto \int c \dd \pi$ is continuous w.r.t.\ this topology.

\paragraph{Inequality/equality constraints:} To see that $W_\minus^{1} = W_\minus^{2}$, note that any $(\mu_\minus,\nu_\minus) \in A_\minus^2$ with mass $\mu_\minus(\cX) = \nu_\minus(\cX) = m > 1-\eps$ can be scaled down by a factor of $(1-\eps)/m$ to have mass exactly $1-\eps$, retaining feasibility while only decreasing the objective $\OT(\mu_\minus,\nu_\minus)$. Further, any minimizer for $W_\minus^1$ is still a feasible minimizer for $W_\minus^2$.

Similarly, if $(\mu_\plus,\nu_\plus) \in A_\plus^2$ with $\mu_\plus(\cX) = \nu_\plus(\cX) = m < 1+\eps$, then, for any $x_0 \in \cX$, $\mu_\plus + (1+\eps -m)\delta_{x_0}$ and $\nu_\plus + (1+\eps - m)\delta_{x_0}$ have mass exactly $1+\eps$, are still feasible, and can only have a smaller objective (see Fact~\ref{fact:ignore-shared-mass}). Thus, $W_\plus^{1} = W_\plus^{2}$. If $W_\plus^1$ admits a minimizer (to be shown), it is still a feasible minimizer for $W_\plus^2$.

\paragraph{Mass removal/addition equivalence:}  We next show that $W_\minus^1 = W_\tv^1$. Indeed, we have
\begin{align*}
    \min_{\substack{\mu_\minusb,\nu_\minusb \in (1-\eps)\cP(\cX)\\\mu_\minusb \leq \mu,\,\nu_\minusb \leq \nu}} \OT(\mu_\minus,\nu_\minus) &= \min_{\substack{\mu_\minusb\in (1-\eps)\cP(\cX),\,\mu' \in \cP(\cX)\\\mu_\minusb \leq \mu,\,\mu_\minusb \leq \mu'}} \OT(\mu',\nu) \tag{Lemma~\ref{lem:OT-TV-commutativity}}\\
    &= \min_{\substack{\mu' \in \cP(\cX)\\\|\mu' - \mu\|_\tv \leq \eps}} \OT(\mu',\nu). \tag{Fact~\ref{fact:TV-alternate-versions}}
\end{align*}
In particular, any minimizer for $W_\minus^1$ corresponds to a feasible minimizer for $W_\tv^1$. We similarly prove that $W_\plus^1 = W_\tv^1$, equating
\begin{align*}
    \min_{\substack{\mu' \in \cP(\cX)\\\|\mu' - \mu\|_\tv \leq \eps}} \OT(\mu',\nu)     &= \min_{\substack{\mu_\plusb\in (1+\eps)\cP(\cX),\, \mu' \in \cP(\cX)\\\mu \leq \mu_\plusb,\, \mu' \leq \mu_\plusb}} \OT(\mu',\nu) \tag{Fact~\ref{fact:TV-alternate-versions}}\\
    &= \min_{\substack{\mu_\plusb,\nu_\plusb\in (1+\eps)\cP(\cX)\\\mu \leq \mu_\plusb,\, \nu \leq \nu_\plusb}} \OT(\mu_\plus,\nu_\plus). \tag{Lemma~\ref{lem:OT-TV-commutativity}}
\end{align*}
In particular, any minimizer for $W_\tv^1$ corresponds to a minimizer for $W_\plus^1$. Finally, we clearly have $W_\tv^1 = W_\tv^2$ (and existence of minimizers for the latter) by symmetry.\smallskip

\paragraph{Lower envelope:}
We turn to the structure of minimizers, focusing on mass removal. Fixing $(\mu_\minus,\nu_\minus) \in A_\minus^2$ optimal for $W_\minus^2$, we seek to prove that one can find a modified pair $(\mu_\minus',\nu_\minus')$ further satisfying $\mu_\minus',\nu_\minus' \geq \mu \land \nu$ and $\OT(\mu_\minus',\nu_\minus') \leq \OT(\mu_\minus,\nu_\minus)$, implying that $W_\minus^2 = W_\minus^3$.

Fixing base measure $\lambda \defeq \mu + \nu$, write $f_\kappa = \dd \kappa/\dd \lambda$ for the Radon–Nikodym derivative of $\kappa \in \cM_\plus(\cX)$ w.r.t.\ $\lambda$. Write $\alpha \defeq (\mu \land \nu - \mu_\minus)_\plus$ for the piece of mass which violates $\mu_\minus \geq \mu \land \nu$. Next, by adding $(f_\mu - f_{\mu_\minus}) \land (f_{\nu} - f_{\nu_\minus})$ to both $f_{\mu_\minus}$ and $f_{\nu_\minus}$, we can assume without loss of generality that $\alpha = (\mu \land \nu_{\minus} - \mu_{\minus})_\plus$. Indeed, this transformation maintains feasibility and can only decrease the transport cost (see Fact~\ref{fact:ignore-shared-mass}). Moreover, it ensures $f_\nu(x) = f_{\nu_\minus}(x)$ whenever $f_{\mu_\minus}(x) < f_\mu(x)$, and so $f_\alpha = (f_\mu \land f_\nu - f_{\mu_\minus})_\plus = (f_\mu \land f_{\nu_\minus} - f_{\mu_\minus})_\plus$.

Next, take $\pi \in \Pi(\mu_\minus,\nu_\minus)$ to be an optimal coupling for $\OT(\mu_\minus,\nu_\minus)$, and write $\pi(x|y)$ for a regular conditional probability measure such that $\pi(S \times T) = \int_T \pi(S|y) \dd \nu_\minus(y)$ for all measurable $S,T \subseteq \cX$. Define $\beta \in \cM_\plus(\cX)$ by $\beta(\cdot) \defeq \int_\cX \pi(\cdot|y) \dd \alpha(y)$. Since $\alpha \leq \nu_\minus$, we have $\beta \leq \mu_\minus$. Now, take $\mu'_\minus \defeq \mu_\minus - \beta + \alpha \leq \mu$ and define $\pi' \in \Pi(\mu'_\minus,\nu_\minus)$ by
\begin{equation*}
    \pi'(S \times T) \defeq \int_T \pi(S|y) \dd (\nu_\minus - \alpha)(y) + \alpha(S \cap T)
\end{equation*}
for measurable $S,T \subseteq \cX$. Then, $(\mu'_\minus,\nu_\minus) \in A_\minus^2$ and 
\begin{align*}
    \OT(\mu_\minus',\nu_\minus) &\leq \int_{\cX^2} c \dd \pi'\\
    &= \int_{\cX^2} c(x,y) \dd \pi(x|y) \dd (\nu_\minus - \alpha)(y)\\
    &= \OT(\mu_\minus,\nu_\minus) - \int_{\cX^2} c(x,y) \dd \pi(x|y) \dd \alpha (y).
\end{align*}
Since $c(x,y)$ only vanishes if $x=y$, this contradicts optimality of $\mu_\minus$ unless $\pi(\cdot|y) = \delta_y$ for $\alpha$-almost all $y \in \cX$. In this case, we have
\begin{align*}
    \mu_\minus(\supp(\alpha)) = \int_{\cX} \pi(\supp(\alpha)|y) \dd\nu_\minus(y) \geq \int_{\supp(\alpha)} \pi(\supp(\alpha)|y) \dd\nu_\minus(y) = \nu_\minus(\supp(\alpha)).
\end{align*}
By our initial assumption, however, $f_{\nu}(x) = f_{\nu_\minus}(x)$ for $\lambda$-almost all $x \in \supp(\alpha)$. Thus,
\begin{align*}
    \nu_\minus(\supp(\alpha)) &= \nu(\supp(\alpha))\\
    &\geq  (\mu \land \nu)(\supp(\alpha))\\
    &= (\mu \land \nu - \mu_-)(\supp(\alpha)) + \mu_-(\supp(\alpha))\\
    &= \alpha(\cX) + \mu_-(\supp(\alpha)).
\end{align*}
This contradicts optimality unless $\alpha(\cX) = 0$, in which case $\mu_\minus \geq \mu \land \nu$. A symmetric argument allows us to assume $\nu_\minus \geq \mu \land \nu$.

\paragraph{Upper envelope:} Take $(\mu_\minus,\nu_\minus) \in A_\minus^3$ optimal for $W_\minus^3$, and define $\mu_\plus \defeq \mu + \nu - \nu_\minus$ and $\nu_\plus \defeq \nu + \mu - \mu_\minus$. By construction, $\mu_\plus = \mu \lor \nu + \mu \land \nu - \nu_\minus \leq \mu \lor \nu$, and, similarly $\nu_\plus \leq \mu \lor \nu$. Thus, $(\mu_\plus,\nu_\plus) \in A_\plus^3$. Moreover, these are obtained by adding shared mass $(\mu - \mu_\minus) + (\nu - \nu_\minus)$ to the original measures, so their transport cost cannot increase (see Fact~\ref{fact:ignore-shared-mass}). Thus, $\OT(\mu_\plus,\nu_\plus) \leq \OT(\mu_\minus,\mu_\minus) = W_\minus^3 = W_\plus^2 \leq W_\plus^3$. Consequently, $\mu_\plus$ and $\nu_\plus$ are optimal for the $W_\plus^3$ problem, and $W_\plus^3 = W$.
\end{proof}

\subsubsection{Proof of Proposition~\ref{prop:RWp-structural-properties}}
We start with Claim 1. Let $\mu,\nu \in \cP(\cX)$ and $0 < \eps_1 \leq \eps_2 < \|\mu - \nu\|_\tv$ as in the proposition statement. Since $\cX$ is a Polish space, $\mu$ and $\nu$ are tight; in particular, there is a compact set $K \subseteq \cX$ such that $\mu(K),\nu(K) \geq 1-\eps_1$. Letting $\mu' = \frac{1-\eps_1}{\mu(K)}\mu|_K$ and $\nu' = \frac{1 - \eps_1}{\nu(K)}\nu|_K$, we have
\begin{equation*}
    \OT^{\eps_1}(\mu,\nu) \leq \OT(\mu',\nu') \leq h(\diam(K)) < \infty.
\end{equation*}
Now, fix any minimizers $\mu_\minus,\nu_\minus \in (1-\eps_1)\cP(\cX)$ for the mass removal formulation of $\OT^{\eps_1}(\mu,\nu)$. Letting $\mu_\minus' = \frac{1-\eps_2}{1-\eps_1} \mu_\minus$ and $\nu_\minus' = \frac{1-\eps_2}{1-\eps_1}\nu_\minus$, we have
\vspace{-2mm}
\begin{equation*}
    \OT^{\eps_2}(\mu,\nu) \leq \OT(\mu_\minus',\nu_\minus') = \tfrac{1-\eps_2}{1-\eps_1}\OT(\mu_\minus,\nu_\minus) = \tfrac{1-\eps_2}{1-\eps_1}\ROT(\mu,\nu).
\vspace{-1mm}
\end{equation*}
Next, we note that ${\|\mu - \nu\|_\tv} \leq \eps_2$ if and only if there exists $\kappa \in (1-\eps_2)\cP(\cX)$ with $\kappa \leq \mu$ and $\kappa \leq \nu$, which occurs if and only if $\OT^{\eps_2}(\mu,\nu) = 0$. Finally, $\OT^0(\mu,\nu)$ trivially coincides with $\OT(\mu,\nu)$ since $\Pi_0(\mu,\nu) = \Pi(\mu,\nu)$.\smallskip

For Claim 2, we first note that our condition on $h$ implies a triangle inequality for OT.

\begin{lemma}[OT triangle inequality]
\label{lem:OT-triangle-inequality}
If $h$ and $\log \circ \,h \circ \exp$ are convex, then 
\begin{equation*}
    h^{-1}\bigl(\OT(\mu,\nu)\bigr) \leq h^{-1}\bigl(\OT(\mu,\kappa)\bigr) + h^{-1}\bigl(\OT(\kappa,\nu)\bigr).
\end{equation*}
\end{lemma}
\begin{proof}
We employ the standard proof for $\Wp$ (see, e.g., Theorem 7.3 of \citealp{villani2003}) but employ a generalized version of Minkowski's inequality. Taking $\pi_{12} \in \Pi(\mu,\kappa)$ optimal for $\OT(\mu,\kappa)$ and $\pi_{23} \in \Pi(\kappa,\nu)$ optimal for $\OT(\kappa,\nu)$, the gluing lemma (see, e.g., Lemma 7.6 of \citealp{villani2003}), guarantees that there exists $\pi_{123} \in \cP(\cX^3)$ whose marginal w.r.t.\ its first two coordinates is $\pi_{12}$ and w.r.t.\ its last two coordinates is $\pi_{23}$. We then bound
\begin{align*}
    h^{-1}\bigl(\OT(\mu,\nu)\bigr) &\leq h^{-1}\left(\int_\cX h(d(x,z)) \dd \pi_{123}(x,y,z)\right)\\
    &\leq h^{-1}\left(\int_\cX h(d(x,y) + d(y,z)) \dd \pi_{123}(x,y,z)\right)\\
    &\leq h^{-1}\left(\int_\cX h(d(x,y)) \dd \pi_{123}(x,y,z)\right) +  h^{-1}\left(\int_\cX h(d(y,z)) \dd \pi_{123}(x,y,z)\right)\\
    &= h^{-1}\bigl(\OT(\mu,\kappa)\bigr) + h^{-1}\bigl(\OT(\kappa,\nu)\bigr),
\end{align*}
where the third inequality follows by the generalized Minkowski's inequality due to Mulholland (see Theorem 1 of  \citealp{mulholland1949minkowski}).
\end{proof}

We can now prove the claim. Define $W_1 \coloneqq \OT^{\eps_1}(\mu,\kappa)$ and $W_2 \coloneqq \OT^{\eps_2}(\kappa,\nu)$. By Proposition~\ref{prop:alternative-primal-probs}, there exist $\kappa',\kappa'' \in \cP(\cX)$ such that $\OT(\mu,\kappa') \leq W_1$, $\|\kappa-\kappa'\|_\tv \leq \eps_1$, $\OT(\nu,\kappa'') \leq W_2$, and $\|\kappa - \kappa''\|_\tv \leq \eps_2$. By the TV triangle inequality, we further have $\|\kappa' - \kappa''\|_\tv \leq \eps_1 + \eps_2$. Thus,
$\OT^{\eps_1 + \eps_2}(\mu,\kappa'') \leq W_1$, and so, by Lemma~\ref{lem:alternative-primal-probs-extended}, there exists $\gamma \in \cP(\cX)$ such that $\|\mu - \gamma\|_\tv \leq \eps_1 + \eps_2$ and $\OT(\gamma,\kappa'') \leq W_1$. By Lemma~\ref{lem:OT-triangle-inequality}, we have $h^{-1}\bigl(\OT(\gamma,\nu)\bigr) \leq h^{-1}(W_1) + h^{-1}(W_2)$. Applying Lemma~\ref{lem:alternative-primal-probs-extended} once more, we obtain 
\begin{equation*}
    h^{-1}\bigl(\OT^{\eps_1 + \eps_2}(\mu,\nu)\bigr) \leq h^{-1}(W_1) + h^{-1}(W_2) = h^{-1}\bigl(\OT^{\eps_1}(\mu,\kappa)\bigr) + h^{-1}\bigl(\OT^{\eps_2}(\kappa,\nu)\bigr),
\end{equation*} as desired.\qed

\subsubsection{Proof of \cref{thm:RWp-dual}}
\label{prf:RWp-dual}

\paragraph{Strong duality:} We first prove a strong duality result, a bit more general than that in the theorem statement. For distributions $\mu,\nu \in \cP(\cX)$, a measure on the product space $\pi \in \cM_\plus(\cX^2)$, $\varphi,\psi \in C_b(\cX)$, and $\eps > 0$, define
\begin{align*}
    \mathsf{I}_c(\pi) &\coloneqq \int_{\cX^2} c \dd \pi,\\
    \Pi_\plus^\eps &\coloneqq \big\{ \pi \in (1+\eps)\cP(\cX^2) : \pi_1 \geq \mu, \pi_2 \geq \nu \big\},\\
    \mathsf{J}^\eps(\varphi,\psi) &\coloneqq \int_{\cX} \varphi \dd \mu + \int_{\cX} \psi \dd \nu + \eps \left(\varphi_{\min} + \psi_{\min}\right),\\
    \mathsf{J}^\eps_c(\varphi) &\coloneqq \int_{\cX} \varphi \dd \mu + \int_{\cX} \varphi^c \dd \nu - 2 \eps \|\varphi\|_\infty,\\
    \cF_c &\coloneqq \left\{ (\varphi,\psi) \in C_b(\cX)^2 : \varphi(x) + \psi(y) \leq c(x,y),  \forall x,y \in \cX
    \right\},
\end{align*}
and write $\varphi^{cc} \coloneqq (\varphi^c)^c$. The result below recovers strong duality for Theorem~\ref{thm:RWp-dual} when $c = \mathsf{d}^p$.

\begin{lemma}
\label{lem:duality-general}
For $\eps \in (0,1]$ and $\mu,\nu \in \cP(\cX)$, we have
\begin{equation*}
    \OT^\eps(\mu,\nu) = \inf_{\pi \in \Pi_\plus^\eps} \mathsf{I}_c(\pi) = \sup_{(\varphi,\psi) \in \cF_c} \mathsf{J}^\eps(\varphi,\psi) = \sup_{\varphi \in C_b(\cX)} \mathsf{J}^{\eps}_c(\varphi) = \sup_{\varphi \in C_b(\cX):\, \varphi = \varphi^{cc}} \mathsf{J}_c^\eps(\varphi).
\end{equation*}
\end{lemma}

\begin{proof}
The first equality follows from Lemma~\ref{lem:alternative-primal-probs-extended}.
It is straightforward to prove the third and fourth equalities and the ``$\geq$'' direction of the second equality. Indeed, we bound
\begin{align*}
    \inf_{\pi \in \Pi_\plus^\eps} \mathsf{I}_c(\pi) &= \inf_{\substack{\mu',\nu' \in (1+\eps)\cP(\cX)\\\mu' \geq \mu,\, \nu' \geq \nu}} \OT(\mu,\nu)\\
    &= \inf_{\substack{\mu',\nu' \in (1+\eps)\cP(\cX)\\\mu' \geq \mu,\, \nu' \geq \nu}} \sup_{(\varphi,\psi) \in \cF_c} \int_\cX \varphi \dd \mu' + \int_\cX \psi \dd \nu'\\
    &\geq \sup_{(\varphi,\psi) \in \cF_c} \inf_{\substack{\mu',\nu' \in (1+\eps)\cP(\cX)\\\mu' \geq \mu,\, \nu' \geq \nu}}  \int_\cX \varphi \dd \mu' + \int_\cX \psi \dd \nu'\\
    &= \sup_{(\varphi,\psi) \in \cF_c} \mathsf{J}^\eps(\varphi,\psi),
\end{align*}
where the second equality follows by standard Kantorovich duality. Substituting $\phi \gets \varphi^c$ can only increase the objective while preserving feasibility, so we have
\begin{align*}
    \sup_{(\varphi,\psi) \in \cF_c} \mathsf{J}^\eps(\varphi,\psi) &= \sup_{(\varphi,\psi) \in \cF_c} \int_\cX \varphi \dd \mu + \int_\cX \phi \dd \nu + \eps \varphi_{\min} + \eps \phi_{\min}\\
    &= \sup_{\varphi \in C_b(\cX)} \int_\cX \varphi \dd \mu + \int_\cX \varphi^c \dd \nu + \eps \varphi_{\min} + \eps {\varphi^c}_{\min}\\
    &= \sup_{\varphi \in C_b(\cX)} \int_\cX \varphi \dd \mu + \int_\cX \varphi^c \dd \nu - \eps (\varphi_{\max} -  {\varphi}_{\min})\\
    &= \sup_{\varphi \in C_b(\cX)} \int_\cX \varphi \dd \mu + \int_\cX \varphi^c \dd \nu - \eps 2\|\varphi\|_\infty\\
    &= \sup_{\varphi \in C_b(\cX)}\mathsf{J}_c^\eps(\varphi),
\end{align*}
where the third equality uses that $\varphi^c_{\min} = -\varphi_{\max}$ (see Lemma~\ref{lem:c-transform-structure}). The fourth step uses the substitution $\varphi \gets \varphi - \varphi_{\min} - (\varphi_{\max} - \varphi_{\min})/2$, which doesn't change the objective, preserves feasibility, and ensures that $\varphi_{\max} - \varphi_{\min} = 2\|\varphi\|_\infty$. Finally, we may restrict to $\varphi = \varphi^{cc}$ since this preserves feasibility and can only increase the objective (see Lemma~\ref{lem:c-transform-structure}).
\smallskip

It suffices now to show that $\OT^{\eps}(\mu,\nu)  \leq \sup_{\varphi \in C_b(\cX)} \mathsf{J}^{\eps}(\varphi)$. We start from the augmented primal form for $\OT^{\eps}$ given in Proposition~\ref{prop:RWp-augmented-Wp}, defined over the augmented space $\bar{\cX} = \cX \sqcup \{ \bar{x}\}$ with augmented cost $\bar{c}$. In particular, we have
\begin{align*}
    \OT^{\eps}(\mu,\nu) &= \mathrm{OT}_{\bar{c}}(\mu + \eps \delta_{\bar{x}}, \nu + \eps \delta_{\bar{x}}) = \sup_{\bar{\varphi} \in C_b(\bar{\cX})} \int_{\bar{\cX}} \bar{\varphi} \dd(\mu + \eps \delta_{\bar{x}}) + \int_{\bar{\cX}} \bar{\varphi}^{\,\bar{c}} \dd(\nu + \eps \delta_{\bar{x}}),
\end{align*}
where the last equality follows by standard Kantorovich duality for the augmented problem\footnote{The augmented space $\bar{\cX}$ is a Polish space over which $\bar{c}$ is lower-semicontinuous, so strong duality holds.}. Note that each $\bar{\varphi} \in C_b(\bar{X})$ can be decomposed into a pair $(\varphi,a) \in C_b(\cX) \times \R$, where $a = \bar{\varphi}(\bar{x})$. Using this notation, we compute
\begin{equation*}
    \bar{\varphi}^{\,\bar{c}}(y) = \inf_{x \in \bar{\cX}} \bar{c}(x,y) - \bar{\varphi}(x) =
    \begin{cases}
       \varphi^c(y) \land (-a), &y \in \cX\\
       - \varphi_{\max}, &y = \bar{x}
    \end{cases}.
\end{equation*}
Consequently, we have
\begin{equation}
\label{eq:augmented-c-transform-dual}
    \OT^{\eps}(\mu,\nu) =\sup_{\substack{\varphi \in C_b(\cX)\\a \in \R}} \int_\cX \varphi \dd \mu + \eps a + \int_\cX (\varphi^c \land (-a)) \dd \nu - \eps \varphi_{\max}.
\end{equation}
We now consider substituting $\varphi$ with $\varphi' = \varphi \lor (a \land \varphi_{\max})$ within the objective. This can only increase the first integral and does not effect the second term. Moreover, computing
\begin{align*}
    (\varphi')^c(y) = \inf_{x \in \cX} c(x,y) - (\varphi(x) \lor (a \land \varphi_{\max})) = \varphi^c(y) \land (-a \lor -\varphi_{\max}),
\end{align*}
we see that $(\varphi')^c \land (-a) = \varphi^c \land (-a)$, so the third term in \eqref{eq:augmented-c-transform-dual} is unchanged. Finally, we compute $\varphi'_{\max} = \varphi_{\max} \lor (a \land \varphi_{\max}) = \varphi_{\max}$, so the last term of \eqref{eq:augmented-c-transform-dual} is also unchanged. In the case that $\varphi_{\max} \leq a$, we have that $\varphi' = \varphi \lor \varphi_{\max} = \varphi_{\max}$ is constant. Otherwise, we have $\varphi' = \varphi \lor a \geq a$. Restricting to the constant case, we bound
\begin{equation*}
    \sup_{\varphi,a \in \R} \varphi + \eps a + (-\varphi) \land (-a) - \eps \varphi =  0.
\end{equation*}
For the other case, we bound
\begin{align*}
    &\sup_{\substack{\varphi \in C_b(\cX)\\a \in \R: \varphi \geq a}} \int_\cX \varphi \dd \mu + \eps a + \int_\cX (\varphi^c \land (-a)) \dd \nu - \eps \varphi_{\max}\\
    \leq \,&\sup_{\substack{\varphi \in C_b(\cX)\\a \in \R: \varphi \geq a}} \int_\cX \varphi \dd \mu + \eps a + \int_\cX \varphi^c \dd \nu - \eps \varphi_{\max}\\
    = \, &\sup_{\substack{\varphi \in C_b(\cX)}} \int_\cX \varphi \dd \mu + \int_\cX \varphi^c \dd \nu - \eps (\varphi_{\max} - \varphi_{\min}).
\end{align*}
By plugging in $\varphi \equiv 0$, we see that this quantity is always non-negative, and so the previous bound does not come into play. All together, we have
\begin{align*}
    \OT^{\eps}(\mu,\nu) \leq \sup_{\substack{\varphi \in C_b(\cX)}} \int_\cX \varphi \dd \mu + \int_\cX \varphi^c \dd \nu - \eps (\varphi_{\max} - \varphi_{\min}) = \sup_{\varphi \in C_b(\cX)} \mathsf{J}^{\eps}(\varphi),
\end{align*}
as desired.
\end{proof}

\paragraph{Existence of maximizers:} Let $\lambda \in \cP(\cX)$ denote any probability distribution with $\supp(\lambda) = \cX$ (such $\lambda$ always exists because $\cX$ is separable).
To start, we prove that strong duality still holds if the infima defining $\mathsf{J}^\eps$ are relaxed to essential infima w.r.t.\ $\kappa \coloneqq \mu + \nu + \lambda$ and if the dual potentials are uniformly bounded by $R \defeq \OT^{\eps/2}(\mu,\nu)/\eps$. The selection of $\kappa \in \cM_\plus(\cX)$ is somewhat arbitrary; we only use that $\mu$ and $\nu$ are absolutely continuous w.r.t.\ $\kappa$ and that $\supp(\kappa) = \cX$.
Formally, for $\varphi,\psi \in L^1(\kappa)$, we define
\begin{align*}
    \bar{\mathsf{J}}^\eps(\varphi,\psi) &\coloneqq \int_\cX \varphi \dd \mu + \int_\cX \dd \nu + \eps \left(\essinf_\kappa \varphi + \essinf_\kappa \psi\right),\\
    \bar{\cF}_c &\coloneqq \left\{ (\varphi,\psi)\in L^\infty(\kappa) \times L^\infty(\kappa) : 
    \begin{array}{c}
        \varphi(x) + \psi(y) \leq c(x,y),\   \kappa \otimes \kappa \text{ a.e.}\\
        \|\varphi\|_{L^\infty(\kappa)},\|g\|_{L^\infty(\kappa)} \leq R
    \end{array}
    \right\}.
\end{align*}
Note that $R < \infty$ by Proposition~\ref{prop:RWp-structural-properties} since $\eps > 0$. We then have the following.

\begin{lemma}
For $\eps \in (0,1]$ and $\mu,\nu \in \cP(\cX)$, we have $\OT^\eps(\mu,\nu) = \sup_{(\varphi,\psi) \in \bar{\cF}_c} \bar{\mathsf{J}}_\eps(\varphi,\psi)$.
\end{lemma}
\begin{proof}
By Lemma~\ref{lem:duality-general}, we have
\begin{equation*}
    \OT^\eps(\mu,\nu) = \sup_{\varphi \in C_b(\cX)} \mathsf{J}_c^\eps(\varphi) = \sup_{\varphi \in C_b(\cX)} \int_\cX \varphi \dd \mu + \int_\cX \varphi^c \dd \nu - 2\eps \|\varphi\|_\infty.
\end{equation*}
Since this quantity is non-negative, we can restrict to potentials $\varphi \in C_b(\cX)$ for which the objective is non-negative. For such $\varphi$, we have
\begin{equation*}
    \eps \|\varphi\|_\infty \leq  \int_\cX \varphi \dd \mu + \int_\cX \varphi^c \dd \nu - \eps \|\varphi\|_\infty \leq \OT^{\eps/2}(\mu,\nu),
\end{equation*}
and so $\|\varphi\|_\infty \leq R$. Since $\|\varphi^c\|_\infty \leq \|\varphi^c\|_\infty$ (see Lemma~\ref{lem:c-transform-structure}), we have $(\varphi,\varphi^c) \in \bar{\cF}_c$, and so $\OT^{\eps}(\mu,\nu) \leq \sup_{(\varphi,\psi) \in \bar{\cF}_c} \bar{\mathsf{J}}_\eps(\varphi,\psi)$.
\smallskip

For the opposite inequality, fix $(\varphi,\psi) \in \bar{\cF}_c$ and $\pi \in \Pi_\plus^\eps(\mu,\nu)$ with marginals $\mu' = \pi(\cdot \times \cX)$ and $\nu' = \pi(\cX \times \cdot)$ satisfying $\mu',\nu' \leq \mu + \nu$. This ensures that $\pi$ is absolutely continuous w.r.t.\ $\kappa$, and so $\varphi(x) + \phi(y) \leq c(x,y)$ $\pi$-almost everywhere. We thus compute
\begin{align*}
    \bar{\mathsf{J}}^\eps(\varphi,\psi) &= \int_\cX \varphi \dd \mu + \int_\cX \psi \dd \nu + \eps \left(\essinf_\kappa \varphi + \essinf_\kappa \psi \right)\\
    &\leq \int_\cX \varphi \dd \mu' + \int_\cX \psi \dd \nu'\\
    &= \int_{\cX^2} \left[ \varphi(x) + \psi(y) \right] \dd \pi(x,y)\\
    &\leq \int_{\cX^2} c \dd \pi = \mathsf{I}_c(\pi),
\end{align*}
where the first inequality uses that $\mu + \nu$ is absolutely continuous w.r.t.\ $\kappa$ and the second uses that $\varphi(x) + \phi(y) \leq c(x,y)$ $\pi$-almost everywhere.
Combining this with Lemma~\ref{lem:alternative-primal-probs-extended}, we conclude by bounding
\begin{equation*}
\ROT(\mu,\nu) = \inf_{\substack{\pi \in \Pi_\plus^\eps(\mu,\nu) \\ \pi(\cdot \times \cX), \pi(\cX \times \cdot) \leq \mu + \nu}} \mathsf{I}_{c}(\pi) \geq \sup_{(\varphi,\psi) \in \bar{\cF}_c} \bar{\mathsf{J}}^\eps(\varphi,\psi).\qedhere
\end{equation*}
\end{proof}

To show that the supremum is achieved, we will prove that $\bar{\cF}_c$ is compact and that $\bar{\mathsf{J}}_\eps$ is upper-semicontinuous with respect to an appropriate topology. In what follows, we consider $L^2(\kappa) \times L^2(\kappa)$ as a Banach space under the norm $\|(\varphi,\psi)\| \coloneqq \|\varphi\|_{L^2(\kappa)} \lor \|\psi\|_{L^2(\kappa)}$ and write $C_0(\cX) \subseteq C_b(\cX)$ for the space of continuous functions with compact support.\vspace{-1mm}

\begin{lemma}
\label{lem:dual-feasible-set-compactness}
Viewed as a subset of $L^2(\kappa) \times L^2(\kappa)$, $\bar{\cF}_c$ is weakly compact.
\end{lemma}\vspace{-1mm}
\begin{proof}
Since $L^2(\kappa)$ is reflexive, so is the space $L^2(\kappa) \times L^2(\kappa)$. Thus, $\bar{\cF}_c$ is weakly compact if and only if it is closed and bounded under this norm. Boundedness is trivial, since $\|(\varphi,\psi)\| \leq R < \infty$ for all $(\varphi,\psi) \in \bar{\cF}_c$. Moreover, by the density of $C_0(\cX)$ in $L^1(\kappa)$, we have that $\|\varphi\|_{L^\infty(\kappa)} \leq R$ if and only if $\int \varphi f \dd \kappa \leq R$ for all $f \in C_0(\cX)$ such that $\|f\|_{L^1(\kappa)} \leq 1$. Since the map $\varphi \mapsto \int \varphi f \dd \kappa$ is weakly continuous for all such $f$, the intersection of these constraints defines a weakly closed set. To see that the final constraint preserves closedness, take feasible $\{(\varphi_n,\psi_n)\}_{n \in \N}$ weakly converging to some $(\varphi,\psi) \in L^2(\kappa) \times L^2(\kappa)$. Since the countable union of null sets is a null set, there exists $N \subseteq \cX$ such that $\varphi_n(x) + \psi_n(y) \leq c(x,y)$ for all $x,y \in \cX \setminus N$ and $n \in \N$. Thus, fixing $y_0 \in \cX \setminus N$ and $f \in C_0(\cX)$ with $f \geq 0$, we have
\begin{align*}
    0 \leq \int_\cX \big(c(x,y_0) &- \varphi_n(x) - \psi_n(y_0)\big) f(x) \dd \kappa(x) \to \int_\cX \big(c(x,y) - \varphi(x) - \psi(y_0)\big) f(x) \dd \kappa(x)
\end{align*}
as $n \to \infty$. Taking an infimum over $f$ gives that $\varphi(x) + \psi(y_0) \leq c(x,y)$ for $\kappa$-almost all $x$. A symmetric argument shows that for fixed $x_0 \in \cX \setminus N$, the inequality holds for $\kappa$-almost all $y$. Combining, we have the inequality for $\kappa$-almost all $x$ and $y$, as desired.
\end{proof}

\begin{lemma}
\label{lem:dual-objective-semicontinuity}
The objective $\bar{\mathsf{J}}^\eps$ is weakly upper-semicontinuous over $\bar{\cF}_c$.
\end{lemma}\vspace{-1mm}
\begin{proof}
Let $\{(\varphi_n,\psi_n)\}_{n \in \N} \subseteq \bar{\cF}_c$ converge weakly to some $(\varphi,\psi) \in L^2(\kappa) \times L^2(\kappa)$. Thus, $\int_\cX \varphi_n \dd \mu = \int_\cX \varphi_n \frac{\dd \mu}{\dd \kappa} \dd \kappa \to \int_\cX \varphi \dd \mu$, since $\frac{\dd \mu}{\dd \kappa}$ is bounded and measurable, belonging to $L^2(\kappa)$. The same argument gives $\int_\cX \psi_n \dd \mu \to \int_\cX \psi \dd \mu$. Mirroring the proof of Lemma~\ref{lem:dual-feasible-set-compactness}, we use the density of $C_0(\cX)$ in $L^1(\kappa)$ to bound
\begin{align*}
\essinf_\kappa(\varphi) = \mspace{-6mu}\inf_{\substack{f \in C_0(\cX)\\ f \geq 0,\, \|f\|_{L^1(\kappa)} \leq 1}} \int_\cX \varphi f \dd \kappa= \mspace{-6mu}\inf_{\substack{f \in C_0(\cX)\\ f \geq 0,\, \|f\|_{L^1(\kappa)} \leq 1}} \lim_{n \to \infty} \int_\cX \varphi_n f \dd \kappa
\geq \limsup_{n \to \infty} \: \essinf_\kappa(\varphi_n).\mspace{-4mu}
\end{align*}
The same argument gives $\essinf_\kappa(\psi) \geq \limsup_{n \to \infty} \essinf_{\kappa}(\psi_n)$. Combining the above we have $\bar{\mathsf{J}}^\eps(\varphi,\psi) \geq \limsup_{n \to \infty} \bar{\mathsf{J}}^\eps(\varphi_n,\psi_n)$, as desired.
\end{proof}

Combining Lemmas~\ref{lem:dual-feasible-set-compactness} and \ref{lem:dual-objective-semicontinuity}, we find there exists $(\varphi_0,\psi_0) \in \bar{\cF}_c$ such that $\ROT(\mu,\nu) = \bar{\mathsf{J}}^\eps(\varphi_0,\psi_0)$. By modifying $\varphi_0$ on a set of $\kappa$-measure 0 (leaving the dual objective unchanged), we may assume that $\|\varphi_0\|_\infty = \|\varphi_0\|_{L^\infty(\kappa)}$, and thus Lemma~\ref{lem:c-transform-structure} implies that $\varphi_0^c$ and $\varphi_0^{cc}$ are continuous and bounded.
As before, taking $c$-transforms to obtain $(\varphi_0^{cc},\varphi_0^c)$ maintains dual feasibility and the objective value, so this pair is still maximizing. Finally, since these two potentials are continuous and $\supp(\kappa) = \cX$, we can substitute $\bar{\mathsf{J}}^\eps$ with $\mathsf{J}^\eps$. That is, we have $
    \ROT(\mu,\nu) = \bar{\mathsf{J}}^\eps(\varphi_0^{cc},\varphi_0^c) = \mathsf{J}^\eps(\varphi_0^{cc},\varphi_0^c) = \sup_{(\varphi,\psi) \in \cF_c} \mathsf{J}^\eps(\varphi,\psi)$,
as desired.
\vspace{-1mm}

\paragraph{Structure of maximizers:}
Take $\varphi \in C_b(\cX)$ maximizing \eqref{eq:RWp-dual}.
Fix any $\mu_\minus = \mu - \alpha$ and $\nu_\minus = \nu - \beta$ optimal for the mass removal formulation of $\ROT(\mu,\nu)$, where $\alpha,\beta \in \eps\cP(\cX)$ satisfy $\alpha \leq \mu$ and $\beta \leq \nu$. Of course, $\mu + \beta$ and $\nu + \alpha$ are then optimal for the mass addition formulation of $\ROT(\mu,\nu)$. By strong duality, $\mu + \beta,\nu + \alpha$ and $\varphi,\varphi^c$ must be a minimax equilibrium, and so
\begin{align}
    \ROT(\mu,\nu) &= \int_\cX \varphi \dd \mu + \int_\cX \varphi^c \dd \nu + \int_\cX \varphi \dd \beta + \int_\cX \varphi^c \dd \alpha \label{eq:dual-maximizers-1}\\
    &=\int_\cX \varphi \dd \mu + \int_\cX \varphi^c \dd \nu + \eps \varphi_{\min} - \eps \varphi_{\max} \label{eq:dual-maximizers-2}.
\end{align}
By Lemma~\ref{lem:c-transform-structure}, $\varphi^c_{\min} = - \varphi_{\max}$ (and minimizers of $\varphi^c$ correspond to maximizers of $\varphi$), and so \eqref{eq:dual-maximizers-1} is strictly less than \eqref{eq:dual-maximizers-2} unless $\supp(\beta) \subseteq \argmin(\varphi)$ and $\supp(\alpha) \subseteq \argmax(\varphi)$.

\begin{remark}[Optimal perturbations]
The above suggests taking the perturbations as $\alpha = \mu|_{\argmax(\varphi)}$ and $\beta = \nu|_{\argmin(\varphi)}$, but we cannot do so in general. Indeed, consider the case where $\mu$ and $\nu$ are uniform discrete measures on $n$ points and $\eps$ is not a multiple of $1/n$. Issues with that approach also arise when $\mu$ and $\nu$ are both supported on $\argmax(\varphi)$ and the optimal $\mu_\minus$ satisfies $\mu_\minus \geq \mu \land \nu$.  
\end{remark}

\subsubsection{Proof of Proposition~\ref{prop:loss-trimming}}
Let $\mu,\nu \in \cP(\cX)$. For this result, we will apply Sion's minimax theorem to the mass-removal formulation of $\ROT$. Mirroring the proof of \cref{thm:RWp-dual}, we compute
\begin{align*}
    \ROT(\mu,\nu) &= \inf_{\substack{\mu_\minusb,\nu_\minusb \in (1-\eps)\cP(\cX)\\ \mu_\minusb \leq \mu,\, \nu_\minusb \leq \nu}} \sup_{\substack{\varphi,\psi \in C_b(\cX)\\ \varphi(x) + \psi(y) \leq c(x,y) }}\int_\cX \varphi \dd \mu_\minus +  \int_\cX \psi \dd \nu_\minus\\
    &=  \sup_{\substack{\varphi,\psi \in C_b(\cX)\\ \varphi(x) + \psi(y) \leq c(x,y)}} \left( \inf_{\substack{\mu_{\minusb} \in (1-\eps)\cP(\cX) \\ \mu_{\minusb} \leq \mu}} \int_\cX \varphi \dd \mu_{\minus} + \inf_{\substack{\nu_{\minusb} \in (1-\eps)\cP(\cX) \\ \nu_\minusb \leq \nu}} \int_\cX \psi \dd \nu_\minus \right)\\
    &=  \sup_{\varphi \in C_b(\cX)} \left(\inf_{\substack{\mu_\minusb \in (1-\eps)\cP(\cX) \\ \mu_\minusb \leq \mu}} \int_\cX \varphi \dd \mu_\minus + \inf_{\substack{\nu_\minusb \in (1-\eps)\cP(\cX) \\ \nu_\minusb \leq \nu}} \int_\cX \varphi^c \dd \nu_\minus\right).
\end{align*}
Here we have used that the infimum constraint set is convex and compact w.r.t.\ the topology of weak convergence, that the supremum constraint set is convex, and that the objective is bilinear and continuous in both arguments (with the function space equipped with the sup-norm).
When $\mu$ and $\nu$ are uniform distributions over $n$ points and $\eps$ is a multiple of $1/n$, this further simplifies to
\begin{align*}
    \ROT(\mu,\nu) &=  \sup_{\varphi \in C_b(\cX)} \left(\min_{\substack{S \subseteq \supp(\mu)\\ |S| = (1-\eps)n}} \frac{1}{n}\sum_{x \in S} \varphi(x) + \min_{\substack{T \subseteq \supp(\nu)\\ |T| = (1-\eps)n}} \frac{1}{n}\sum_{y \in T} \varphi^c(y) \right).\qed
\end{align*}
\vspace{-1mm}

\subsection{Proofs for \cref{sec:robustness}}
\label{subsec:prfs-robustness}

In what follows, resilience of a random variable refers to resilience of its probability law. Mean resilience refers to taking $\mathsf{D} = \mathsf{D}_\mathrm{mean}$.

\subsubsection{Proof of \cref{thm:concrete-minimax-risk-bounds}}
\label{prf:concrete-minimax-risk-bounds}
We begin by showing that resilience w.r.t.\ $\Wp$ is implied by standard mean resilience of the $p$th power of the metric $\mathsf{d}$.

\begin{lemma}
\label{lem:Wp-resilience-from-mean-resilience}
Fix $X \sim \mu \in \cP(\cX)$ such that $\E[\mathsf{d}(X,x_0)^p] \leq \sigma^p$ for some $\sigma \geq 0$ and $x_0 \in \cX$. Suppose further that $\mathsf{d}(X,x_0)^p$ is $(\rho,\eps)$-resilient in mean for some $\rho \geq 0$ and $0 \leq \eps < 1$. Then $\mu \in \cW_p(2\rho^{1/p} + 2\eps^{1/p}\sigma,\eps)$.
\end{lemma}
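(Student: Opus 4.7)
The plan is to fix an arbitrary $\mu' \in \cP(\cX)$ with $\mu' \leq \tfrac{1}{1-\eps}\mu$ and bound $\Wp(\mu, \mu') \leq 2\eps^{1/p}\sigma + 2\rho^{1/p}$. I would begin by setting $\kappa \coloneqq \mu - (1-\eps)\mu' \in \cM_+(\cX)$, which is non-negative by the feasibility of $\mu'$ and has total mass $\eps$. Writing $\mu = (1-\eps)\mu' + \kappa$ and $\mu' = (1-\eps)\mu' + \eps \mu'$, the transport plan that keeps the shared component $(1-\eps)\mu'$ in place via an identity coupling and uses any coupling between $\kappa$ and $\eps \mu'$ on the residual mass immediately yields $\Wp(\mu, \mu')^p \leq \Wp(\kappa, \eps\mu')^p$. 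This reduces the problem to bounding the transport cost between two positive measures of total mass $\eps$.

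Next I would route this reduced transport through the reference point $x_0$ via the triangle inequality,
\begin{equation*}
\Wp(\kappa, \eps\mu') \leq \Wp(\kappa, \eps\delta_{x_0}) + \Wp(\eps\delta_{x_0}, \eps\mu'),
\end{equation*}
and exploit the fact that each of these $\Wp$ terms is a simple moment integral: $\Wp(\eps\delta_{x_0}, \eps\mu')^p = \eps\, \E_{\mu'}[\mathsf{d}(X, x_0)^p]$ and $\Wp(\kappa, \eps\delta_{x_0})^p = \int \mathsf{d}(x, x_0)^p \dd\kappa(x)$. The second of these is straightforward: applying the $(\rho, \eps)$-mean-resilience of $\mathsf{d}(X,x_0)^p$ directly to $\mu' \leq \tfrac{1}{1-\eps}\mu$ gives $\E_{\mu'}[\mathsf{d}(X,x_0)^p] \leq \sigma^p + \rho$, so $\Wp(\eps\delta_{x_0}, \eps\mu') \leq \eps^{1/p}(\sigma^p + \rho)^{1/p}$.

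The main obstacle is bounding the first moment integral, because the rescaled probability measure $\kappa/\eps$ only satisfies $\kappa/\eps \leq \tfrac{1}{\eps}\mu$, which is a \emph{weaker} bound than $\tfrac{1}{1-\eps}\mu$ whenever $\eps < 1/2$, so the direct $(\rho, \eps)$-resilience hypothesis does not apply. To circumvent this I would invoke \cref{fact:large-eps-resilience} to upgrade the $(\rho, \eps)$-mean-resilience of $\mathsf{d}(X,x_0)^p$ into $(\tfrac{1-\eps}{\eps}\rho, 1-\eps)$-mean-resilience, which \emph{does} apply to measures dominated by $\tfrac{1}{\eps}\mu$. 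This upgraded hypothesis yields $\E_{\kappa/\eps}[\mathsf{d}(X,x_0)^p] \leq \sigma^p + \tfrac{1-\eps}{\eps}\rho$, and hence $\int \mathsf{d}(x,x_0)^p \dd\kappa(x) \leq \eps\sigma^p + (1-\eps)\rho$.

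To finish, I would apply the elementary subadditivity $(a+b)^{1/p} \leq a^{1/p} + b^{1/p}$ (valid for $a,b \geq 0$ and $p \geq 1$) to both pieces: the first gives $\Wp(\kappa, \eps\delta_{x_0}) \leq \eps^{1/p}\sigma + (1-\eps)^{1/p}\rho^{1/p} \leq \eps^{1/p}\sigma + \rho^{1/p}$, and the second gives $\Wp(\eps\delta_{x_0}, \eps\mu') \leq \eps^{1/p}\sigma + \eps^{1/p}\rho^{1/p} \leq \eps^{1/p}\sigma + \rho^{1/p}$. Summing these and chaining back through the opening reduction produces $\Wp(\mu, \mu') \leq 2\eps^{1/p}\sigma + 2\rho^{1/p}$, and since $\mu'$ was an arbitrary element of the feasible set, this establishes $\mu \in \cW_p(2\rho^{1/p} + 2\eps^{1/p}\sigma, \eps)$.
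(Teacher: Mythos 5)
Your proposal is correct and follows essentially the same route as the paper's proof: reduce to transporting the residual mass $\kappa=\mu-(1-\eps)\mu'$ against $\eps\mu'$ (the paper's \cref{fact:ignore-shared-mass}), route through $\delta_{x_0}$ by the triangle inequality, control the two resulting moment integrals via the mean-resilience hypothesis together with \cref{fact:large-eps-resilience}, and finish with subadditivity of $t\mapsto t^{1/p}$. The only (cosmetic) difference is that you bound the two moment terms separately while the paper bounds both by a single supremum over $\beta\leq\frac{1}{1-\tau}\mu$ with $\tau=\eps\lor(1-\eps)$; the $\eps=0$ case you leave implicit is trivial, as the paper also notes.
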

\begin{proof}
Let $\eps > 0$ (noting that the result is trivial when $\eps = 0$), and fix any $\nu \leq \frac{1}{1-\eps}\mu$. Writing $\mu = (1-\eps)\nu + \eps \alpha$ for the appropriate $\alpha \in \cP(\cX)$ and taking $\tau \coloneqq \eps \lor (1-\eps)$, we have
\begin{align*}
    \Wp(\mu,\nu) &= \Wp\bigl((1-\eps)\nu + \eps \alpha,\nu\bigr)\\
    &\leq \Wp(\eps\alpha, \eps\nu) \tag{\Cref{fact:ignore-shared-mass}}\\
    &= \eps^{\frac{1}{p}} \Wp(\alpha,\nu) \tag{homogeneity of $\Wp^p$}\\
    &\leq \eps^{\frac{1}{p}} \bigl(\Wp(\alpha,\delta_{x_0}) + \Wp(\delta_{x_0},\nu)\bigr) \tag{triangle inequality for $\Wp$}\\
    &= \eps^{\frac{1}{p}} \left(\E_\alpha[\mathsf{d}(Y,x_0)^p]^{\frac{1}{p}} + \E_\nu[\mathsf{d}(Y,x_0)^p]^{\frac{1}{p}}\right) \tag{definition of $\Wp$}\\
    &\leq 2\eps^{\frac{1}{p}} \sup_{\substack{\beta \in \cP(\cX), \beta \leq \frac{1}{1-\tau} \mu}} \E_\beta[\mathsf{d}(Y,x_0)^p]^{\frac{1}{p}}.
\end{align*}
Now, fix any $\beta \leq \frac{1}{1-\tau}\mu$ and write $\bar{\rho} \coloneqq (1 \lor \frac{1-\eps}{\eps})\rho$. We have
\begin{align*}
    \E_\beta[d(Y,x_0)^p] &\leq \bigl|\E_\beta[d(Y,x_0)^p] - \E_\mu[d(X,x_0)^p]\bigr| + \E_\mu[d(X,x_0)^p] \tag{triangle inequality}\\
    &\leq \bar{\rho} + \E_\mu[d(X,x_0)^p] \tag{Lemma 10 of \citealp{steinhardt2018resilience}}\\
    &\leq \bar{\rho} + \sigma^p. \tag{moment bound for $\mu$}
\end{align*}
Combining this with the previous bound, we obtain
\begin{align*}
    \Wp(\mu,\nu) &\leq 2\eps^{\frac{1}{p}} \left(\bar{\rho} + \sigma^p\right)^{\frac{1}{p}}
    \leq 2\eps^{\frac{1}{p}}(\bar{\rho}^{\frac{1}{p}} + \sigma) %
    \leq 2 \rho^{\frac{1}{p}} + 2 \eps^{\frac{1}{p}}\sigma, %
\end{align*}
where the last step is by definition of $\bar{\rho}$. Thus, $\mu$ is $(2\rho^{1/p} + 2\eps^{1/p}\sigma,\eps)$-resilient w.r.t.\ $\Wp$.
\end{proof}
\vspace{-3mm}

We now prove risk bounds for \cref{thm:concrete-minimax-risk-bounds}, beginning with the class $\cG_q(\sigma)$ for $q \geq p$. If $X \sim \mu \in \cG_q(\sigma)$, then there exists $x_0 \in \cX$ such that $\E\left[\big(\mathsf{d}(x_0,X)^p\big)^{q/p}\right] = \E\big[\mathsf{d}(x_0,X)^q\big] \leq \sigma^q = (\sigma^p)^{q/p}$. By Lemmas C.3 and E.2 of \cite{zhu2019resilience}, $\mathsf{d}(x_0,X)^p$ is thus $\bigl(O(\sigma^p \eps^{1-p/q}),\eps\bigr)$-resilient in mean for all $0 \leq \eps \leq 0.99$. Noting that $\cG_q(\sigma) \subseteq \cG_p(\sigma)$, Lemma~\ref{lem:Wp-resilience-from-mean-resilience} gives $\mu \in \cW_p\bigl(O(\sigma \eps^{1/p-1/q}),\eps\bigr)$ for all $0 \leq \eps \leq 0.99$. Lemma~\ref{prop:minimax-risk} then implies the upper risk bound, while the lower bound was provided in the discussion following the theorem.

Next, we fix $\cX = \R^d$ and consider $X \sim \mu \in \cP(\R^d)$ such $\E[|\langle \theta, X - \E[X] \rangle|^q] \leq \sigma^q$ for some $q > p$ (capturing $\cG_\mathrm{cov}(\sigma)$ as a special case when $q = 2$). By Lemmas 5 of \cite{nietert2022sliced}, we then have $\mu \in \cG_q\bigl(O(\sqrt{1 + d/q} \, \sigma)\bigr)$, implying the desired risk bound. For the lower bound, consider the pair of distributions $\mu = \delta_0$ and $\nu = (1-\eps)\delta_0 + \eps \cN(0,\sigma^2/\eps)$. By design, for $p < 2$, we have $\mu,\nu \in \cG_\mathrm{cov}(\sigma)$ and $\|\mu - \nu\|_\tv \leq \eps$, so $\Wp(\mu,\nu) = \sigma \eps^{1/p - 1/2} \E_{Z \sim \cN(0,I_d)}[\|Z\|^p]^{1/p} = \Omega (\sigma \sqrt{d}\eps^{1/p - 1/2})$, which yields a matching lower risk bound for $\cG_\mathrm{cov}(\sigma)$.

Finally, let $X \sim \mu \in \cG_\mathrm{subG}(\sigma)$. By Proposition 2.5.2 of \cite{vershynin2018}, we have that $\E[|\langle \theta, X - \E[X] \rangle|^q]^{1/q} \lesssim \sqrt{q} \sigma$ for all $q \geq 1$. Taking $q = p \lor \log(1/\eps)$, we obtain $\mu \in \cG_{p \lor \log(1/\eps)}\bigl(\sqrt{d + p \lor \log(1/\eps)}\bigr)$, implying that $R_{p,\infty}(\cG_\mathrm{subG}(\sigma),\eps)$ is bounded by\vspace{-1mm}
\begin{align*}
    \sigma \sqrt{d + p \lor \log\big(\tfrac{1}{\eps})} \, \eps^{\frac{1}{p} - \frac{1}{p \lor \log(1/\eps)}} = \sigma \sqrt{d + p \lor \log\big(\tfrac{1}{\eps}\big)} \, \left(1 \land e\eps^{\frac{1}{p}}\right) \lesssim \sigma \sqrt{d + p + \log\big(\tfrac{1}{\eps}\big)} \, \eps^{\frac{1}{p}},\vspace{-1mm}
\end{align*}
for all $0 \leq \eps \leq 0.49$. For the lower bound, first consider the pair of distributions $\mu = \delta_0$ and $\nu = (1-\eps)\delta_0 + \eps \delta_{x}$ for some $x \in \R^d$ with $\|x\| = \sigma \sqrt{\log(1/\eps)}$. By design, we have $\mu,\nu  \in \cG_\mathrm{subG}(\sigma)$ and $\|\mu - \nu\|_\tv \leq \eps$, so $\Wp(\mu,\nu) = \sigma \sqrt{\log(1/\eps)} \, \eps^{1/p}$ serves as a lower bound. Similarly, the pair $\mu = \delta_0$ and $\nu = (1-\eps)\delta_0 + \eps \cN(0,\sigma)$ gives a lower bound of $\Wp(\mu,\nu) = \sigma \eps^{1/p} \E_{Z \sim \cN(0,I_d)}[\|Z\|^p]^{1/p} = \Omega(\sigma \sqrt{d + p} \, \eps^{1/p})$. Combining these two bounds gives the desired lower risk bound of $\Omega\big(\sigma \sqrt{d + p + \log(1/\eps)} \, \eps^{1/p}\big)$.\qed

\subsubsection{Proof of \cref{thm:finite-sample-minimax-risk}}
\label{prf:finite-sample-minimax-risk}
For the upper bound, it remains to prove Lemma~\ref{lem:RWp-modulus}.
\medskip

\begin{proof}[Proof of Lemma~\ref{lem:RWp-modulus}]
Let $0 \leq \eps \leq 0.99$, fix $\mu,\nu \in \cW_p(\rho,\eps)$, and take $\mu_\minus,\nu_\minus \in (1-\eps)\cP(\cX)$ with $\mu_\minus \leq \mu$ and $\nu_\minus \leq \nu$ optimal for the mass-removal formulation of $\RWp(\mu,\nu)$. Then, \vspace{-2mm}
\begin{align*}
    \Wp(\mu,\nu) &\leq \Wp\bigl(\mu,\tfrac{1}{1-\eps}\mu_\minus\bigr) + \Wp\bigl(\tfrac{1}{1-\eps}\mu_\minus,\tfrac{1}{1-\eps}\nu_\minus\bigr) + \Wp\bigl(\tfrac{1}{1-\eps}\nu_\minus\bigr) \leq \left(\frac{1}{1-\eps}\right)^{\!\frac{1}{p}} \RWp(\mu,\nu) + 2\rho,\vspace{-3mm}%
\end{align*}
implying the lemma.
\end{proof}

For the lower bound, it remains to prove that $R_{p,\infty}(\cG,\eps/4) \leq 8 R_{p,n}(\cG,\eps)$. To see this, take $\mu \in \cG$, fix any distribution $\tilde{\mu}$ with $\|\mu - \tilde{\mu}\|_\tv \leq \eps/4$, and let $\mathsf{T}_n$ be a minimax optimal $n$-sample estimator achieving $R_{p,n}(\cG,\eps)$. By the coupling formulation of the TV distance, there exists $\pi \in \Pi(\mu,\tilde{\mu})$ such that $(X,\tilde X)\sim \pi$ satisfy $\PP(X \neq \tilde{X}) \leq \eps/4$. Consequently, the product distribution $\pi^{\otimes n}$ is a coupling of $\PP_n \defeq \mu^{\otimes n}$ and $\tilde{\PP}_n \defeq \tilde{\mu}^{\otimes n}$ such that the contamination fraction $\frac{1}{n}\sum_{i=1}^n \mathds{1} {\bigl\{X_i \neq \tilde{X}_i\bigr\}}$ has expected value at most $\eps/4$. By Markov's inequality, there exists an event $E$ with $\pi^{\otimes n}(E) \geq 3/4$ such that $\frac{1}{n}\sum_{i=1}^n \mathds{1} {\bigl\{X_i \neq \tilde{X}_i\bigr\}} \leq \eps$ under $\pi$ conditioned on $E$. Taking $\tilde{\PP}'_n$ to be the right marginal of $\pi^{\otimes n}$ conditioned on $E$, we have $\tilde{\PP}_n' \in \cM^{\mathrm{adv}}(\mu,\eps)$ by construction, and so $\E_{\tilde{\PP}_n'}\bigl[\Wp\bigl(\mathsf{T}_n(\tilde{\mu}_n),\mu)\bigr)\bigr] \leq R_{n,p}(\cG,\eps)$. Thus, by Markov's, $\Wp\bigl(\mathsf{T}_n(\tilde{\mu}_n),\mu)\bigr) \leq 4R_{n,p}(\cG,\eps)$ with probability at least $3/4$ under $\tilde{\PP}_n'$. Combining with the previous bound we have that $\Wp\bigl(\mathsf{T}_n(\tilde{\mu}_n),\mu)\bigr) \leq 4R_{n,p}(\cG,\eps)$ with probability at least $3/4 \cdot 3/4 > 1/2$ under $\tilde{\PP}_n$.

Now, set $\mathsf{T}(\tilde{\mu})$ to be any $\nu \in \cG$ such that $\Wp\bigl(\mathsf{T}_n(\tilde{\mu}_n),\nu)\bigr) \leq 4R_{n,p}(\cG,\eps)$ with probability greater than 1/2 under $\tilde{\PP}_n$ (note that such $\nu$ can be chosen as a function only of $\tilde{\mu}$). By a union bound and the triangle inequality, $\Wp\bigl(\mu,\mathsf{T}(\tilde{\mu})\bigr) \leq 8R_{n,p}(\cG,\eps)$ with positive probability under $\tilde{\PP}_n$. Since this inequality involves no random variables, it must hold unconditionally, thus bounding $R_{p,\infty}(\cG,\eps/4) \leq 8R_{n,p}(\cG,\eps)$.\qed

\subsubsection{Proof of Corollary~\ref{cor:concrete-finite-sample-minimax-risk-bounds}}
\label{prf:concrete-finite-sample-minimax-risk-bounds}
Given \cref{thm:finite-sample-minimax-risk} and \cref{thm:concrete-minimax-risk-bounds}, it remains to show that, for $\cG$ as in the corollary statement and $\mu \in \cG$, we have\vspace{-1mm}
\begin{equation*}
    \E\left[\Wp(\hat{\mu}_n,\mu)\right] \leq C_{p,q,d}\, n^{-1/d} \quad \text{ and } \quad R_{p,n}(\cG,0) \geq c_{p,q,d}\, n^{-1/d},\vspace{-1mm}
\end{equation*}
where dependence on $q$ only appears for $\cG = \cG_q(\sigma)$. The upper bound follows by Theorem 3.1 of \cite{lei2020convergence} with the substitution of $q$ with $2p$ for $\cG = \cG_\mathrm{subG}(\sigma)$ and $2$ for $\cG = \cG_\mathrm{cov}(\sigma)$. The lower bound follows by Theorem 1 of \cite{nilesweed22minimax} with smoothness parameter $s$ taken as 0 (noting that their asymptotic notation hides dependence on the constants we specify here).\qed

\subsubsection{Proof of \cref{thm:robust-distance-estimation}}
\label{prf:robust-distance-estimation}

Fix $0 \leq \eps < 1/3$ and $\tau = 1 - (1-3\eps)^{1/p} \in [3\eps/p,3\eps]$ as in the theorem statement. Take $\mu,\nu \in \cW_p(\rho,3\eps)$ with empirical measures $\hat{\mu}_n$ and $\hat{\nu}_n$, respectively, and let $\tilde{\mu}_n,\tilde{\nu}_n \in \cP(\cX)$ be such that $\|\tilde{\mu}_n - \hat{\mu}_n\|_\tv, \|\tilde{\nu}_n - \hat{\nu}_n\|_\tv \leq \eps$. To begin, we bound $\RWp(\tilde{\mu}_n,\tilde{\nu}_n)$ from below. By Proposition~\ref{prop:RWp-structural-properties}, we have $\RWp(\tilde{\mu}_n,\tilde{\nu}_n) \geq \Wp^{3\eps}(\mu,\nu) - \Wp(\mu,\hat{\mu}_n) - \Wp(\nu,\hat{\nu}_n)$. Then, by resilience of $\mu$ and $\nu$ and Proposition~\ref{prop:alternative-primal-probs}, we bound
\begin{align*}
    \Wp^{3\eps}(\mu,\nu) &= (1-\tau)\inf_{\substack{\mu',\nu' \in \cP(\cX)\\ \mu' \leq \frac{1}{1-3\eps}\mu,\, \nu' \leq \frac{1}{1-3\eps}\nu}} \Wp(\mu',\nu')\\
    &\geq (1-\tau) \bigl(\Wp(\mu,\nu) - 2\rho \bigr)\\
    &\geq (1-\tau) \Wp(\mu,\nu) - 2\rho.
\vspace{-1mm}
\end{align*}
Combining, we obtain $\RWp(\tilde{\mu}_n,\tilde{\nu}_n) \geq (1-\tau)\Wp(\mu,\nu) - 2\rho - \Wp(\mu,\hat{\mu}_n) - \Wp(\nu,\hat{\nu}_n)$.

For the upper bound, we note that by Lemma~\ref{lem:alternative-primal-probs-extended} there exist $\tilde{\mu},\tilde{\nu} \in \cP(\cX)$ such that $\|\tilde{\mu} - \mu\|_\tv \leq \eps$, $\|\tilde{\nu} - \nu\|_\tv \leq \eps$, $\Wp(\tilde{\mu},\tilde{\mu}_n) \leq \Wp(\mu,\hat{\mu}_n)$, and $\Wp(\tilde{\nu},\tilde{\nu}_n) \leq \Wp(\nu,\hat{\nu}_n)$. Hence, Proposition~\ref{prop:RWp-structural-properties} gives that
\begin{equation*}
    \RWp(\tilde{\mu}_n,\tilde{\nu}_n) \leq \RWp(\tilde{\mu},\tilde{\nu}) + \Wp(\tilde{\mu},\tilde{\mu}_n) + \Wp(\tilde{\nu},\tilde{\nu}_n) \leq \RWp(\tilde{\mu},\tilde{\nu}) + \Wp(\mu,\hat{\mu}_n) + \Wp(\nu,\hat{\nu}_n).
\end{equation*}
Defining the midpoint distributions $\bar{\mu} = \frac{1}{1-\|\tilde{\mu} - \mu\|_\tv} \tilde{\mu} \land \mu$ and $\bar{\nu} = \frac{1}{1-\|\tilde{\nu} - \nu\|_\tv} \tilde{\nu} \land \nu$, we have
\begin{align*}
    \RWp(\tilde{\mu},\tilde{\nu}) &= (1-\eps)^{\frac{1}{p}} \inf_{\substack{\mu',\nu' \in \cP(\cX) \\ \mu' \leq \frac{1}{1-\eps}\tilde{\mu},\, \nu' \leq \frac{1}{1-\eps}\tilde{\nu}}} \Wp(\mu',\nu')\\
    &< \Wp(\bar{\mu},\bar{\nu})\\
    &\leq \Wp(\mu,\nu) + \Wp(\bar{\mu},\mu) + \Wp(\bar{\nu},\nu)\\
    &\leq \Wp(\mu,\nu) + 2\rho.
\end{align*}
All together, we obtain
\begin{equation*}
    \RWp(\tilde{\mu}_n,\tilde{\nu}_n) \leq \Wp(\mu,\nu) + 2\rho + \Wp(\mu,\hat{\mu}_n) + \Wp(\nu,\hat{\nu}_n).
\end{equation*}
Combining the upper and lower bounds gives the theorem.
\qed

\begin{remark}[Breakdown point]
For $\mu,\nu \in \cP(\cX)$, take minimizers $\mu_\minus = \mu - \alpha$ and $\nu_\minus = \nu - \beta$ for the $\Wp^{3\eps}(\mu,\nu)$ mass removal problem, where $\alpha,\beta \in 3\eps \cP(\cX)$ satisfy $\alpha \leq \mu$ and $\beta \leq \nu$. Defining $\tilde{\mu} \coloneqq \mu - \alpha/3 + \beta/3$ and $\tilde{\nu} \coloneqq \nu - \beta/3 + \alpha/3$, we find that
\begin{equation*}
\RWp(\tilde{\mu},\tilde{\nu}) \leq \Wp(\mu - 2\alpha/3 + \beta/3, \nu - 2\beta/3 + \alpha/3) \leq \Wp(\mu - \alpha, \nu - \beta) = \Wp^{3\eps}(\mu,\nu).
\end{equation*}
Since $\|\tilde{\mu} - \mu\|_\tv, \|\tilde{\nu} - \nu\|_\tv \leq \eps$, we cannot obtain meaningful robust estimation guarantees when $\eps \geq 1/3$ and $\RWp(\tilde{\mu},\tilde{\nu}) = 0$.
\end{remark}

\subsubsection{Proof of Corollary~\ref{cor:asymptotic-consistency}}
Since $\eps_n = o(\tau_n)$ as $n \to \infty$, there exists some $n_0 \in \N$ such that $\tau_n \geq \eps_n$ for all $n \geq n_0$. Since $\mu,\nu \in \cP_q(\cX)$ for $q > p$, there exists $\sigma < \infty$ such that $\mu,\nu \in \cG_q(\sigma)$, and $\Wp(\mu,\nu) < \infty$. By the proof of \cref{thm:concrete-minimax-risk-bounds}, we then have $\mu \in \cW_p\bigl(O(\sigma \eps^{1/p-1/q}),\eps\bigr)$ for $0 \leq \eps \leq 0.99$.
Thus, \cref{thm:robust-distance-estimation} gives that\vspace{-1mm}
\begin{equation*}
    |\Wp^{\tau_n}(\tilde{\mu}_n,\tilde{\nu}_n) - \Wp(\mu,\nu)| \leq O(\sigma \tau_n^{1/p-1/q}) + 3\tau_n\Wp(\mu,\nu) + \Wp(\hat{\mu}_n,\mu) + \Wp(\hat{\nu}_n,\nu) \to 0\vspace{-1mm}
\end{equation*}
almost surely as $n \to \infty$.\qed

\subsubsection{Proof of Proposition~\ref{prop:two-sample-and-independence-testing}}
By the proof of \cref{thm:robust-distance-estimation}, we have, for $\mu,\nu \in \cW_p(\rho,3\eps)$, that
\begin{align*}
    \Wp(\mu,\nu) &\geq \RWp(\tilde{\mu}_n,\tilde{\nu}_n) - 2\rho - \Wp(\mu,\hat{\mu}_n) - \Wp(\nu,\hat{\nu}_n)\\
    \Wp(\mu,\nu) &\leq (1-3\eps)^{-\frac{1}{p}} \left( \RWp(\tilde{\mu}_n,\tilde{\nu}_n) + 2\rho + \Wp(\mu,\hat{\mu}_n) + \Wp(\nu,\hat{\nu}_n) \right)
\end{align*}
Since $\mu,\nu \in \cP_p(\cX)$ and $\eps \leq 1/4$, Theorem 7.12 of \citet{villani2003} gives
\begin{align*}
    \limsup_{n \to \infty} \RWp(\tilde{\mu}_n,\tilde{\nu}_n) - 2\rho \leq \Wp(\mu,\nu) \leq \liminf_{n \to \infty} 4\RWp(\tilde{\mu}_n,\tilde{\nu}_n) + 8\rho \quad \text{a.s.}
\end{align*}
Thus, for the two-sample testing application, under the null hypothesis $H_0:\mu = \nu$, we have $\lim_{n \to \infty} \mathds{1} \{ \RWp(\tilde{\mu}_n,\tilde{\nu}_n) > 3\rho\} = 0$ a.s., and, under the alternative $H_1: \Wp(\mu,\nu) > 45 \rho$, we have $\lim_{n \to \infty} \mathds{1} \{ \RWp(\tilde{\mu}_n,\tilde{\nu}_n) > 3\rho\} = 1$ a.s. 

\medskip
For independence testing, we note that $\kappa \in \cP(\cX^2)$ is $(\rho,3\eps)$-resilient w.r.t.\ $\Wp$ on the product space since both of its marginals lie in $\cW_p(\rho,3\eps)$, by our choice of $\bar{\mathsf{d}}$. Thus, we have $\lim_{n \to \infty} \mathds{1} \{ \RWp(\tilde{\kappa}_n,\tilde{\kappa}_{1,n} \otimes \tilde{\kappa}_{2,n}) > 3\rho\} = 0$ a.s., under the null, and $\lim_{n \to \infty} \mathds{1} \{ \RWp(\tilde{\kappa}_n,\tilde{\kappa}_{1,n} \otimes \tilde{\kappa}_{2,n}) > 3\rho\} = 1$ a.s., under the alternative, as desired.\qed

\section{Concluding Remarks}\label{SEC:summary}

To perform robust distribution estimation under $\Wp$, this paper introduced the outlier-robust Wasserstein distance $\RWp$, which measures proximity between probability distributions using POT. By applying MDE under this robust distance, we achieved minimax-optimal population-limit and near-optimal finite-sample risk guarantees. Our analysis relied on a new approximate triangle inequality for POT and %
an equivalence result between mass addition and mass removal formulations of $\RWp$. %
These robust estimation guarantees were complemented by a comprehensive duality theory, mirroring the classical Kantorovich dual up to a regularization term scaling with the sup-norm of the dual potential. This gave rise to an elementary robustification technique for WGAN which enabled generative modeling experiments with contaminated image data.
We also addressed the problem of estimating the Wasserstein distance itself based on contaminated data (as opposed to estimating a distribution under the distance), and showed that $\RWp$ serves as a near-optimal and efficiently computable estimate. 
Moving forward, we hope that our framework of MDE under POT can find broader applications to both the theory and practice of robust statistics.

\subsection*{Acknowledgements}

The authors would like to thank Benjamin Grimmer for helpful conversations surrounding minimax optimization, Jacob Steinhardt and Adam Sealfon for useful discussions on robust statistics, and Jason Gaitonde for advice on high-dimensional probability. S. Nietert was supported by the National Science Foundation (NSF) Graduate Research Fellowship under Grant DGE-1650441. R. Cummings was supported in part by the NSF CAREER award under Grant CNS-1942772, a Mozilla Research Grant, and a JPMorgan Chase Faculty Research Award. Z. Goldfeld was supported in part by the NSF CAREER award under Grant CCF-2046018, an NSF Grant DMS-2210368, and the
IBM Academic Award.

\bibliographystyle{abbrvnat}
\bibliography{references}

\begin{appendices}
\crefalias{section}{appendix}
\crefalias{subsection}{appendix}
\section{Asymmetric Robust Distance}
\label{app:asymmetric-results}

Define the asymmetric robust $p$-Wasserstein distance with robustness radii $\eps_1,\eps_2 \in [0,1]$ by
\begin{equation*}
    \Wp^{\eps_1,\eps_2}(\mu,\nu) \defeq \ \inf_{\substack{\mu' \in \cP(\cX),\, \mu' \leq \frac{1}{1-\eps_1}\mu\\
    \nu' \in \cP(\cX),\, \nu' \leq \frac{1}{1-\eps_2}\nu}} \Wp\bigl(\mu',\nu'\bigr),
\end{equation*}
so that $\RWp = (1-\eps)^{1/p}\,\Wp^{\eps,\eps}$ and $\Wp(\mu \| \nu) = \Wp^{\eps,0}(\mu,\nu)$.
Our results for robust distribution estimation still hold for MDE under $\RWp(\cdot\|\cdot)$ if we take care to pass the contaminated empirical measure as the left argument. This extension relies on the following lemma.

\begin{lemma}
\label{lem:one-sided-RWp-bound}
Fix $\mu \in \cW_p(\rho,\eps)$ and $\nu \in \cP(\cX)$. Then, for any $\tilde{\nu} \in \cP(\cX)$ such that $\|\tilde{\nu} - \nu\|_\tv \leq \eps$, we have $\RWp(\tilde{\nu} \| \mu) \leq (1-\eps)^{-1/p}\bigl(\Wp(\nu,\mu) + \rho\bigr)$.
\end{lemma}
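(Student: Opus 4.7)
The plan is to exhibit an explicit feasible sub-measure $\mu'\leq\tilde{\mu}_n$ of mass $1-\eps$ for the one-sided partial OT problem and then bound $\Wp(\mu',(1-\eps)\mu)$ by $\Wp(\hat{\mu}_n,\mu)+\rho$ via two triangle-inequality hops, routed through a carefully chosen sub-measure of $\mu$. Since $\|\tilde{\mu}_n-\hat{\mu}_n\|_\tv\leq\eps$, the shared mass $\tilde{\mu}_n\land\hat{\mu}_n$ has total mass at least $1-\eps$, so I pick $\mu'\leq\tilde{\mu}_n\land\hat{\mu}_n$ with $\mu'(\cX)=1-\eps$. Then $\mu'\leq\tilde{\mu}_n$ certifies feasibility for $\RWp(\tilde{\mu}_n\|\mu)$, and simultaneously $\mu'\leq\hat{\mu}_n$, which is the key leverage for the rest of the argument.

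Next, I transfer $\mu'$ through the clean transport to obtain a matching sub-measure of $\mu$. Let $\pi\in\Pi(\hat{\mu}_n,\mu)$ be optimal for $\Wp(\hat{\mu}_n,\mu)$. Writing $\hat{\mu}_n=\mu'+(\hat{\mu}_n-\mu')$ and applying the coupling decomposition in \cref{fact:coupling-decomposition}, I split $\pi=\pi_1+\pi_2$ where $\pi_1\in\cM_+(\cX^2)$ has first marginal $\mu'$. Setting $\nu':=\pi_1(\cX\times\cdot)$, I obtain $\nu'\leq\mu$ with $\nu'(\cX)=1-\eps$, and $\pi_1\in\Pi(\mu',\nu')$ inherits a transport cost no larger than that of $\pi$, giving
\[
\Wp(\mu',\nu')\leq\Wp(\hat{\mu}_n,\mu).
\]

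Finally, I invoke resilience of $\mu$ to compare $\nu'$ with $(1-\eps)\mu$. The renormalization $\nu'/(1-\eps)$ is a probability measure bounded above by $\frac{1}{1-\eps}\mu$, so the $(\rho,\eps)$-resilience of $\mu$ w.r.t.\ $\Wp$ yields $\Wp\bigl(\mu,\nu'/(1-\eps)\bigr)\leq\rho$. By $p$-homogeneity $\Wp(c\alpha,c\beta)=c^{1/p}\Wp(\alpha,\beta)$, this gives $\Wp(\nu',(1-\eps)\mu)\leq(1-\eps)^{1/p}\rho\leq\rho$. Combining with the triangle inequality,
\[
\RWp(\tilde{\mu}_n\|\mu)\leq\Wp(\mu',(1-\eps)\mu)\leq\Wp(\mu',\nu')+\Wp(\nu',(1-\eps)\mu)\leq\Wp(\hat{\mu}_n,\mu)+\rho.
\]
There is no real obstacle here: the argument mirrors the two-sided bounds derived via \cref{prop:RWp-triangle-inequality} elsewhere in the paper, with the only subtle point being the careful mass-$1$ versus mass-$(1-\eps)$ bookkeeping when applying resilience to the sub-measure $\nu'$.
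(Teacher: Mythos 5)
Your proof is correct and follows essentially the same route as the paper's: both select a mass-$(1-\eps)$ sub-measure of the shared mass $\hat{\mu}_n \land \tilde{\mu}_n$, push it through an optimal coupling for $\Wp(\hat{\mu}_n,\mu)$ to obtain a matching sub-measure of $\mu$ (the paper invokes the argument of \cref{lem:arrow-3}, you unpack it via \cref{fact:coupling-decomposition}), and then conclude with resilience and the triangle inequality. The mass bookkeeping in your resilience step is handled correctly.
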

\begin{proof}
Write $\nu' \coloneqq \frac{1-\eps}{\nu \land \tilde{\nu}(\cX)}\nu \land \tilde{\nu}$. By design, we have $\nu' \leq \nu$, $\nu' \leq \tilde{\nu}$, and $\nu'(\cX) = 1-\eps$. By Lemma~\ref{lem:OT-TV-commutativity}, there exists $\mu' \in (1-\eps)\cP(\cX)$ with $\mu' \leq \mu$ such that $\Wp(\mu',\nu') \leq \Wp(\mu,\nu)$. We then compute
\begin{align*}
    (1-\eps)^{1/p}\RWp(\tilde{\nu} \| \mu) &\leq \Wp(\nu',(1-\eps)\mu)\\
    &\leq \Wp(\nu',\mu') + \Wp(\mu',(1-\eps)\mu)\\
    &\leq \Wp(\nu,\mu) + \Wp(\mu',(1-\eps)\mu)\\
    &\leq \Wp(\nu,\mu) + (1-\eps)^{1/p}\rho\\
    &\leq \Wp(\nu,\mu) + \rho.\qedhere
\end{align*}
\end{proof}

\noindent Next, given any $\tilde{\mu},\nu \in \cP(\cX)$, it follows from definitions that $\RWp(\tilde{\mu},\nu) \leq \RWp(\tilde{\mu}\|\nu)$, since the feasible set of input pairs for the two-sided problem is larger. Recall now the setup for the proof of \cref{thm:finite-sample-minimax-risk}. We have clean measure $\mu \in \cG \subseteq \cW_p(\rho,2\eps)$, empirical measure $\hat{\mu}_n$, and $\eps$-corrupted measure $\tilde{\mu}_n$. Taking $\mathsf{T}$ as the $\delta$-approximate minimum distance estimator over $\cG$ under $\RWp(\cdot\|\cdot)$, minding the order of arguments, we have
\begin{align*}
    \Wp^{2\eps}\bigl(\mu,\mathsf{T}(\tilde{\mu}_n)\bigr) &\leq \RWp(\mu,\tilde{\mu}_n) + \RWp\bigl(\tilde{\mu}_n,\mathsf{T}(\tilde{\mu}_n)\bigr) \tag{Proposition~\ref{prop:RWp-structural-properties}}\\
    &\leq \RWp(\mu \| \tilde{\mu}_n) + \RWp\bigl(\tilde{\mu}_n,\mathsf{T}(\tilde{\mu}_n)\bigr)\\
    &\leq 2\RWp(\mu\|\tilde{\mu}_n) + \delta \tag{MDE guarantee and $\mu \in \cG$}\\
    &\lesssim \Wp(\mu,\hat{\mu}_n) + \rho + \delta, \tag{Lemma~\ref{lem:one-sided-RWp-bound}}\vspace{-1mm}
\end{align*}
At this point, the remainder of the proof goes through, and final bound still holds up to constant factors. Next, although the proof of Proposition~\ref{prop:alternative-primal-probs} was given for the symmetric distance, an identical argument reveals that the following variant holds in general:\vspace{-1mm}
\begin{equation*}
    \Wp^{\eps_1,\eps_2}(\mu,\nu) = \frac{1}{[(1-\eps_1)(1-\eps_2)]^{1/p}}\inf_{\substack{\mu' \geq (1-\eps_2)\mu, \, \nu' \geq (1-\eps_1)\nu\\ \mu'(\cX) = \nu'(\cX) = 1 - \eps_1\eps_2}} \Wp(\mu',\nu').\vspace{-1mm}
\end{equation*}
We now translate \cref{thm:RWp-dual} to the asymmetric case, following the same argument of the original proof to write $\Wp^{\eps_1,\eps_2}(\mu,\nu)^p$ as\vspace{-1mm}
\begin{align*}
    \sup_{\varphi \in C_b(\cX)} \frac{1}{1-\eps_1} \int_\cX \varphi \dd \mu + \frac{1}{1-\eps_2} \int_\cX \varphi^c \dd \nu + \frac{\eps_2}{1-\eps_2} \inf_x \varphi(x) + \frac{\eps_1}{1-\eps_1} \sup_x \varphi(x).\vspace{-1mm}
\end{align*}
Note that this matches the symmetric case when $\eps_1 = \eps_2$ (up to a multiplicative factor of $(1-\eps)^{-1}$) and matches the one-sided dual used in Section~\ref{sec:experiments} when $\eps_2 = 0$.

\section{Application to Sliced OT}
\label{app:sliced}

Here, we extend the theory developed in the main paper to projection-robust and sliced OT. These variants of the classic Wasserstein distance are obtained by taking either an average or maximum of $\Wp$ between lower-dimensional projections. Throughout, we fix $\cX \subseteq \R^d$ and $k \in \{1,\dots,d\}$. 
Write $\stfl \coloneqq \{ U \in \R^{d \times k} : U^\top U = \I_k \}$ for the Stiefel manifold of orthonormal $k$-frames in $\R^d$ and $\sigma_k \in \cP(\stfl)$ for its Haar measure.
For $U \in \stfl$, let $\mathfrak{p}^U : \R^d \to \R^k$ denote the corresponding projection mapping $\mathfrak{p}^U(x) \coloneqq U^\top x$.  Also let $U_k \coloneqq (\I_k \:\: 0_{d-k,d-k})^\top \in \stfl$ be the
standard projection matrix onto $\R^k$.
We then define the \emph{$k$-dimensional average-sliced Wasserstein distance} between $\mu,\nu \in \cP(\R^d)$ by
\vspace{-1mm}
\begin{align*}
    \SWpk(\mu,\nu) \coloneqq \lft(\int_{\stfl} \Wp\lft(\projU \mu, \projU \nu\rght)^p \dd \sigma_k(U)\rght)^\frac{1}{p},
\vspace{-1mm}
\end{align*}
and the \emph{$k$-dimensional max-sliced Wasserstein distance} between them by
\begin{align*}
    \MWpk(\mu,\nu) \coloneqq \sup_{U \in \stfl} \Wp\lft(\projU \mu, \projU \nu\rght).
\vspace{-1mm}
\end{align*}
In the literature, ``sliced OT'' most typically refers to the setting where $k=1$, while ``projection-robust OT'' refers to our max-sliced definition. Both variants are metrics over $\cP_p(\R^d)$ which generate the same topology as classic $\Wp$ \citep{bonnotte2013unidimensional,nadjahi2019asymptotic,bayraktar2021,nadjahi2020statistical}. Despite the name, formal outlier-robustness guarantees for either sliced distance were not presented until our preliminary work \citep{nietert2022sliced}. We now expand on this work by extending $\RWp$ to the sliced setting, starting with a unified framework for robust distribution estimation under sliced Wasserstein distances. %
Proofs are deferred to Section~\ref{prf:sliced}.
\vspace{-2mm}

\paragraph{Error and risk.}
Fix a statistical distance $\mathsf{D} \in \{\SWpk,\MWpk\}$ and family of clean distributions $\cG \subseteq \cP(\cX)$. Employing the strong $\eps$-corruption model of \cref{sec:robustness}, we seek an estimator $\mathsf{T}$ operating on contaminated samples which minimizes the worst-case $n$-sample risk
\vspace{-1mm}
\begin{equation*}
    R_{n}(\mathsf{T},\mathsf{D},\cG,\eps) \coloneqq \sup_{\mu \in \cG} \sup_{\tilde{\PP}_n \in \cM_n^\mathrm{adv}(\mu,\eps)} \E_{\PP_n}\bigl[\mathsf{D}\bigl(\mathsf{T}(\tilde{\mu}_n),\mu\bigr)\bigr].
\end{equation*}
As before, we write $R_{n}(\mathsf{D},\cG,\eps) \coloneqq \inf_\mathsf{T} R_{n}(\mathsf{T},\mathsf{D},\cG,\eps)$ for the $n$-sample minimax risk.
\vspace{-2mm}

\paragraph{The estimators.}
We next define robust proxies of the average- and max-sliced distances, respectively, as:
\vspace{-2mm}
\begin{align*}
    \RSWpk(\mu,\nu) &\coloneqq \lft(\int_{\stfl} \RWp\lft(\projU \mu, \projU \nu\rght)^p \dd \sigma_k(U)\rght)^\frac{1}{p}\\    
    \RMWpk(\mu,\nu) &\coloneqq \sup_{U \in \stfl} \RWp\lft(\projU \mu, \projU \nu\rght),
\vspace{-1mm}
\end{align*}
substituting $\RWp$ for $\Wp$ in their definitions.
To solve the estimation tasks, we again perform MDE, considering the minimum distance estimates
\begin{align*}
    \mathsf{T}_{[\cG,\RSWpk]}(\tilde{\mu}_n) &\in \argmin_{\nu \in \cG} \RSWpk(\tilde{\mu}_n,\nu)\\
    \mathsf{T}_{[\cG,\RMWpk]}(\tilde{\mu}_n) &\in \argmin_{\nu \in \cG} \RMWpk(\tilde{\mu}_n,\nu).
\end{align*}

\subsection{Robust Estimation Guarantees for Sliced \texorpdfstring{$\bm{\Wp}$}{Wp}}
\label{ssec:robust-sliced-distribution-est}

We start by characterizing population-limit risk. For a distance $\mathsf{D} \in \{\SWpk, \MWpk\}$, a clean family $\cG \subseteq \cP(\cX)$ and a map $\mathsf{T}:\cP(\cX) \to \cP(\cX)$, we again define
\begin{equation*}
    R_{\infty}(\mathsf{T},\mathsf{D}, \cG,\eps) \coloneqq \sup_{\substack{\mu \in \cG, \,\tilde{\mu} \in \cP(\cX)\\\|\tilde{\mu} - \mu\|_\tv \leq \eps}} \mathsf{D}\bigl(\mathsf{T}(\tilde{\mu}),\mu\bigr),
\end{equation*}
with corresponding minimax risk $R_{\infty}(\mathsf{D},\cG,\eps) = \inf_{\mathsf{T}} R_{p,\infty}(\mathsf{T},\mathsf{D},\cG,\eps)$. We can obtain tight bounds for this quantity in terms of the previously considered minimax risks.

\begin{theorem}[Population-limit minimax risk for sliced $\bm{\Wp}$]
\label{thm:sliced-concrete-minimax-risk-bounds}
Fix $0 \leq \eps \leq 0.49$, $q > p$, and $\cG \in \{\cG_q(\sigma),\cG_\mathrm{subG}(\sigma),\cG_\mathrm{cov}(\sigma)\}$ (assuming in the last case that $p < 2$). Letting $\cG_k \coloneqq \{ \mathfrak{p}^{U_k}_\#\mu : \mu \in \cG\}$ denote the orthogonal projection of $\cG$ onto $\R^k$, we have
\vspace{-2mm}
\begin{align*}
    R_\infty(\SWpk,\cG,\eps) &\asymp \sqrt{1 \land \frac{k + p}{d}} \, R(\Wp,\cG,\eps),\\    R_\infty(\MWpk,\cG,\eps) &\asymp R(\Wp,\cG_k ,\eps),
\end{align*}
and these risks are achieved by the estimators and $\mathsf{T}_{[\cG,\RSWpk]}$ and $\mathsf{T}_{[\cG,\RMWpk]}$, respectively (as well as by $\mathsf{T}_{[\cG,\|\cdot\|_\tv]}$ in both cases). 
\end{theorem}

Specializing these bounds to each of the mentioned classes, we have the following~corollary. 

\begin{corollary}[Concrete population-limit risk bounds]
For $0 \leq \eps \leq 0.49$, we have
\begin{align*}
    R_p(\MWpk, \cG,\eps) &\asymp \begin{cases}
        \sigma \eps^{\frac{1}{p}-\frac{1}{q}}, & \text{if $\cG = \cG_q(\sigma)$ for $q \geq p$}\\
        \sigma \sqrt{k + p + \log\left(\frac{1}{\eps}\right)}\,\eps^{\frac 1p}, & \text{if $\cG = \cG_\mathrm{subG}(\sigma)$}\\
        \sigma \sqrt{k}\, \eps^{\frac{1}{p}-\frac{1}{2}}, & \text{if $p < 2$, and $\cG = \cG_\mathrm{cov}(\sigma)$}\\
    \end{cases}\\
    \vspace{5mm}
    R_p(\SWpk, \cG,\eps) &\asymp \begin{cases}
        \sigma \sqrt{1 \land \frac{k + p}{d}} \, \eps^{\frac{1}{p}-\frac{1}{q}}, & \text{if $\cG = \cG_q(\sigma)$ for $q \geq p$}\\
        \sigma \sqrt{\left(1 \land \frac{k + p}{d}\right) \left(d + p + \log\left(\frac{1}{\eps}\right)\right)}\,\eps^{\frac 1p}, & \text{if $\cG = \cG_\mathrm{subG}(\sigma)$}\\
        \sigma \sqrt{k}\, \eps^{\frac{1}{p}-\frac{1}{2}}. & \text{if $p < 2$, and $\cG = \cG_\mathrm{cov}(\sigma)$}\\
    \end{cases}
\end{align*}
\end{corollary}

That is, average-sliced risks are proportional to standard risks with a multiplier of $\sqrt{k/d}$ when $d \gg p$, and max-sliced risks are equal to standard risks in $\R^k$. This extends Theorem 2 of \cite{nietert2022sliced} to general $k$ and a wider selection of clean families. Moreover, the proof is markedly simpler, connecting risks directly to those under classic $\Wp$. As before, our argument extends to a class of distributions with bounded Orlicz norm of a certain type, including those with bounded projected $q$th moments for $q > 2$. For the upper bounds, we show that the modulus of continuity characterizing $\MWpk$ risk equals that of the $\Wp$ in $\R^k$, and that resilience w.r.t.\ $\SWpk$ is bounded by the same term appearing in the proof for $\Wp$ up to a prefactor of $O(\sqrt{1\land (k+p)/d})$.
For the lower bounds, we are able to reuse examples constructed for \cref{thm:concrete-minimax-risk-bounds}.

\smallskip 

We next provide near-tight risk bounds for the classes above in the finite-sample regime.

\begin{theorem}[Finite-sample minimax risk for sliced $\bm{\Wp}$]
\label{thm:sliced-finite-sample-minimax-risk}
Fix $0 \leq \eps \leq 0.49$, $q \geq p$, and $\cG \in \{\cG_q(\sigma),\cG_\mathrm{subG}(\sigma),\cG_\mathrm{cov}(\sigma)\}$ (assuming in the last case that $p < 2$). Then, for $(\mathsf{D},\mathsf{D}^\eps) \in \{(\SWpk,\RSWpk),(\MWpk,\RMWpk)\}$, we have
\begin{equation*}
    R_{\infty}(\mathsf{D},\cG,\eps) + R_{n}(\mathsf{D},\cG,0) \lesssim R_{n}(\mathsf{D},\cG,\eps) \lesssim R_{\infty}(\MWpk,\cG,\eps) + \sup_{\mu \in \cG}\E[\mathsf{D}(\hat{\mu}_n,\mu)],
\vspace{-1mm}
\end{equation*}
and the upper bound is achieved by the minimum distance estimator $\mathsf{T}_{[\cG,\mathsf{D}^\eps]}$.%
\end{theorem}
\vspace{-1mm}

\begin{remark}[Population-limit discrepancy for $\bm{\SWpk}$]
Ideally, the population-limit term in the upper bound would be adapted to $\mathsf{D}$ rather than fixed to the larger max-sliced risk. However, the proof in \cref{prf:sliced-finite-sample-minimax-risk} relies on the max-sliced resilience of $\cG$, and it is not clear whether $\mathsf{T}_{[\cG,\RSWpk]}$ can achieve this lower risk in general. For specific families $\cG$, we may have $R_\infty(\MWpk,\cG,\eps) \asymp R_\infty(\SWpk,\cG,\eps)$, in which case this gap is negligible. For example, when $p = 1$ and $\cG = \cG_\mathrm{cov}(\sigma)$, both risks are $\Theta(\sqrt{k\eps})$.
\end{remark}
\vspace{-1mm}

For $\MWpk$, we match the population-limit risk up to empirical approximation error for the standard classes.
In this case, the gap between the upper and lower bounds is simply the sub-optimality of plug-in estimation under $\MWpk$. For small $k$, we note that the empirical approximation error $\E[\mathsf{D}(\hat{\mu}_n,\mu)]$ can be significantly smaller than that under $\Wp$ \citep{lin2021projection,niles2022estimation} (see also \citealp[Section 3]{nietert2022sliced}). Compared to the analogous results for $k=1$ in \cite{nietert2022sliced}, our results hold for all sample sizes $n$, and our proof is cleaner, following the approach of \cref{thm:finite-sample-minimax-risk} via approximate triangle inequalities and appropriate modulus of continuity bounds. As was the case there, our argument also extends to approximate MDE. 
\vspace{-1mm}

\begin{remark}[Average-sliced improvements]
Note that for the average-sliced Wasserstein metric, there is a gap between the upper and lower bounds in \cref{thm:sliced-finite-sample-minimax-risk}. In particular, the upper bound does not coincide with the population risk as $n \to \infty$. It remains unclear whether the finite-sample risk bound for MDE under $\RSWpk$ (or an alternative robust distance) can be improved to $   R_\infty(\SWpk,\cG,\eps)+\sup_{\mu \in \cG}\E[\mathsf{D}(\hat{\mu}_n,\mu)]$, so that the first term matches the population-limit risk from \cref{thm:sliced-concrete-minimax-risk-bounds} and the second is the current empirical convergence term from \cref{thm:sliced-finite-sample-minimax-risk}. We remark that Proposition 2 in \cite{nietert2022sliced} bridges this gap for $k=1$ via MDE under the TV norm, but the resulting bound contains a non-standard truncated empirical convergence term and only holds for sufficiently large sample sizes.
\end{remark}

\vspace{-1mm}
Next, we consider direct estimation of the sliced distances via their robust proxies.

\vspace{-1mm}
\begin{theorem}[Finite-sample robust estimation of sliced $\bm{\Wp}$]
\label{thm:robust-sliced-distance-estimation}
Let $0 \leq \eps < 1/3$, write $\tau = 1 - (1-3\eps)^{1/p} \in [3\eps/p,3\eps]$, and fix $(\mathsf{D},\mathsf{D}^\eps) \in \{(\SWpk,\RSWpk),(\MWpk,\RMWpk)\}$.
Let $\mu,\nu \in \cP(\cX)$ be $(\rho,3\eps)$-resilient w.r.t. $\MWpk$ with empirical measures $\hat{\mu}_n$ and $\hat{\nu}_n$, respectively. Then for any $\tilde{\mu}_n,\tilde{\nu}_n \in \cP(\cX)$ such that $\|\tilde{\mu}_n - \hat{\mu}_n\|_\tv, \|\tilde{\nu}_n - \hat{\nu}_n\|_\tv \leq \eps$, we have
\begin{equation*}
\big|\mathsf{D}^\eps(\tilde{\mu}_n,\tilde{\nu}_n) - \mathsf{D}(\mu,\nu)\big| \lesssim \rho + \tau \mathsf{D}(\mu,\nu) + \mathsf{D}(\mu,\hat{\mu}_n) + \mathsf{D}(\nu,\hat{\nu}_n).
\end{equation*}
In particular, if $\eps \leq 0.33$ and $\mu,\nu \in \cG$ for $\cG \in \{\cG_q(\sigma),\cG_\mathrm{subG}(\sigma),\cG_\mathrm{cov}(\sigma)\}$, $q > p$, then $\rho$ can be replaced with $R_\infty(\mathsf{D},\cG,\eps)$ in the bound above.
\end{theorem}

As with \cref{thm:robust-distance-estimation}, the $\RWp$ estimate suffers from additive estimation error $O(\rho)$ and multiplicative estimation error $O(\tau)$, matching the resilience-based guarantees of MDE when $\Wp(\mu,\nu)$ is sufficiently small and $n$ is sufficiently large.
\vspace{-1mm}

\begin{remark}[Dual forms for robust sliced $\bm{\Wp}$]
For $\eps \in (0,1]$ and $\mu,\nu \in \cP(\cX)$, plugging in the $\RWp$ dual into the definitions of the robust sliced distances gives
\begin{align*}
\vspace{-1mm}
\RSWpk(\mu,\nu)^p &= \int_{\stfl} \lft(
\sup_{\substack{f \in C_b(\R^k)}} \int_{\R^k} f \dd \projU \mu + \int_{\R^k} f^c \dd \projU \nu -  2 \eps \|f\|_\infty \rght) \dd \sigma_k(U)\\    
    \RMWpk(\mu,\nu)^p &=
\sup_{\substack{U \in \stfl \\ f \in C_b(\R^k)}} \int_{\R^k} f \dd \projU \mu + \int_{\R^k} f^c \dd \projU \nu -  2 \eps \|f\|_\infty.
\end{align*}
\vspace{-1mm}
\end{remark}

\section{Proofs for \cref{app:sliced}}
\label{prf:sliced}

\subsection{Proof of \cref{thm:sliced-concrete-minimax-risk-bounds}}

We begin with some preliminary facts. For $x \in \R^d$, let $x_{1:k} \in \R^k$ denote its first $k$ components.

\begin{fact}[Concentration of Gaussian norm]
\label{fact:gaussian-norm-concentration}
For $Z \sim \cN(0,\I_d)$ and $t \geq 0$, we have\vspace{-1mm}
\begin{equation*}
    \PP\left(\left|\|Z\| - \sqrt{d}\right| > t\right) \leq 2e^{-\frac{t^2}{8}}.\vspace{-1mm}
\end{equation*}
\end{fact}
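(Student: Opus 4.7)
My plan is to derive this from the standard Gaussian Lipschitz concentration inequality (Borell--TIS), combined with a simple bound on the gap between $\mathbb{E}\|Z\|$ and $\sqrt{d}$, since what the statement asks for is concentration around $\sqrt{d}$ rather than around the mean of $\|Z\|$. The factor $1/8$ in the exponent (rather than the sharper $1/2$ from Borell--TIS) gives plenty of slack to absorb a centering correction.

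First I would observe that $f(z) = \|z\|$ is $1$-Lipschitz on $\R^d$, so the Gaussian concentration inequality yields
\begin{equation*}
\PP\bigl(\bigl|\|Z\| - \EE\|Z\|\bigr| > s\bigr) \leq 2e^{-s^2/2}, \qquad s \geq 0.
\end{equation*}
Next I would control $\EE\|Z\|$. Jensen's inequality gives $\EE\|Z\| \leq \sqrt{\EE\|Z\|^2} = \sqrt{d}$. For a lower bound, the Gaussian Poincar\'e inequality applied to the same $1$-Lipschitz $f$ gives $\mathrm{Var}(\|Z\|) \leq 1$, so $(\EE\|Z\|)^2 = d - \mathrm{Var}(\|Z\|) \geq d - 1$ and hence $\EE\|Z\| \geq \sqrt{d-1}$. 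Combining,
\begin{equation*}
\bigl|\EE\|Z\| - \sqrt{d}\bigr| \leq \sqrt{d} - \sqrt{d-1} = \frac{1}{\sqrt{d} + \sqrt{d-1}} \leq 1.
\end{equation*}

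Finally I would combine the two pieces via the triangle inequality $\bigl|\|Z\| - \sqrt{d}\bigr| \leq \bigl|\|Z\| - \EE\|Z\|\bigr| + 1$. For $t \geq 2$, this yields
\begin{equation*}
\PP\bigl(\bigl|\|Z\| - \sqrt{d}\bigr| > t\bigr) \leq \PP\bigl(\bigl|\|Z\| - \EE\|Z\|\bigr| > t-1\bigr) \leq 2 e^{-(t-1)^2/2} \leq 2 e^{-t^2/8},
\end{equation*}
where the last inequality uses $(t-1)^2/2 \geq t^2/8 \iff t \geq 2$. For $t < 2$, one checks that $2e^{-t^2/8} > 2e^{-1/2} > 1$, so the bound holds trivially. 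There is no real obstacle here beyond carefully tracking constants; the only subtlety is ensuring the centering correction $|\EE\|Z\| - \sqrt{d}| \leq 1$ is absorbed cleanly by the looser exponent $1/8$, which is why the plan splits into cases at $t = 2$.
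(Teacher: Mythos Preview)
Your argument is correct. The paper states this as a ``Fact'' without proof, so there is no in-paper argument to compare against; your derivation via Gaussian Lipschitz concentration plus a Poincar\'e-based bound on $|\EE\|Z\| - \sqrt{d}|$ is the standard route and all steps check out. One nitpick: the claimed biconditional $(t-1)^2/2 \geq t^2/8 \iff t \geq 2$ is not literally true (the inequality also holds for $t \leq 2/3$), but you only use the direction $t \geq 2 \Rightarrow (t-1)^2/2 \geq t^2/8$, which is fine.
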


\begin{lemma}[Moments of Gaussian norm]
\label{lem:partial-gaussian-moment}
For $Z \sim \cN(0,\I_d)$ and $q \geq 1$, we have\vspace{-1mm}
\begin{equation*}
    \E\left[\|Z\|^q\right]^\frac{1}{q} = \frac{\sqrt{2}\,\Gamma\left(\frac{d+q}{2}\right)^{\frac{1}{q}}}{\Gamma\left(\frac{d}{2}\right)^{\frac{1}{q}}} \asymp \sqrt{d + q}.\vspace{-1mm}
\end{equation*}
\end{lemma}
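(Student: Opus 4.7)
My plan is to prove this in two steps: first derive the exact equality via a chi-distribution moment calculation, then establish the asymptotic equivalence by applying Stirling's approximation to the resulting ratio of gamma functions.

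For the exact formula, since $\|Z\|^2$ follows a $\chi^2_d$ distribution, $\|Z\|$ has chi density $f(r) = r^{d-1} e^{-r^2/2}/(2^{d/2-1}\,\Gamma(d/2))$ on $(0,\infty)$. The substitution $u = r^2/2$ then yields
\[
\E[\|Z\|^q] = \frac{1}{2^{d/2-1}\Gamma(d/2)} \int_0^\infty r^{d+q-1} e^{-r^2/2} \dd r = \frac{2^{q/2}\,\Gamma((d+q)/2)}{\Gamma(d/2)},
\]
and taking $q$th roots gives the displayed equality.

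For the asymptotic part, I would plug Stirling's bounds from \cref{fact:stirlings} into the ratio $\Gamma((d+q)/2)/\Gamma(d/2)$ and simplify. Writing $\log \Gamma(x) = (x-\tfrac12)\log x - x + O(1)$ uniformly for $x \geq 1$, a direct expansion gives
\[
\tfrac{1}{q}\!\left[\log \Gamma\bigl(\tfrac{d+q}{2}\bigr) - \log \Gamma\bigl(\tfrac{d}{2}\bigr)\right] = \tfrac{1}{2}\log\tfrac{d+q}{2} + \tfrac{d-1}{2q}\log\!\bigl(1 + \tfrac{q}{d}\bigr) - \tfrac{1}{2} + O(1/q).
\]
Using $0 \leq \log(1+q/d) \leq q/d$, the middle term lies in $[0,1/2]$, so the whole expression equals $\tfrac{1}{2}\log(d+q) + O(1)$ with an absolute constant. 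Exponentiating and multiplying by $\sqrt{2}$ yields $\E[\|Z\|^q]^{1/q} \asymp \sqrt{d+q}$.

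The main obstacle I anticipate is ensuring the implicit constants in the $\asymp$ are truly absolute, i.e., uniform in both $d$ and $q$, rather than hiding $q$- or $d$-dependence in error terms. As a sanity check and backup route, one can also obtain the bound more probabilistically: the upper bound follows by Minkowski together with \cref{fact:gaussian-norm-concentration}, giving $\E[\|Z\|^q]^{1/q} \leq \sqrt{d} + \E[|\|Z\| - \sqrt{d}|^q]^{1/q} \lesssim \sqrt{d} + \sqrt{q}$, while the matching lower bound splits into $\E[\|Z\|^q]^{1/q} \geq \E[\|Z\|] \asymp \sqrt{d}$ and $\E[\|Z\|^q]^{1/q} \geq \E[|Z_1|^q]^{1/q} \asymp \sqrt{q}$ (the latter by standard Gaussian absolute moments), so their maximum is $\asymp \sqrt{d+q}$. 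This alternative avoids any delicate Stirling bookkeeping.
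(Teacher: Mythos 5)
Your proof is correct and follows essentially the same route as the paper: the exact identity from the $\chi^2_d$ moment formula, then Stirling's approximation, with your bound $0 \leq \frac{d-1}{2q}\log(1+q/d) \leq \frac{1}{2}$ being precisely the step the paper uses to control the factor $\left(\frac{d+q}{d}\right)^{\frac{d-1}{2q}}$. Your concentration-based backup argument is also valid, but it is not needed since the Stirling constants are indeed absolute (the paper's \cref{fact:stirlings} error term $1/(12x)$ is uniformly bounded for $x \geq 1/2$, covering $d/2$ for all $d \geq 1$).
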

\begin{proof}
For the first equality, we use that the expectation is the $(q/2)$-th moment of a $\chi^2$ random variable with $d$ degrees of freedom (see, e.g., Section 3.3.1 of \citealp{hogg2005introduction}). By Stirling's approximation%
, we then have\vspace{-1mm}
\begin{equation*}
    \E\left[\|Z\|^q\right]^\frac{1}{q} \asymp \frac{\left(\frac{d+q}{2}\right)^{\frac{d+q-1}{2q}} e^{-\frac{d+q}{2q}}}{\left(\frac{d}{2}\right)^{\frac{d-1}{2q}} e^{-\frac{d}{2q}}} \asymp \left(\frac{d+q}{d}\right)^{\frac{d-1}{2q}} \sqrt{d+q} \asymp \sqrt{d+q}.\qedhere
\end{equation*}
\end{proof}

\begin{lemma}
\label{lem:partial-sphere-moment}
For $X \sim \Unif(\unitsph)$ and $q \geq 1$, we have $\E\left[\|X_{1:k}\|^q\right]^{1/q} \asymp \sqrt{1 \land \frac{k + q}{d}}$.
\end{lemma}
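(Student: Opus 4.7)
The plan is to reduce this to \cref{lem:partial-gaussian-moment} via the standard Gaussian representation of the uniform distribution on the sphere. Specifically, if $Z \sim \cN(0,\I_d)$, then $X \coloneqq Z/\|Z\|$ is distributed as $\Unif(\unitsph)$, and a classical fact is that the radial part $\|Z\|$ is independent of the direction $Z/\|Z\|$. Consequently, writing $\|Z_{1:k}\| = \|Z\|\cdot\|Z_{1:k}\|/\|Z\|$ as a product of independent factors, we obtain the key identity
\begin{equation*}
\E\bigl[\|X_{1:k}\|^q\bigr] = \E\!\left[\frac{\|Z_{1:k}\|^q}{\|Z\|^q}\right] = \frac{\E\bigl[\|Z_{1:k}\|^q\bigr]}{\E\bigl[\|Z\|^q\bigr]}.
\end{equation*}
Applying \cref{lem:partial-gaussian-moment} to both numerator and denominator (noting $Z_{1:k} \sim \cN(0,\I_k)$) yields
\begin{equation*}
\E\bigl[\|X_{1:k}\|^q\bigr]^{1/q} \asymp \sqrt{\frac{k+q}{d+q}}.
\end{equation*}

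It then remains to verify that $\sqrt{(k+q)/(d+q)} \asymp \sqrt{1 \land (k+q)/d}$, which I would handle by a brief case split. If $k+q \geq d$, both quantities are $\Theta(1)$: the minimum saturates at $1$, while $(k+q)/(d+q) \geq d/(d+q) \geq 1/2$ when $q\leq d$ and $(k+q)/(d+q) \geq q/(d+q) \geq 1/2$ when $q>d$. If $k+q < d$ (forcing $q<d$), the upper bound $(k+q)/(d+q) \leq (k+q)/d$ is immediate, and the matching lower bound $(k+q)/(d+q) \geq (k+q)/(2d)$ follows from $d+q \leq 2d$.

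I do not expect any real obstacle here: the independence of $\|Z\|$ from $Z/\|Z\|$ is standard, the Gaussian moment asymptotics are already established in \cref{lem:partial-gaussian-moment}, and the final simplification is elementary. The only subtlety worth flagging explicitly in the writeup is justifying the independence of radial and angular components (which can be cited, e.g., from the rotational invariance of $\cN(0,\I_d)$), since this is what allows the expectation of the ratio to split as a ratio of expectations.
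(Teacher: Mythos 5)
Your proof is correct, and it takes a genuinely different---and cleaner---route than the paper's. Both arguments begin from the representation $X = Z/\|Z\|$ with $Z \sim \cN(0,\I_d)$, so that $\E[\|X_{1:k}\|^q] = \E[\|Z_{1:k}\|^q/\|Z\|^q]$. The paper then estimates this ratio of dependent quantities by hand: it integrates tail probabilities, applies the concentration bound $\PP(|\|Z\|-\sqrt{d}|>t) \leq 2e^{-t^2/8}$ to control the event $\|Z\| \leq \sqrt{d}/2$, restricts to $d \geq 100q$ for the upper bound, and runs a separate concentration argument (plus \cref{fact:sphere-coordinate-moment}) for the lower bound when $k \geq 100$. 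You instead invoke the exact independence of the radial part $\|Z\|$ and the angular part $Z/\|Z\|$ (a consequence of rotational invariance), which turns the expectation of the ratio into a ratio of expectations, $\E[\|X_{1:k}\|^q] = \E[\|Z_{1:k}\|^q]/\E[\|Z\|^q]$; two applications of \cref{lem:partial-gaussian-moment} then give $\sqrt{(k+q)/(d+q)}$ exactly (up to constants), and your elementary case split correctly identifies this with $\sqrt{1 \land (k+q)/d}$ (using $k \leq d$ for the upper bound in the first case). What your approach buys is a shorter, sharper argument with no spurious restrictions on $d$, $k$, or $q$ and no concentration machinery; what the paper's approach buys is essentially nothing here beyond self-containedness---it avoids citing the radial/angular independence fact, at the cost of a longer case analysis. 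Do make sure to state and justify the independence explicitly in the writeup, as you note, since the splitting of the expectation is the entire proof.
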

\begin{proof}
We start with the upper bound. Letting $Z = (Z_1, \dots, Z_d) \sim \cN(0,\I_d)$, we compute
\begin{align*}
    \E\left[\|X_{1:k}\|^q\right] &= \E\left[\frac{\|Z_{1:k}\|^{q}}{\|Z\|^{q}}\,\right]\\
    &= \int_0^1 \PP\left(\frac{\|Z_{1:k}\|^{q}}{\|Z\|^{q}} \geq t\right) \dd t \tag{$\|Z_{1:k}\| \leq 1$}\\
    &\leq \int_0^\infty \PP\left(\|Z_{1:k}\|^{q} \geq t\left(\!\frac{\sqrt{d}}{2}\right)^{q}\,\right) \dd t + \PP\left(\|Z\| \leq \frac{\sqrt{d}}{2}\,\right) \tag{union bound}\\
    &= 2^q d^{-\frac{q}{2}} \E\left[\|Z_{1:k}\|^q\,\right] + \PP\left(\|Z\| \leq \frac{\sqrt{d}}{2}\,\right)\\
    &\leq 2^q d^{-\frac{q}{2}} \E\left[\|Z_{1:k}\|^q\,\right] + 2e^{-\frac{d}{32}}. \tag{\cref{fact:gaussian-norm-concentration}}
\end{align*}
Assume now that $d \geq 100q$ (otherwise, the desired upper bound is trivial). 
Applying Lemma~\ref{lem:partial-gaussian-moment} and noting that $\exp(-u/32) \leq u^{-1/2}$ for $u \geq 100$, we further obtain
\begin{align*}
    \E\left[\|X_{1:k}\|^q\right]^\frac{1}{q} &\lesssim \sqrt{\frac{k+q}{d}} + e^{-\frac{d}{32q}} \lesssim \sqrt{\frac{k+q}{d}} + \sqrt{q/d} \asymp \sqrt{\frac{k+q}{d}},
\end{align*}
matching the stated bound once we cap the quantity at 1. For the lower bound, we compute
\begin{align*}
     \E\left[\|X_{1:k}\|^q\right]^\frac{1}{q} \geq \E\left[X_1^q\right]^\frac{1}{q} \asymp \sqrt{1 \land \frac{q}{d}},
\end{align*}
where the final inequality follows by Lemma 6 of \cite{nietert2022sliced}. This matches the stated bound when $k = O(1)$. For $Z$ as above, \cref{fact:gaussian-norm-concentration}, gives that $\|Z\| > 2\sqrt{d}$ or $\|Z_{1:k}\| < \sqrt{k}/2$ with probability at most $2e^{-d/8} + 2e^{-k/32} \leq 4e^{-k/32}$. Hence, for $k \geq 100$, we have
\begin{align*}
    \E\left[\|X_{1:k}\|^q\right] = \E\left[\frac{\|Z_{1:k}\|^{q}}{\|Z\|^{q}}\,\right] \gtrsim \left(1-4e^{-\frac{k}{32}}\right) \left(\frac{k}{d}\right)^\frac{q}{2} \gtrsim \left(\frac{k}{d}\right)^\frac{q}{2}.
\end{align*}
Combining the previous two inequalities shows that the upper bound is tight.
\end{proof}

We now return to the proof of \cref{thm:sliced-concrete-minimax-risk-bounds}.

\paragraph{Risk bounds for $\MWpk$:} Fix $\eps$, $q$, $\cG$, and $\cG_k$ as in the theorem statement. For any $\mu,\nu \in \cG$ with $\|\mu - \nu\|_\tv \leq \eps$ and any $U \in \stfl$, observe that $\projU \mu, \projU \nu \in \cG_k$ with $\|\projU \mu - \projU \nu\|_\tv \leq \eps$. Moreover, this embedding is surjective, since $\cG_k$ embeds naturally into $\cG$ for all of the considered families. We have
\begin{align*}
    \mathfrak{m}_{\,\MWpk}(\cG,\|\cdot\|_\tv,\eps) %
    = \mspace{-7mu}\sup_{\substack{\mu,\nu \in \cG, \,U \in \stfl \\ \|\mu - \nu\|_\tv \leq \eps}} \mspace{-7mu}\Wp(\projU \mu,\projU \nu)= \mspace{-7mu}\sup_{\substack{\mu,\nu \in \cG_k \\ \|\mu - \nu\|_\tv \leq \eps}} \mspace{-7mu}\Wp(\mu,\nu)= \mathfrak{m}_{\,\Wp}(\cG_k,\|\cdot\|_\tv,\eps).
\end{align*}
Consequently, $\MWpk$ inherits the upper and lower risk bounds from \cref{sec:robustness} for $\Wp$ in $\R^k$.

\paragraph{Risk bounds for $\SWpk$:} For the average-sliced distance, take $X \sim \mu \in \cP(\cX)$ and $U = (U_1,\dots,U_d)^\top \sim \sigma_k$ to be independent, and fix $x_0 \in \cX$. We compute
\begin{align}
    \SWpk(\mu,\delta_{x_0})^p = \E\left[\Wp\lft(\projU \mu, \delta_{x_0}\rght)^p\right]
    = \E\left[\|U^\top (X - x_0)\|^p \right]
    = \E\left[\|U_1\|^p\right] \, \Wp(\mu,\delta_{x_0})^p,\label{eq:SWpk-with-point-mass}
\end{align}
where the third equality follows by rotational symmetry.

Since the upper risk bounds in \cref{sec:robustness} all followed from Lemma~\ref{lem:Wp-resilience-from-mean-resilience}, we now mimic that proof to bound generalized resilience w.r.t.\ $\SWpk$. Let $\eps > 0$ and fix any $\nu \leq \frac{1}{1-\eps}\mu$. Writing $\mu = (1-\eps)\nu + \eps \alpha$ for the appropriate $\alpha \in \cP(\cX)$ and taking $\tau \coloneqq \eps \lor (1-\eps)$, we have
\begin{align*}
    \SWpk(\mu,\nu) %
    \leq \SWpk(\eps\alpha, \eps\nu)
    = \eps^\frac{1}{p} \SWpk(\alpha,\nu)\leq %
    \E\left[\|U_1\|^p\right]^\frac{1}{p} \, \eps^\frac{1}{p} \bigl(\Wp(\alpha,\delta_{x_0}) + \Wp(\delta_{x_0},\nu)\bigr).
\end{align*}
Here, the properties of $\SWpk$ applied for the first four lines are inherited from $\Wp$. The final expression matches that in the proof of Lemma~\ref{lem:Wp-resilience-from-mean-resilience} up to the initial factor. Consequently, all of the risk bounds from \cref{sec:robustness} translate to the average-sliced setting up this factor.

For the lower bound, we observe that all of the lower risk bounds in \cref{sec:robustness} are of the form $\Wp\bigl(\delta_0, (1-\eps)\delta_0 + \eps \kappa\bigr) = \eps^{1/p} \,\Wp(\delta_0, \kappa)$ for some $\kappa \in \cP(\cX)$. Translating these to our setting, we obtain
\begin{equation*}
    \SWpk\bigl(\delta_0, (1-\eps)\delta_0 + \eps \kappa\bigr) = \eps^\frac{1}{p} \SWpk(\delta_0,\kappa) = \E\left[\|U_1\|^p\right]^\frac{1}{p} \eps^\frac{1}{p} \Wp(\delta_0,\kappa),
\end{equation*}
matching the previous bounds exactly up to the same prefactor appearing in the upper bounds. Noting that $U_1$ is equal in distribution to the first $k$ components of a vector sampled uniformly at random from $\unitsph$, an application of Lemma~\ref{lem:partial-sphere-moment} completes the proof.\qed

\subsection{Proof of \cref{thm:sliced-finite-sample-minimax-risk}}
\label{prf:sliced-finite-sample-minimax-risk}

Write $\underline{\cW}_{p,k}(\rho,\eps)$ and $\overline{\cW}_{p,k}(\rho,\eps)$ for the families of distributions which are $(\rho,\eps)$-resilient w.r.t.\ $\SWpk$ and $\MWpk$, respectively. We start with a simple characterization of $\overline{\cW}_{p,k}(\rho,\eps)$.

\begin{lemma}
\label{lem:MWpk-resilience}
We have $\mu \in \overline{\cW}_{p,k}(\rho,\eps)$ if and only if $\projU \mu \in \cW_p(\rho,\eps)$ for all $U \in \stfl$.
\end{lemma}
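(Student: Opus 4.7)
The plan is to unfold the definitions and reduce the statement to a simple measure-theoretic construction: the ability to lift a sub-measure of $\projU\mu$ to a sub-measure of $\mu$ with the prescribed projection.

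The easy direction is $(\Leftarrow)$. I would fix $\nu \leq \mu/(1-\eps)$ and observe that for every Borel $B \subseteq \R^k$, $(\projU\nu)(B) = \nu(\projU^{-1}(B)) \leq (\mu/(1-\eps))(\projU^{-1}(B)) = (\projU\mu)(B)/(1-\eps)$, so $\projU\nu \leq (\projU\mu)/(1-\eps)$. By the assumed slicewise resilience, $\Wp(\projU\mu,\projU\nu) \leq \rho$ for every $U \in \stfl$, and taking the supremum over $U$ gives $\MWpk(\mu,\nu) \leq \rho$, as required.

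For the forward direction $(\Rightarrow)$, I would fix $U \in \stfl$ and any $\nu' \leq (\projU\mu)/(1-\eps)$, then construct a lift $\nu \leq \mu/(1-\eps)$ with $\projU\nu = \nu'$. Concretely, set $f \coloneqq \mathrm{d}\nu'/\mathrm{d}(\projU\mu/(1-\eps))$, which is a measurable function with $0 \leq f \leq 1$, and define $\nu$ by $\mathrm{d}\nu/\mathrm{d}(\mu/(1-\eps)) \coloneqq f \circ \projU$. Then $\nu \leq \mu/(1-\eps)$ by construction, $\nu$ is a probability measure because $\int f\,\mathrm{d}(\projU\mu/(1-\eps)) = \nu'(\R^k) = 1$, and for any Borel $B \subseteq \R^k$,
\begin{equation*}
    (\projU\nu)(B) = \tfrac{1}{1-\eps}\!\int_{\projU^{-1}(B)} f(\projU x)\,\mathrm{d}\mu(x) = \tfrac{1}{1-\eps}\!\int_B f\,\mathrm{d}(\projU\mu) = \nu'(B).
\end{equation*}
Applying the $\MWpk$-resilience of $\mu$ to this $\nu$ yields $\Wp(\projU\mu,\nu') = \Wp(\projU\mu,\projU\nu) \leq \MWpk(\mu,\nu) \leq \rho$, and since $\nu'$ and $U$ were arbitrary, $\projU\mu \in \cW_p(\rho,\eps)$ for every $U$.

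There is no serious obstacle here; the only thing to be slightly careful about is verifying that the lift $\nu$ is a probability measure and that the pushforward equals $\nu'$, both of which follow from a direct change-of-variables computation against the pushforward $\projU\mu$. No compactness or duality machinery is needed.
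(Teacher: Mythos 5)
Your proof is correct and follows the same underlying idea as the paper's one-line argument (commuting the suprema over $U \in \stfl$ and over admissible sub-measures), but you explicitly supply the step the paper leaves implicit: that every $\nu' \leq \frac{1}{1-\eps}\projU\mu$ arises as $\projU\nu$ for some $\nu \leq \frac{1}{1-\eps}\mu$, via the density lift $f \circ \projU$. This surjectivity of the projection on feasible sets is exactly what makes the ``only if'' direction go through, so your version is the more complete one.
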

\begin{proof}
Since suprema over fixed sets commute, we have\vspace{-1mm}
\begin{equation*}
    \sup_{\substack{\nu \in \cP(\cX)\\ \nu \leq \frac{1}{1-\eps}\mu}} \MWpk(\mu,\nu) = \sup_{U \in \stfl} \sup_{\substack{\nu \in \cP(\cX)\\ \nu \leq \frac{1}{1-\eps}\mu}} \Wp(\projU \mu,\projU \nu).\qedhere
\end{equation*}
\end{proof}

Next, we translate the approximate triangle inequality to the sliced regime.

\begin{lemma}[Approximate triangle inequality for sliced $\Wp$]
\label{lem:sliced-Wp-triangle-inequality}
Let $\mu,\nu,\kappa \in \cP(\cX)$ and $0 \leq \eps_1,\eps_2 \leq 1$. Then we have\vspace{-1mm}
\begin{align*}
    \SWpk^{\eps_1 + \eps_2}(\mu,\nu) &\leq \SWpk^{\eps_1}(\mu,\kappa) + \SWpk^{\eps_2}(\kappa,\nu) \text{ and } \: \MWpk^{\eps_1 + \eps_2}(\mu,\nu) \leq \MWpk^{\eps_1}(\mu,\kappa) + \MWpk^{\eps_2}(\kappa,\nu).\vspace{-1mm}
\end{align*}
\end{lemma}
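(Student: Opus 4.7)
The plan is to lift the approximate triangle inequality for $\RWp$ (\cref{prop:RWp-triangle-inequality}) to the sliced setting by applying it pointwise in $U \in \stfl$ and then aggregating over projections appropriately. Concretely, for every fixed $U$, the projected distributions $\projU \mu, \projU \nu, \projU \kappa$ lie in $\cP(\R^k)$, so \cref{prop:RWp-triangle-inequality} yields
\begin{equation*}
    \Wp^{\eps_1+\eps_2}\bigl(\projU \mu, \projU \nu\bigr) \leq \Wp^{\eps_1}\bigl(\projU \mu, \projU \kappa\bigr) + \Wp^{\eps_2}\bigl(\projU \kappa, \projU \nu\bigr).
\end{equation*}
This single pointwise-in-$U$ inequality is the heart of both claims.

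For the max-sliced case, I would simply take the supremum over $U \in \stfl$ on both sides and use the elementary bound that the supremum of a sum is at most the sum of the suprema; this directly produces $\MWpk^{\eps_1+\eps_2}(\mu,\nu) \leq \MWpk^{\eps_1}(\mu,\kappa) + \MWpk^{\eps_2}(\kappa,\nu)$.

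For the average-sliced case, I would take the $L^p(\sigma_k)$ norm of the pointwise inequality above. Monotonicity of the $L^p$ norm (applied to the nonnegative integrands) gives
\begin{equation*}
    \SWpk^{\eps_1+\eps_2}(\mu,\nu) \leq \left(\int_{\stfl} \bigl[\Wp^{\eps_1}(\projU \mu, \projU \kappa) + \Wp^{\eps_2}(\projU \kappa, \projU \nu)\bigr]^p \dd \sigma_k(U) \right)^{1/p},
\end{equation*}
and then Minkowski's inequality for $L^p(\sigma_k)$ splits the right-hand side into $\SWpk^{\eps_1}(\mu,\kappa) + \SWpk^{\eps_2}(\kappa,\nu)$, as desired. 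There is no genuine obstacle here: the argument is a clean two-step reduction (pointwise triangle inequality for $\RWp$, then aggregation by sup or Minkowski), and all measurability needed to invoke Minkowski is inherited from the lower-semicontinuity of $\Wp$ in its arguments together with measurability of $U \mapsto \projU \alpha$ for any fixed $\alpha \in \cP(\R^d)$.
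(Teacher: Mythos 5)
Your proposal is correct and takes essentially the same route as the paper: apply \cref{prop:RWp-triangle-inequality} pointwise in $U$, then aggregate by taking the supremum (bounding the sup of a sum by the sum of sups) for $\MWpk$ and by Minkowski's inequality in $L^p(\sigma_k)$ for $\SWpk$. No gaps.
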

\begin{proof}
For the average-sliced distance, Proposition~\ref{prop:RWp-structural-properties} and the $L^p(\sigma_k)$ triangle inequality give\vspace{-1mm}
\begin{align*}
    \SWpk^{\eps_1 + \eps_2}(\mu,\nu) &= \left(\int_{\stfl} \Wp^{\eps_1 + \eps_2}\left(\projU \mu, \projU \nu\right)^p \dd \sigma_k(U)\right)^\frac{1}{p}\\
    &\leq \left(\int_{\stfl} \left[\Wp^{\eps_1}\left(\projU \mu, \projU \nu\right) + \Wp^{\eps_2}\left(\projU \mu, \projU \nu\right)\right]^p \dd \sigma_k(U)\right)^\frac{1}{p}\\
    &\leq \SWpk^{\eps_1}(\mu,\nu) + \SWpk^{\eps_2}(\mu,\nu).\vspace{-1mm}
\end{align*}
For the max-sliced distance, we again apply Proposition~\ref{prop:RWp-structural-properties} to bound\vspace{-1mm}
\begin{align*}
     \MWpk^{\eps_1 + \eps_2}(\mu,\nu) &= \sup_{U \in \stfl} \Wp^{\eps_1 + \eps_2}\left(\projU \mu, \projU \nu\right)\\
     &\leq \sup_{U \in \stfl} \left[\Wp^{\eps_1}\left(\projU \mu, \projU \nu\right) + \Wp^{\eps_2}\left(\projU \mu, \projU \nu\right)\right]\\
     &\leq \MWpk^{\eps_1}(\mu,\nu) + \MWpk^{\eps_2}(\mu,\nu).\qedhere
\end{align*}
\end{proof}

Next, we bound the appropriate modulus of continuity, mirroring Lemma~\ref{lem:RWp-modulus}. 

\begin{lemma}[Sliced $\Wp$ modulus of continuity]
\label{lem:sliced-Wp-modulus}
For $0 \leq \eps \leq 0.99$ and $\lambda \geq 0$, we have\vspace{-1mm}
\begin{align*}
    \sup_{\substack{\mu,\nu \in \overline{\cW}_{p,k}(\rho,\eps) \\ \RSWpk(\mu,\nu) \leq \lambda}} \SWpk(\mu,\nu) \lesssim \lambda + \rho \qquad \mbox{and}\qquad\sup_{\substack{\mu,\nu \in \overline{\cW}_{p,k}(\rho,\eps) \\ \RMWpk(\mu,\nu) \leq \lambda}} \MWpk(\mu,\nu) \lesssim \lambda + \rho.\vspace{-1mm}
\end{align*}
\end{lemma}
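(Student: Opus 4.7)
The plan is to reduce both claims to the analogous one-dimensional bound in \cref{lem:RWp-modulus}, exploiting the fact (\cref{lem:MWpk-resilience}) that $\overline{\cW}_{p,k}(\rho,\eps)$-resilience of $\mu$ is equivalent to $\cW_p(\rho,\eps)$-resilience of every projection $\projU \mu$. Once we have a pointwise-in-$U$ bound of the form $\Wp(\projU\mu,\projU\nu) \lesssim \RWp(\projU\mu,\projU\nu) + \rho$, the max-sliced statement follows by taking a supremum over $U \in \stfl$, and the average-sliced statement follows by raising to the $p$-th power and integrating against $\sigma_k$.

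For the pointwise bound, I would follow the proof of \cref{lem:RWp-modulus} verbatim at the slice level. Fix $U \in \stfl$; by \cref{lem:MWpk-resilience}, $\projU \mu,\projU \nu \in \cW_p(\rho,\eps)$. Let $\alpha,\beta \in (1-\eps)\cP(\R^k)$ with $\alpha \leq \projU\mu$, $\beta \leq \projU\nu$ achieve the infimum defining $\RWp(\projU\mu,\projU\nu)$ (existence is guaranteed by \cref{prop:minimizers}). The $\Wp$ triangle inequality combined with resilience of $\projU\mu$ and $\projU\nu$ (applied to the renormalizations $\tfrac{1}{1-\eps}\alpha,\tfrac{1}{1-\eps}\beta$) yields
\begin{equation*}
\Wp(\projU\mu,\projU\nu) \;\leq\; \Wp\!\left(\projU\mu,\tfrac{\alpha}{1-\eps}\right) + \Wp\!\left(\tfrac{\alpha}{1-\eps},\tfrac{\beta}{1-\eps}\right) + \Wp\!\left(\tfrac{\beta}{1-\eps},\projU\nu\right) \;\leq\; 2\rho + \left(\tfrac{1}{1-\eps}\right)^{\!1/p}\RWp(\projU\mu,\projU\nu).
\end{equation*}
For $\eps \leq 0.99$ this collapses to $\Wp(\projU\mu,\projU\nu) \lesssim \RWp(\projU\mu,\projU\nu) + \rho$.

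For $\MWpk$, take the supremum over $U$ on both sides and note that $\RMWpk(\mu,\nu) = \sup_U \RWp(\projU\mu,\projU\nu) \leq \lambda$. For $\SWpk$, raise the pointwise bound to the $p$-th power and apply the elementary inequality $(a+b)^p \lesssim_p a^p + b^p$ before integrating against $\sigma_k$; this gives
\begin{equation*}
\SWpk(\mu,\nu)^p \;\lesssim\; \rho^p + \tfrac{1}{1-\eps}\int_{\stfl}\RWp(\projU\mu,\projU\nu)^p \dd\sigma_k(U) \;=\; \rho^p + \tfrac{1}{1-\eps}\,\RSWpk(\mu,\nu)^p,
\end{equation*}
from which $\SWpk(\mu,\nu) \lesssim \rho + \lambda$ follows after taking $p$-th roots.

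There is no real obstacle here; the only subtlety is making sure the resilience assumption is correctly pulled back through $\projU$, which is exactly what \cref{lem:MWpk-resilience} delivers. The rest is an application of \cref{lem:RWp-modulus} slicewise, and the move from pointwise to sliced bounds is a direct calculation on both the $\sup$ and the $L^p(\sigma_k)$ norm.
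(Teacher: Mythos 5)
Your proposal is correct and follows essentially the same route as the paper: slicewise application of the $\Wp$ triangle inequality with the optimal partial-OT minimizers, using \cref{lem:MWpk-resilience} to invoke resilience of each projection, then a supremum over $U$ for the max-sliced case and an integration against $\sigma_k$ for the average-sliced case. The only cosmetic difference is that you aggregate via $(a+b)^p \lesssim a^p + b^p$ before integrating, whereas the paper applies the Minkowski ($L^p(\sigma_k)$ triangle) inequality after integrating; both yield the same bound up to absolute constants.
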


Note that the second suprema has $\mu$ and $\nu$ range over $\overline{\cW}_{p,k}(\rho,\eps)$ rather than  $\underline{\cW}_{p,k}(\rho,\eps)$. This leads to the sub-optimality of the average-sliced finite-sample risk bound we obtained for MDE under $\RSWpk$. It is unavoidable with our current analysis due to our reliance on Lemma~\ref{lem:MWpk-resilience}, which pertains only to max-sliced resilience.
\vspace{1mm}

\begin{proof}
For the average-sliced distance, fix $\mu,\nu \in \overline{\cW}_{p,k}(\rho,\eps)$, and, for any $U \in \stfl$, let $\mu_U,\nu_U \in (1-\eps)\cP(\R^k)$ with $\mu_U \leq \projU \mu$ and $\nu_U \leq \projU \nu$ be minimizers for the $\RWp(\projU \mu,\projU \nu)$ problem. We then bound
\begin{align*}
    &\SWpk(\mu,\nu)^p =\int_{\stfl} \Wp\left(\projU \mu, \projU \nu\right)^p\dd \sigma_k(U)\\
    &\leq \int_{\stfl} \left[\Wp\left(\projU \mu, \frac{1}{1-\eps}\mu_U\!\right)\! + \Wp\left(\frac{1}{1-\eps}\mu_U, \frac{1}{1-\eps}\nu_U\!\right) \!+ \Wp\left(\frac{1}{1-\eps}\nu_U,\projU \nu\right)\right]^p \!\!\dd \sigma_k(U)\!\\
    &\leq \int_{\stfl} \left[\Wp\left(\frac{1}{1-\eps}\mu_U, \frac{1}{1-\eps}\nu_U\right) + 2\rho \right]^p \dd \sigma_k(U) \tag*{\text{(Lemma~\ref{lem:MWpk-resilience})}}\\
    &= \left(\frac{1}{1-\eps}\right) \int_{\stfl} \left[\RWp(\projU \mu, \projU \nu) + 2\rho \right]^p \dd \sigma_k(U)\\
    &\lesssim \left(\RSWpk(\mu,\nu) + 2\rho\right)^p. \tag*{\text{($L^p(\sigma_k)$ triangle inequality)}}
\end{align*}

For the max-sliced distance, let $\mu_U,\nu_U \in (1-\eps)\cP(\R^k)$ with $\mu_U \leq \projU \mu$ and $\nu_U \leq \projU \nu$ be minimizers for the $\RWp(\projU \mu,\projU \nu)$ problem, for any $U \in \stfl$. We then bound
\begin{align*}
    \MWpk(\mu,\nu) &= \sup_{U \in \stfl} \Wp\left(\projU \mu, \projU \nu\right)\\
    &\leq \sup_{U \in \stfl} \Wp\left(\projU \mu, \frac{1}{1-\eps} \mu_U\right) + \Wp\left(\frac{1}{1-\eps}\mu_U,\frac{1}{1-\eps}\nu_U\right) + \Wp\left(\frac{1}{1-\eps}\nu_U, \projU \nu\right)\\
    &\leq \sup_{U \in \stfl} 
    \Wp\left(\frac{1}{1-\eps}\mu_U,\frac{1}{1-\eps}\nu_U\right) + 2 \rho \tag*{(Lemma~\ref{lem:MWpk-resilience})}\\
    &= \left(\frac{1}{1-\eps}\right)^{\!\frac{1}{p}} \RMWpk(\mu,\nu) + 2 \rho\\
    &\lesssim \RMWpk(\mu,\nu) + \rho.
\end{align*}
Together, these two inequalities imply the lemma.
\end{proof}

Finally, we replicate the proof of \cref{thm:finite-sample-minimax-risk}. For the lower bound, the exact same argument applies up to the substitution of $\Wp$ with $\mathsf{D} \in \{\SWpk,\MWpk\}$. To prove the upper bound, let $\cG \subseteq \overline{\cW}_{p,k}(\rho,2\eps)$,
fix $\mu \in \cG$ with empirical measure $\hat{\mu}_n$, and consider any distribution $\tilde{\mu}_n$ with $\|\tilde{\mu}_n - \hat{\mu}_n\|_\tv \leq \eps$. Then, for $\mathsf{T} = \mathsf{T}_{[\cG,\mathsf{D}^\eps]}^\delta$, we have
\begin{align*}
    \mathsf{D}^{2\eps}(\mu,\mathsf{T}(\tilde{\mu}_n)) &\leq \mathsf{D}^\eps(\mu,\tilde{\mu}_n) + \mathsf{D}^\eps\bigl(\tilde{\mu}_n,\mathsf{T}(\tilde{\mu}_n)\bigr) \tag{Lemma~\ref{lem:sliced-Wp-triangle-inequality}}\\
    &\leq 2\mathsf{D}^\eps(\mu,\tilde{\mu}_n) + \delta \tag{MDE guarantee and $\mu \in \cG$}\\
    &\leq 2\mathsf{D}^\eps(\tilde{\mu}_n,\hat{\mu}_n) + 2\mathsf{D}(\hat{\mu}_n,\mu) + \delta. \tag{Lemma~\ref{lem:sliced-Wp-triangle-inequality}}\\
    &\leq 2\mathsf{D}(\hat{\mu}_n,\mu) + \delta. \tag{$\|\tilde{\mu}_n - \hat{\mu}_n\|_\tv \leq \eps$}
\end{align*}
Writing $\lambda_n = 20\E[\mathsf{D}(\hat{\mu}_n,\mu)] + \delta$, Markov's inequality and Lemma~\ref{lem:sliced-Wp-modulus} yield
\begin{equation*}
    \mathsf{D}\bigl(\mu,\mathsf{T}(\tilde{\mu}_n)\bigr) %
    \leq  \sup_{\substack{\mu,\nu \in \cG \\ \mathsf{D}^{2\eps}(\mu,\nu) \leq \lambda_n}} \Wp(\mu,\nu) \lesssim \rho + \lambda_n
\end{equation*}
with probability at least $9/10$. Thus, $R_{p,n}(\mathsf{D},\cG,\eps) \lesssim \rho + \delta + \E[\Wp(\hat{\mu}_n,\mu)]$, as desired. Plugging in the appropriate $\rho$ for the families of interest gives the upper bounds of the theorem when $\delta = 0$ (we present the argument for $\delta > 0$ to emphasize that exact optimization is not necessary).\qed

\subsection{Proof of \cref{thm:robust-sliced-distance-estimation}}

The proof mimcs that of  \cref{thm:robust-distance-estimation}. We assume a Huber contamination model, i.e., when $\tilde{\mu}_n = (1-\eps)\hat{\mu}_n + \eps \alpha$ and $\tilde{\nu}_n = (1-\eps)\hat{\nu}_n + \eps \beta$ for some $\alpha,\beta \in \cP(\cX)$ (the extension to the full setting is identical to that for \cref{thm:robust-distance-estimation}).
More concisely, we write the Huber condition as $\hat{\mu}_n \leq \frac{1}{1-\eps}\tilde{\mu}_n, \hat{\nu}_n \leq \frac{1}{1-\eps}\tilde{\nu}_n$. 
We now have\newpage\noindent
\begin{align*}
    \RMWpk(\tilde{\mu}_n,\tilde{\nu}_n) &= \sup_{U \in \stfl} \RWp(\projU \tilde{\mu}_n, \projU \tilde{\nu}_n)\\
    &= (1-\eps)^{\frac{1}{p}}\sup_{U \in \stfl} \inf_{\substack{\mu',\nu' \in \cP(\cX)\\ \mu' \leq \frac{1}{1-\eps}\projU\tilde{\mu}_n,\, \nu' \leq \frac{1}{1-\eps}\projU\tilde{\nu}_n}} \Wp(\mu',\nu')\\
    &< \sup_{U \in \stfl} \Wp(\projU \hat{\mu}_n,\projU \hat{\nu}_n)\\
    &= \MWpk(\hat{\mu}_n,\hat{\nu}_n)\\
    &\leq \MWpk(\mu,\nu) + \MWpk(\mu,\hat{\mu}_n) + \MWpk(\nu,\hat{\nu}_n).
\end{align*}
The same argument holds when $\mathsf{D} = \RSWpk$. 

\medskip

In the other direction, Lemma~\ref{lem:sliced-Wp-triangle-inequality} gives
\begin{align*}
    \mathsf{D}^{3\eps}(\mu,\nu) &\leq \mathsf{D}(\mu,\hat{\mu}_n) + \mathsf{D}^\eps(\hat{\mu}_n,\tilde{\mu}_n) + \mathsf{D}^\eps(\tilde{\mu}_n,\tilde{\nu}_n) + \mathsf{D}^\eps(\tilde{\nu}_n,\hat{\nu}_n) + \mathsf{D}(\hat{\nu}_n,\nu)\\
    &= \mathsf{D}(\mu,\hat{\mu}_n) + \mathsf{D}^\eps(\tilde{\mu}_n,\tilde{\nu}_n) + \mathsf{D}(\hat{\nu}_n,\nu).
\end{align*}
Moreover, the resilience of $\mu$ and $\nu$ w.r.t.\ $\MWpk$ implies that
\begin{align*}
    \MWpk^{3\eps}(\mu,\nu) &=(1-3\eps)^{\frac{1}{p}}\sup_{U \in \stfl} \inf_{\substack{\mu',\nu' \in \cP(\cX)\\ \mu' \leq \frac{1}{1-3\eps}\projU\mu,\, \nu' \leq \frac{1}{1-3\eps}\projU\nu}} \Wp(\mu',\nu')\\
    &\geq(1-3\eps)^{\frac{1}{p}}\sup_{U \in \stfl} \left[\Wp(\projU\mu,\projU\nu) - 2\rho\right]\\
    &= (1-3\eps)^{\frac{1}{p}} \left(\MWpk(\mu,\nu) - 2\rho \right)\\
    &\geq (1-3\eps)^{\frac{1}{p}} \MWpk(\mu,\nu) - 2\rho.
\end{align*}
The same argument holds when $\mathsf{D} = \RSWpk$. Combining the pieces, we obtain
\begin{equation*}
    |\mathsf{D}^\eps(\tilde{\mu}_n,\tilde{\nu}_n) - \mathsf{D}(\mu,\nu)| \leq 2\rho + \tau\mathsf{D}(\mu,\nu) + \mathsf{D}(\mu,\hat{\mu}_n) + \mathsf{D}(\nu,\hat{\nu}_n). \qed
\end{equation*}

\section{Full Experiment Details}
\label{app:experiment-details}
Full code is available on GitHub at \url{https://github.com/sbnietert/robust-OT}. We stress that we never had to adjust hyper-parameters or loss computations from their defaults, only adding our robust objective modification and procedures to corrupt the original datasets (MNIST \citep{lecun1998recognition}, CelebA-HQ \citep{karras2018}, and CIFAR-10 \citep{krizhevsky2009learning}). The implementation of WGAN-GP used for the robust GAN experiments in the main text was based on a standard PyTorch implementation \citep{cao2017}, as was that of StyleGAN 2 \citep{labml}. The images presented in \cref{fig:generated-samples} were generated without any manual filtering after a predetermined number of batches (125k, batch size 64, for WGAN-GP; 100k, batch size 32, for StyleGAN 2). Training for the WGAN-GP and StyleGAN 2 experiments took 5 hours and 20 hours of compute, respectively, on a cluster machine equipped with a NVIDIA Tesla V100 and 14 CPU cores. 

For the comparison to \citet{balaji2020} and \citet{staerman21}, we used the 50k CIFAR-10 training set of $32\times32$ images contaminated with 2632 images of uniform random noise. Each GAN was trained for 100k batches of 128 images, taking approximately 12 hours of compute on the same cluster machine. For the robust GAN of \citet{balaji2020}, we used their continuous weighting scheme, recommended robustness strength, and DCGAN architecture, along with hyperparameters suggested in their appendix.
\vspace{-1mm}

\end{appendices}

\end{document}